\def\cl@chapter{\@elt {chapter}}
\newcommand{\labelx}[1]{
    \relax
    \ifmmode
        \label{#1} 
    \else 
        \ifnum\pdfstrcmp{\@currenvir}{document}=0
            \label{#1}
        \else
            \label[\@currenvir]{#1}
        \fi
    \fi
}
\newlength\savewidth
\newcommand\shline{\noalign{\global\savewidth\arrayrulewidth\global\arrayrulewidth 1.0pt}\hline\noalign{\global\arrayrulewidth\savewidth}}
\newlength\savedwidth
\newcommand{\alphareg}{\varsigma}
\newcommand{\bS}{\mathbb S}
\newcommand{\bR}{\mathbb R}
\newcommand{\bZ}{\mathbb Z}
\newcommand{\bD}{\mathbb D}
\newcommand{\bM}{\mathbb M}
\newcommand{\bT}{\mathbb T}
\newcommand{\rW}{\mathscr W}
\newcommand{\rV}{\mathscr V}
\newcommand{\rD}{\mathscr D}
\newcommand{\rG}{\mathscr G}
\newcommand{\rT}{\mathscr T}
\newcommand{\cI}{\mathcal I}
\newcommand{\cM}{\mathcal M}
\newcommand{\cX}{\mathcal X}
\newcommand{\cL}{\mathcal L}
\newcommand{\cC}{\mathcal C}
\newcommand{\cF}{\mathcal F}
\newcommand{\cR}{\mathcal R}
\newcommand{\cG}{\mathcal G}
\newcommand{\cP}{\mathcal P}
\newcommand{\cZ}{\mathcal Z}
\newcommand{\cK}{\mathcal K}
\newcommand{\cT}{\mathcal T}
\newcommand{\cH}{\mathcal H}
\newcommand{\cD}{\mathfrak D}
\newcommand{\cS}{\mathcal S}
\newcommand{\cO}{\mathcal O}
\newcommand{\cQ}{\mathcal Q}
\newcommand{\kb}{\mathfrak b}
\newcommand{\kc}{\mathfrak c}
\newcommand{\kg}{\mathfrak g}
\newcommand{\kF}{\mathfrak F}
\newcommand{\kL}{\mathfrak L}
\newcommand{\fH}{\mathbf H}
\newcommand{\fI}{\mathbf I}
\newcommand{\fp}{\mathbf p}
\newcommand{\fs}{\mathbf s}
\newcommand{\fx}{\mathbf x}
\newcommand{\fy}{\mathbf y}
\newcommand{\fq}{\mathbf q}
\newcommand{\fz}{\mathbf z}
\newcommand{\dfe}{\dot{\mathbf{e}}}
\newcommand{\dfx}{\dot{\mathbf{x}}}
\newcommand{\dft}{\dot{\mathbf{t}}}
\DeclareMathOperator\minimize{minimize}
\DeclareMathOperator\Length{Length}
\DeclareMathOperator\Id{I_d}
\DeclareMathOperator\Lip{Lip}
\DeclareMathOperator\Leb{Leb}
\DeclareMathOperator\diver{div}
\DeclareMathOperator\TV{TV}
\DeclareMathOperator\Dist{Dist}
\DeclareMathOperator\curl{curl}
\DeclareMathOperator\lfs{lfs}
\DeclareMathOperator\supp{supp}
\DeclareMathOperator\Jac{jac}
\DeclareMathOperator\JacM{\mathbf{Jac}}
\newcommand\cfx{\hat{\mathbf{x}}}
\newcommand\dfy{\dot{\mathbf{y}}}
\newcommand\sm{\setminus}
\newcommand\gS{\mathfrak{S}}
\newcommand{\aS}{S}
\newcommand\diff{\mathrm{d}}
\newcommand\ve{\varepsilon}
\newcommand\vp{\varphi}
\newcommand\bpsi{\boldsymbol{\psi}}
\newcommand\bPsi{\boldsymbol{\Psi}}
\newcommand\bzero{\mathbf{0}}
\newcommand\rmax{\mathrm{m}}
\newcommand\<{\langle} \def\>{\rangle}
\newcommand\rC{\boldsymbol{\mathscr{C}}} 
\newcommand\rN{\boldsymbol{\mathscr{N}}}
\journalname{}
\begin{document}

\title{A Region-based Randers Geodesic Approach for Image Segmentation
}


\author{Da~Chen \and Jean-Marie~Mirebeau \and Huazhong Shu \and Laurent~D.~Cohen
}


\institute{Da~Chen\at
Shandong Artificial Intelligence Institute, Qilu University of Technology (Shandong Academy of Sciences), Jinan, China.\\
\email{dachen.cn@hotmail.com}
\and Jean-Marie Mirebeau \at
Department of Mathematics, Centre Borelli, ENS Paris-Saclay, CNRS, University Paris-Saclay, 91190, Gif-sur-Yvette, France.\\
\email{jean-marie.mirebeau@math.u-psud.fr}\\
\and Huazhong Shu \at
Laboratory of Image Science and Technology (LIST), Key Laboratory of New Generation Artificial Intelligence Technology and Its Interdisciplinary Applications (Southeast University), Ministry of Education, Nanjing 210096, China.\\
Jiangsu Provincial Joint International Research Laboratory of Medical Information Processing, Southeast University, Nanjing 210096, China.\\
\email{shu.list@seu.edu.cn}\\   
\and Laurent~D.~Cohen \at
University Paris Dauphine, PSL Research University, CNRS, UMR 7534, CEREMADE, 75016 Paris, France\\ 
\email{cohen@ceremade.dauphine.fr}\\   
}

\date{Received: date / Accepted: date}

\maketitle
\begin{abstract}
The geodesic model based on the eikonal partial differential equation (PDE) has served as a fundamental tool for the applications of image segmentation and boundary detection in the past two decades. However, the existing approaches commonly only exploit the image edge-based features for computing minimal geodesic paths, potentially  limiting their performance in complicated segmentation situations.  In this paper, we introduce a new variational image segmentation model based on the minimal geodesic path framework and the eikonal PDE, where the region-based appearance term that defines then regional homogeneity features  can be taken into account for estimating  the associated minimal geodesic paths. This is done by constructing a Randers geodesic metric interpretation of the region-based active contour energy functional. As a result, the minimization of the active contour energy functional is transformed into finding the solution to the Randers eikonal PDE. 

We also suggest a practical interactive image segmentation strategy, where the target boundary can be delineated by the concatenation of several piecewise geodesic paths. We invoke  the Finsler variant of the fast marching method to estimate the geodesic distance map, yielding an efficient implementation of the proposed region-based Randers geodesic model for image segmentation. Experimental results on both synthetic and real images exhibit that our model indeed achieves  encouraging  segmentation performance.      

\keywords{Region-based active contours \and minimal geodesic path \and Randers  metric  \and image segmentation \and Finsler variant of the fast marching method \and eikonal partial differential equation}
\end{abstract}

\section{Introduction}
\label{intro}
\subsection{Related Work}
Active contour models are commonly established upon the mathematical frameworks of PDE, energy minimization and numerical analysis, thus featuring a quite solid theoretical background. These models are able to take advantage of efficient geometry priors and reliable image features to design flexible energy functionals, whose minimizers yield suitable solutions to a great variety of image segmentation tasks.

The classical snakes model~\citep{kass1988snakes} can be regarded as one of the earliest active contour approaches, which rely on continuous curves to delineate image edges of interest. In its basic formulation, active curves move and deform according to an internal regularization force and an image external force, both of which are derived from the Euler-Lagrange equation of an energy functional. Great efforts have been devoted to improvements of the classical snakes model so as to obtain satisfactory solutions in various image segmentation scenarios. However, an important shortcoming of the classical snakes model is its sensitivity to the curve initialization. One possible way to overcome this issue is to design new external force fields with sufficiently large capture range. Significant examples along this research line may involve the approaches introduced in~\citep{cohen1991active,cohen1993finite,xu1998snakes,jalba2004cpm,xie2008mac,li2007active}. 
In contrast to the classical snakes model which is non-intrinsic, geodesic active contour models~\citep{caselles1997geodesic,yezzi1997geometric,melonakos2008finsler} make use of intrinsic energy functionals, in the sense that they are independent of the curve parametrization. In addition, these intrinsic functionals only involve to the first-order derivative of the curves, and sometimes their curvature as well using recent techniques introduced by~\cite{mirebeau2018fast}. On the negative side this simplification limits the expressive power of the model, but on the positive side it allows for an efficient and global minimization of the energy functionals, as discussed in~\citep{osher1988fronts,goldenberg2001fast,ma2021FastGAC}. One essential common point of the active contour models reviewed above is that they only make use of locally-defined edge-based features, usually derived from the gradient of the processed image, to build the energy functionals. While these features are effective and easy to extract, their local nature increases the risk of finding segmentations corresponding to unexpected local minima of the energy functional. Comparing to the local edge-based features, the regional appearance models are capable of penalizing the global homogeneity of the image data in each subregion.
 
The region-based active contour models differ from the edge-based approaches by the use of image  regional appearance models,  thus are insensitive to local spurious edges, for instance, generated by image noise. A pioneering region-based active contour approach is the Mumford-Shah piecewise smooth model~\citep{mumford1989optimal}, whose prior is that the image appearance data (e.g.\ gray levels or colors) can be well approximated by a piecewise smooth data-fitting function. Since then, the Mumford-Shah model has motivated a series of research works on active contour models, for instances~\citep{vese2002multiphase,tsai2001curve,duan2015l0,dougan2008variational,zhang2021topologyPreserv}. Among them, the approaches which impose different regional appearance priors are able to handle more complicated image segmentation issues. The region competition model~\citep{zhu1996region} and its generalized variant~\citep{brox2009local} interpreted the Mumford-Shah energy functional in a Bayes framework, by assuming that the image gray levels follow a parameter-dependent probability distribution function (PDF). In practice, the Gaussian distributions are very often chosen to model the image data within each subregion. Approaches~\citep{chan2001active,cohen1997avoiding} applied with a piecewise constant appearance constraint on image data are considered as a practical reduction of the original Mumford-Shah model. From the viewpoint of statistics, these piecewise constant reduction models can also be treated as an instance of the region competition model~\citep{zhu1996region}, with a particular assumption that image gray levels in different subregions can be modeled by Gaussian  distributions of identical variance.  

Nonparametric active contour models~\citep{ni2009local,michailovich2007image,houhou2009fast} usually construct the energy functional by measuring the statistical distances between the histograms of the image data in different subregions,  thus allowing to work in the absence of prior knowledge on the image data distribution. In~\citep{kimmel2003fast,bresson2007fast,jung2017piecewise}, the $L^1$-norm was taken into account for measuring the approximation errors between the image data and the fitting terms. Active contour approaches based on the pairwise homogeneity criteria are capable of successfully handling more complicated segmentation situations~\citep{sumengen2006graph,bertelli2008variational,jung2012nonlocal,bresson2008non,desquesnes2013eikonal}, but the estimation of the associated gradient descent flows has a high computational complexity. Hybrid active contour models simultaneously integrate the region-based image appearance models and the edge-based features~\citep{paragios2002geodesic,kimmel2003regularized,zach2009globally,bresson2007fast,chen2021generalized} for building the energy functionals, so as to cumulate the benefits of both types of features. 
The segmentation models in conjunction with prescribed shape constraints (e.g. \ statistical shape priors, convexity and star convexity shape priors) have proven to generate accurate segmentation results~\citep{cremers2002diffusion,bresson2006variational,chen2022geodesic,chen2021elastica,shi2021convexity,siu2020image,luo2022convex,zhang2021topology,prevost2014tagged}, especially for the case that the features derived from the image data are not reliable enough. In~\citep{sundaramoorthi2007sobolev,sundaramoorthi2009new,yang2015shape}, the curve evolution is driven by a type of Sobolev gradient flows, allowing imposing various flexible geometry priors to enhance the image segmentation process.

Level set formulations~\citep{osher1988fronts} represent a closed curve by means of the zero-level line of a scalar-valued function (e.g. signed Euclidean distance map to the curve), which is considered as a quite effective solution to the contour evolution problems~\citep{cremers2007review,dubrovina2015multi,brox2006level,li2008minimization,zhang2019resls,alvarez2018level}. However, the active contour models relying on the level set schemes often suffer from numerical difficulties in practical segmentation applications, due to the non-convex nature of the associated energy functionals. 
The approaches~\citep{chan2006algorithms,bresson2007fast,chambolle2012convex,bae2011global,yuan2014spatially} based on a convex relaxation of the level set method allow finding the global minimum of a region-based active contour energy reformulated in a convex set, thus are less demanding on initialization, and also benefit from efficient numerical techniques for convex optimization. Instead of working with the PDE framework, graph-based optimization algorithms have been exploited to address various active contour problems~\citep{boykov2003computing,grady2009piecewise,mishra2011decoupled}. 

\begin{figure*}[t]
\centering
\includegraphics[height=4.5cm]{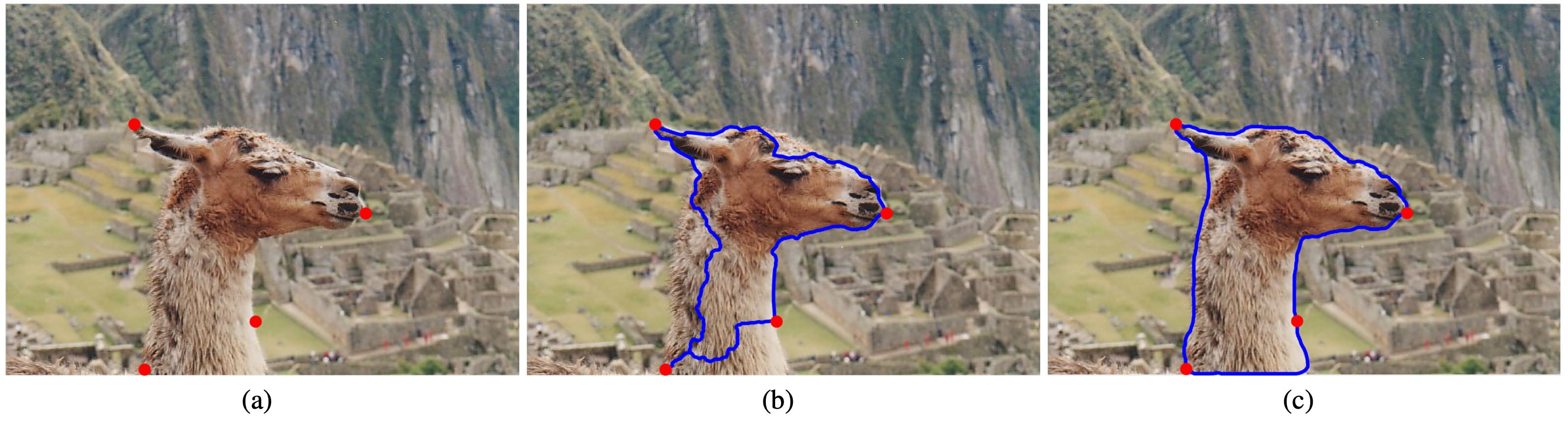}
\caption{An example of interactive image segmentation using the combination of piecewise geodesic paths model~\citep{mille2015combination} and the proposed region-based Randers geodesic model. \textbf{a} The original image with user-provided landmark points are indicated by red dots. The image is obtained form the Grabcut dataset~\citep{rother2004grabcut}. \textbf{b} The result from the  combination of piecewise geodesic paths model. \textbf{c} The result obtained using the proposed model.}
\label{fig:DemoComb}
\end{figure*}

\subsection{Optimal Paths-based Image Segmentation Approaches}
\label{subsec:minimal_paths}

In geodesic active contour models~\citep{caselles1997geodesic,yezzi1997geometric}, the minimization  of the energy (i.e.\ the cost of paths) is implemented through a curve evolution scheme in a gradient descent manner. Unfortunately, there is no guarantee for these approaches to find the globally minimizing curves. In order to address this issue, \citet{cohen1997global} introduced a minimal path model whose \emph{global} minimizer can be extracted from the solution to a nonlinear eikonal PDE. The energy of this model is defined as the Euclidean length of the path modulated by a local positive coefficient. The scope of this approach was later extended to encompass energies defined by a Riemannian or a Finslerian metric, allowing to favor paths aligned with certain directions and orientations in an anisotropic manner~\citep{melonakos2008finsler,bougleux2008anisotropic,chen2021geodesic}, and to degenerate sub-Riemannian and sub-Finslerian metrics which can take path curvature into account~\citep{chen2017global,duits2018optimal,mirebeau2018fast}. Minimal geodesic paths are a basic tool for image segmentation~\citep{peyre2010geodesic}, involved in a great variety of algorithms which benefit from the efficient numerical solvers of the eikonal PDE~\citep{sethian1999fast,mirebeau2014efficient,mirebeau2019riemannian}, and from the global optimality of the paths extracted from its solution.

In the classical geodesic models, the extraction of a minimal geodesic path requires a pair of fixed points, which serve as its endpoints. In the eikonal PDE framework, one endpoint is used as the source point defining the boundary condition  of the eikonal PDE, and the other is the target point, which initializes the geodesic backtracking ordinary differential equation (ODE). \citet{cohen1997global} suggested a saddle point method for image segmentation using a single source point. Taking the saddle point as the target point, one can track two geodesic paths which connect to the source point from different ways.  This saddle point-based strategy is then extended in~\citep{mille2015combination,cohen2001multiple} to closed contour detection in the scenario of multiple landmark points as source points. Accordingly, the objective contours are generated using the concatenation of a family of piecewise geodesic paths, thus passing through all given points.  \citet{benmansour2009fast} proposed an iterative keypoints sampling scheme for generating minimal paths from a single source point. In each iteration, a new keypoint is treated as a target point, form which a geodesic path can be backtracked until another keypoint is reached. 
\citet{chen2017global} proposed to connect a set of prescribed points of unknown order through the curvature-penalized geodesic paths. These paths are tracked in an orientation-lifted space and are derived using the image boundary features extracted by a steerable filter. A common feature of the geodesic approaches mentioned above is to build the segmentation contours as the concatenation of piecewise geodesic paths. Alternatively, \citet{appleton2005globally} introduced a circular geodesic model with a particular topology constraint, extracting a single closed geodesic path with an identical source and target point. 

In contrast to geodesic models which solve the eikonal PDE to track optimal paths as segmentation contours, a crucial point for discrete paths-based  models~\citep{falcao2000ultra,miranda2012riverbed,boykov2003computing,windheuser2009beyond} is to map the processed image  to a regular graph of nodes and edges, where a discrete shortest path comprised of finite ordered nodes can be efficiently computed by the well-established graph-based optimization algorithms~\citep{dijkstra1959note,falcao2004image,boykov2004experimental}.

Most of the existing optimal paths-based segmentation models, using either the eikonal PDE framework or the graph-based optimization schemes,  only leverage edge-based features such as image gradient features to construct the local geodesic metrics. Therefore, the resulting minimal geodesic paths in essence fall into the image \emph{edge-driven} limitation, despite their possible combination with region-based homogeneity features for image segmentation as introduced in~\citep{mille2015combination,appia2011active}. Unfortunately, geodesic paths derived from the image edge-based features sometimes fail to accurately depict the boundaries of interest, especially when the processed images involve complex structures or when the image gradients are not reliable. In this paper, we introduce a region-based Randers geodesic path model for addressing the problems of active contours and image segmentation. In our model, the considered local geodesic metrics are an instance of Finslerian geometry which are constructed in terms of a region-based appearance model, i.e.\ regional homogeneity features, and/or anisotropic edge-based features. 

In Fig.~\ref{fig:DemoComb}, we take the combination of piecewise geodesic paths  model proposed by~\citet{mille2015combination} as an example to illustrate the limitations of using edge-driven geodesic paths for segmentation. This model is an interactive segmentation method, for which a set of ordered vertices or landmark points on the target boundary should be provided. In Fig.~\ref{fig:DemoComb}a, we show the original image with four landmark points, displayed as red dots, lying on the objective boundary. One of the crucial points for the combination of piecewise geodesic paths model is the construction of a potential used for estimating the geodesic distance maps.  In Fig.~\ref{fig:DemoComb}b, we show the segmentation contour (visualized by blue lines) from the combination of piecewise geodesic paths model. 
Clearly, some portions of the desired boundary are not extracted, and the underlying reason is that these missed boundaries are not well characterized by the edge-based features. In contrast, the geodesic paths derived from the proposed geodesic model, which blends the benefits of both region-based and edge-based features, follow the desired boundary, as illustrated in Fig.~\ref{fig:DemoComb}c.

\subsection{Contribution and Paper Outline}
Overall, the main contribution in this paper is threefold:
\begin{itemize}
\item[-] Firstly, we introduce a new geodesic model which bridges the gap between the region-based active contours which involve a regional image appearance term, and the geodesic paths based on the eikonal PDE framework. The proposed model, named \emph{region-based Randers geodesic model}, is an instance of Finslerian geometry, which relies on minimal geodesic paths with respect to an asymmetric Randers metric. 
\item[-] Secondly, we present a convergence analysis of the contour evolution scheme which is proposed in this paper. We also discuss the numerical implementation of scheme, whose iterations involve the solution to a Randers eikonal PDE and to a curl PDE.
\item[-] Finally, we introduce a landmark points-based scheme to apply the proposed region-based Randers geodesic model to interactive image segmentation. The core idea is to extract the target boundary as a concatenation of minimal geodesic paths. Each geodesic path connects a pair of landmark points.
\end{itemize}

The manuscript is organized as follows. In Section \ref{sec_Background}, we briefly introduce the  minimal path models and the corresponding Eikonal PDEs and geodesic backtracking ODEs, including the Riemannian cases and the asymmetric Randers case. Preliminary results on the hybrid active contour energy functionals and on the iterative minimization scheme for these functionals are presented in Section~\ref{sec_AC}. In particular we transform in \cref{subsec_RandersInterpretation} the hybrid active contour energy functional into a weighted curve length associated to a Randers metric. This transformation relies on the solution to a divergence equation.
In Section~\ref{sec_SegmentationAlgs}, we introduce two methods to exploit the constructed Randers metrics and the associated minimal paths for the application of image segmentation. The numerical implementation details and the experimental results are presented in Sections \ref{sec_PracticalRanders} and \ref{sec_Experiments}, respectively. The conclusion can be found in Section~\ref{sec_conclusion}.

The preliminary short versions of this work  were first presented in the conferences~\citep{chen2016finsler,chen2017anisotropic}, upon which more contributions such as the theoretical convergence analysis have been added.

\section{Background on Randers Minimal Geodesic Paths}
\label{sec_Background}
In this section, we introduce the background on the computation of paths minimizing a length defined by a Randers geodesic metric.

\paragraph{Notations.} We denote by $\Omega\subset\bR^2$ a connected open bounded domain, with a smooth boundary. Points are denoted $\fx \in \Omega$, vectors $\dfx \in \bR^2$, and co-vectors $\cfx \in \bR^2$. The Euclidean scalar product and norm on $\bR^2$ are denoted by $\<\cdot,\cdot\>$ and $\|\cdot\|$. For any $\dfx=(a,b) \in \bR^2$ we denote by $\dfx^\perp := (-b,a)$ the counter-clockwise orthogonal vector.

We denote by $\bS_2^{++}$ the set of symmetric positive definite matrices of shape $2 \times 2$, and we associate to each $M \in \bS_2^{++}$ the norm $\|\dfx\|_M := \sqrt{\<\dfx,M\dfx\>} = \| M^\frac 1 2 \dfx\|$. Note that $\|\dfx\| \|M^{-1}\|^{-\frac 1 2} \leq \|\dfx\|_M \leq \|\dfx\| \|M\|^\frac 1 2$, where $\|M\|$ denotes the spectral norm (i.e.\ the largest eigenvalue, for a symmetric positive matrix).

We denote by $\Lip(X,Y)$ the set of Lipschitz functions from a metric space $(X,d_X)$ to another $(Y,d_Y)$. For such a function $f : X \to Y$ we denote by $\Lip(f)\in [0,\infty[$ the smallest constant such that $d_Y(f(x),f(\tilde x)) \leq \Lip(f) d_X(x,\tilde x)$ for all $x,\tilde x \in X$. Note that a function defined over a compact space is Lipschitz iff it is locally Lipschitz.

Throughout the document, the wording ``curve'' refers to a \emph{parametrized curve}, also known as a path. (This is in opposition to the concept of \emph{geometrical curve}, not considered here, which is defined as an equivalence class of paths modulo reparametrizations.)

\subsection{Randers Geodesic Metric}
\label{subsec_FinslerIntro}
Randers geometry is a generalization of Riemannian geometry, and a special case of Finslerian geometry~\citep{randers1941asymmetrical,bao2012introduction}.
In the general framework of path-length geometry, the energy of a path $\gamma \in \Lip([0,1],\overline \Omega)$ reads
\begin{equation}
\label{eq_RandersLength}
\Length_{\cF}(\gamma):=\int_0^1 \cF(\gamma(t),\gamma'(t)) \, \diff t,
\end{equation}
where $\cF:\overline \Omega\times\bR^2\to[0,\infty[$ is the geodesic metric. 
Note that $\gamma'$ is well defined as an element of $L^\infty([0,T], \bR^2)$ by Rademacher's theorem.
The energy~\eqref{eq_RandersLength} is also known as the length associated to the geodesic metric $\cF$. The geodesic distance $\Dist_\cF$ between two points $\fx,\fy \in \overline \Omega$ is defined as the length of the shortest path joining them  
\begin{align}
\label{eq_GeodesicDistance}
\Dist_\cF(\fx,\fy)=\inf \{\Length_\cF(\gamma) \mid\, &\gamma \in \Lip([0,1],\overline \Omega), \\
&\gamma(0)=\fx,\, \gamma(1)=\fy\}.\nonumber
\end{align}

Randers geodesic metrics have, by assumption, the following structure~\citep{randers1941asymmetrical}: for all $\fx \in \overline \Omega$, $\dfx \in \bR^2$
\begin{equation}
\label{eq_RandersForm}
\cF(\fx,\dfx)=\|\dfx\|_{\cM(\fx)}+\<\omega(\fx),\dfx\>.
\end{equation}
A Randers metric thus involves a positive definite tensor field $\cM \in \Lip( \overline \Omega , \bS_2^{++})$, similarly to the Riemannian setting. In addition, a Randers metric features a linear term defined by a vector field $\omega \in \Lip(\overline\Omega,\bR^2)$, which is subject to the compatibility assumption $\<\omega(\fx),\cM(\fx)^{-1} \omega(\fx)\> < 1$ for all $\fx \in \overline \Omega$.
Because of this second term, a Randers metric is in general asymmetric: more precisely,  
$\cF(\fx,\dfx)\neq \cF(\fx,-\dfx)$ whenever $\<\omega(\fx),\dfx\> \neq 0$. As a result, the corresponding path length and geodesic distances are also asymmetric: in general $\Length_\cF(\gamma) \neq \Length_\cF(\gamma(1-\cdot))$ and $\Dist_\cF(\fx, \fy) \neq \Dist_\cF(\fy, \fx)$.

\begin{figure*}[t]
\setlength{\fboxsep}{0pt}%
\centering
\includegraphics[height=4.8cm]{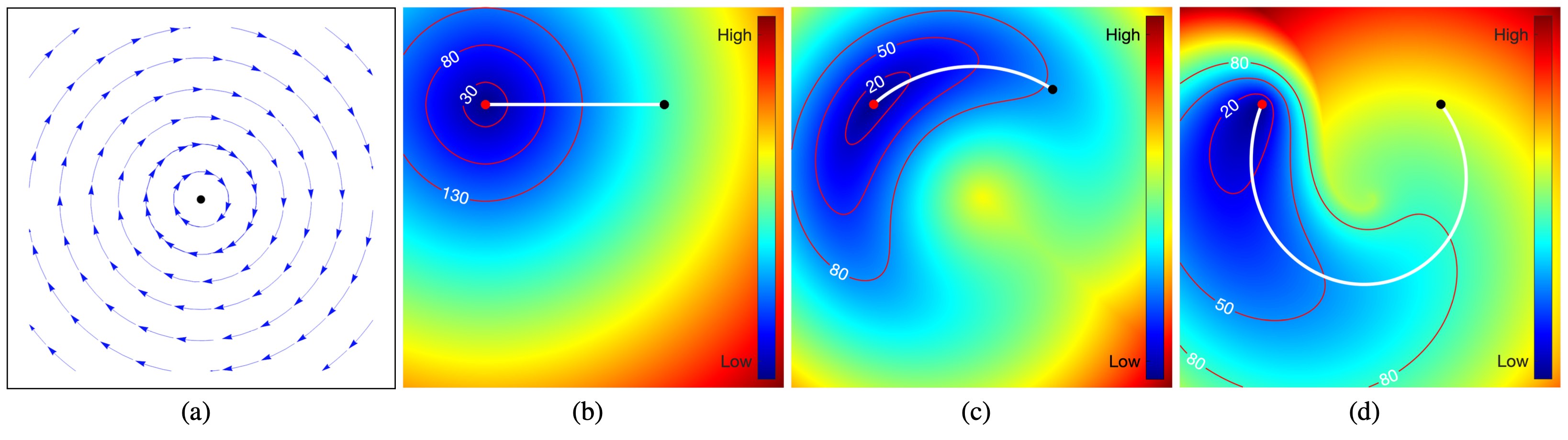}
\caption{Geodesic paths derived from different geodesic models. (\textbf{a}) Illustration of the vector field $\varpi$ defined over the whole domain, where the black dot indicates the origin of the domain.  (\textbf{b}) to (\textbf{d}) Geodesic paths obtained respectively  using  the isotropic,  anisotropic Riemannian  and asymmetric Randers metrics, where the white lines are the extracted minimal geodesic paths. The red lines  are the level set lines the geodesic distances.}
\label{fig_DifferentMetrics}	
\end{figure*} 

\paragraph{Minimal geodesics.}
Let us briefly justify the existence of a minimizer to \eqref{eq_GeodesicDistance}, referred to as a minimal geodesic from $\fx$ to $\fy$, since variants of this classical argument, see the references at the end of this paragraph, are used in the mathematical analysis of our model \cref{sec_AC}. 
The two main ingredients are the Arzelà-Ascoli theorem, and the lower-semi continuity of the length functional. The former states that the set 
\begin{equation}
\label{eq:ArzelaAscoli}
	\{f \in \Lip(X,Y)\mid \Lip(f) \leq K\},
\end{equation}
equipped with $d(f,\tilde f) := \max_{x \in X} d_Y(f(x), \tilde f(x))$ which is the distance associated to uniform convergence, is compact for any compact metric spaces $(X,d_X)$ and $(Y,d_Y)$ and any constant $K$. The latter affirms that, for any sequence of curves 
$\gamma_n \in \Lip([0,1],\overline \Omega)$ converging uniformly $\gamma_n \to \gamma_*$ as $n \to \infty$, for some $\gamma_* \in \Lip([0,1],\overline \Omega)$, one has 
\begin{equation}
\label{eq:length_lsc}
	\Length_\cF(\gamma_*) \leq \liminf_{n \to \infty} \Length_\cF(\gamma_n).
\end{equation}

Now regarding the minimal geodesic problem \eqref{eq_GeodesicDistance}, 
our assumptions on the domain $\Omega$ (namely openness, connectedness, boundedness, smooth boundary, as stated in the notations of \cref{sec_Background}), imply that there exists at least one Lipschitz path from $\fx$ to $\fy$. 
Denote by $(\gamma_n)_{n \geq 0}$ a sequence of
 paths $\gamma_n \in \Lip([0,1], \overline \Omega)$ such that $\gamma_n(0) = \fx$, $\gamma_n(1)=\fy$, and $\Length_\cF(\gamma_n) \to \Dist_\cF(\fx,\fy)$ as $n \to \infty$.
 Without loss of generality, we can assume that these paths are parametrized at constant speed w.r.t.\ the Euclidean metric, and thus $\Lip(\gamma_n) \leq \Length_\cF(\gamma_n)/ \rho_{\min}(\cF)$ is bounded, where the constant $\rho_{\min}(\cF)>0$ is defined below in Eq.~\eqref{eq:anisotropy_ratios}.
 By the Arzelà-Ascoli theorem \eqref{eq:ArzelaAscoli}, applied to $X=[0,1]$ and $Y = \overline \Omega$, there exists a subsequence $\gamma_{\vp(n)} \to \gamma_*$ converging uniformly as $n \to \infty$, and the limiting curve $\gamma_*$ is also Lipschitz. By the lower-semi-continuity of $\Length_\cF$, the limiting curve $\gamma_*$ is a minimal geodesic.

A generalization of the above proof can be found in \cite[Section 2.5]{burago2022metric}. 
The fact that length by integral of velocity \eqref{eq_RandersLength} equals length by total variation is proved in \cite[Proposition 1.7]{younes2010shapes}, and this implies the lower semi-continuity of the length \citep{burago2022metric}.

\paragraph{Anisotropy ratios, and duality.}
One often needs to compare the (possibly asymmetric) norm $F(\dfx) := \cF(\fx,\dfx)$ defined by a Randers metric at a given point $\fx \in \overline \Omega$, with the standard Euclidean norm $\|\dfx\|$. For that purpose, we introduce the minimum and maximum ratios
\begin{align}
\label{eq:anisotropy_ratios}
	\rho_{\min}(F) &:= \min_{\|\dfx\|=1} F(\dfx),&
	\rho_{\max}(F) &:= \max_{\|\dfx\|=1} F(\dfx).
\end{align}
By positive homogeneity of $F$, one has $\rho_{\min}(F) \leq F(\dfx)/\|\dfx\| \leq \rho_{\max}(F)$ for all $\dfx \neq \mathbf{0}$. 
The anisotropy ratio is defined as $\rho(F) := \rho_{\max}(F) / \rho_{\min}(F)$. 
By convention, we also set
\begin{align*}
	\rho_{\min}(\cF) &:= \min_{\fx \in \overline \Omega} \rho_{\min}(\cF(\fx,\cdot)), \\
	\rho_{\max}(\cF) &:= \max_{\fx \in \overline \Omega} \rho_{\max}(\cF(\fx,\cdot)).
\end{align*}
Another object of interest is the dual norm, defined as
\begin{equation}
\label{eq:dual_norm}
	F^*(\cfx) := \max\{\<\cfx,\dfx\> \mid \dfx \in \bR^2,  F(\dfx) = 1\},
\end{equation}
for all $\cfx \in \bR^2$. The dual norm plays a central role in the generalized eikonal PDE obeyed by the distance map, and in geodesic backtracking ODE, see~\cref{subsec:eikonal}.
The following result, which concludes this subsection, provides estimates of the ratios~\eqref{eq:anisotropy_ratios}, sharp up to a factor two, as well as the explicit form of the dual~\eqref{eq:dual_norm} of a Randers metric.
 
\begin{proposition}
\labelx{prop:randers_properties}
Let $M \in S_2^{++}$ and $\omega\in \bR^2$ be such that $\<\omega,M^{-1}\omega\> < 1$. Define $F(\dfx) := \|\dfx\|_M + \<\omega,\dfx\>$ for all $\dfx \in \bR^2$. Then $F$ is definite, positively homogeneous, and obeys the triangular inequality. 

The dual norm \eqref{eq:dual_norm} has the same structure: $F^*(\cfx) = \|\cfx\|_A+\<\cfx,\kb\>$, where $A\in S_2^{++}$, 
$\<\kb,A^{-1} \kb\> < 1$, and 
\begin{align*}
\begin{pmatrix}
	M & \omega\\
	\omega^\top & 1
\end{pmatrix}
^{-1}
&=
\begin{pmatrix}
	\delta A & \kb\\
	\kb^\top & 1/\delta
\end{pmatrix}
&
\text{with }
\delta = 1-\<\omega,M^{-1}\omega\>.
\end{align*}
The anisotropy ratios obey 
\begin{align}
\label{eq:mumax_bounds}
	\sqrt{\|M\|} &\leq \rho_{\max}(F) = \rho_{\min}(F^*)^{-1} < 2\sqrt{\|M\|}, \\
\label{eq:mumin_bounds}
	\tfrac 1 2 \sqrt{\|A\|} &< \rho_{\min}(F) = \rho_{\max}(F^*)^{-1} \leq \sqrt{\|A\|},
\end{align}
and thus $\sqrt{\|M\|\|A\|} \leq \rho(F)=\rho(F^*) < 4 \sqrt{\|M\| \|A\|}$.
Also $\rho_{\min}(F) \leq \|M^{-1}\|^{-\frac 1 2}$ and $\rho(F) \geq 1/(1-\|\omega\|_{M^{-1}})$.
\end{proposition}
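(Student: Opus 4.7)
My plan addresses the four assertions in the order they appear, with the dual-norm identification as the central computation. Positive homogeneity and the triangle inequality of $F$ are immediate from those of $\|\cdot\|_M$ and the linearity of $\dfx\mapsto\<\omega,\dfx\>$. Definiteness is the only property that uses the hypothesis: Cauchy--Schwarz in the inner product $\<\cdot,M^{-1}\cdot\>$ gives $|\<\omega,\dfx\>|\le\|\omega\|_{M^{-1}}\|\dfx\|_M$, and since $\|\omega\|_{M^{-1}}<1$ this inequality is strict for $\dfx\ne 0$, so $F(\dfx)>0$.

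The main work is the dual-norm identification. I would set up the Lagrangian for $\max\{\<\cfx,\dfx\>:F(\dfx)=1\}$; stationarity yields $\cfx=\lambda(M\dfx/\|\dfx\|_M+\omega)$, and the 1-homogeneity of $F$ forces $\lambda=F^*(\cfx)$. Setting $\dfu:=\dfx/\|\dfx\|_M$ and substituting $\dfu=M^{-1}(\cfx/\lambda-\omega)$ into the constraint $\<\dfu,M\dfu\>=1$ produces a scalar quadratic in $t:=1/\lambda$,
\begin{equation*}
\|\cfx\|_{M^{-1}}^2\,t^2-2\<M^{-1}\omega,\cfx\>\,t-\delta=0,\qquad \delta:=1-\<\omega,M^{-1}\omega\> > 0.
\end{equation*}
Selecting the positive root and rationalising gives $F^*(\cfx)=\|\cfx\|_A+\<b,\cfx\>$ with $A=\tfrac1\delta M^{-1}+\tfrac1{\delta^2}M^{-1}\omega\omega^\top M^{-1}$ and $b=-\tfrac1\delta M^{-1}\omega$. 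To align these with the blocks announced in the statement, I compute the Schur complement of $M$ in $\begin{pmatrix}M&\omega\\\omega^\top&1\end{pmatrix}$, which equals exactly $\delta$; the standard block-inverse formula then reproduces the above entries for $\delta A$ and $b$. Reading $\begin{pmatrix}M&\omega\\\omega^\top&1\end{pmatrix}\begin{pmatrix}\delta A&b\\b^\top&1/\delta\end{pmatrix}=I$ block-wise also yields $b=-\delta A\omega$ and $\omega^\top b=(\delta-1)/\delta$, whence $\<b,A^{-1}b\>=-\delta\<b,\omega\>=1-\delta<1$, which simultaneously establishes $A\succ 0$ and the compatibility condition on the dual data. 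This block bookkeeping is the main obstacle; everything else is essentially duality.

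For the anisotropy ratios, the lower bound $\sqrt{\|M\|}\le\rho_{\max}(F)$ follows by picking a unit eigenvector $\dfx_0$ of $M$ at its largest eigenvalue: then $F(\dfx_0)+F(-\dfx_0)=2\sqrt{\|M\|}$, so at least one of $F(\pm\dfx_0)$ reaches $\sqrt{\|M\|}$; the same averaging at the smallest eigenvalue of $M$ gives $\rho_{\min}(F)\le\|M^{-1}\|^{-1/2}$. The strict upper bound $\rho_{\max}(F)<2\sqrt{\|M\|}$ combines $F(\dfx)\le\|\dfx\|_M+\|\omega\|\|\dfx\|$ with the consequence $\|\omega\|\le\sqrt{\|M\|}\,\|\omega\|_{M^{-1}}<\sqrt{\|M\|}$ of the compatibility assumption. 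Standard polar duality between $F$ and $F^*$ (applied to Euclidean balls, whose polars are Euclidean balls of reciprocal radius) yields $\rho_{\max}(F)\rho_{\min}(F^*)=1$ and $\rho_{\min}(F)\rho_{\max}(F^*)=1$; applying \eqref{eq:mumax_bounds} to $F^*$---which, as just proved, is itself Randers with data $(A,b)$---and taking reciprocals gives \eqref{eq:mumin_bounds}. Multiplying the two two-sided bounds then produces the stated estimates on $\rho(F)=\rho(F^*)$.

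Finally, for $\rho(F)\ge 1/(1-\|\omega\|_{M^{-1}})$, I would test on the $\omega$-aligned directions $\pm M^{-1}\omega$: writing $u:=\<\omega,M^{-1}\omega\>$, a direct computation gives $F(\pm M^{-1}\omega)=\sqrt{u}\,(1\pm\sqrt{u})$. Normalising and using that $\rho_{\max}(F)$ dominates $F$ on unit vectors while $\rho_{\min}(F)$ is dominated by it, one obtains $\rho(F)\ge F(M^{-1}\omega)/F(-M^{-1}\omega)=(1+\sqrt u)/(1-\sqrt u)\ge 1/(1-\sqrt u)$, which is the desired inequality.
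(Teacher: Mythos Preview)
Your argument is correct and follows essentially the same route as the paper: definiteness via Cauchy--Schwarz in the $M^{-1}$ inner product, eigenvector tests for the $\rho_{\max}$ and $\rho_{\min}$ bounds, the reciprocal identities $\rho_{\max}(F)\rho_{\min}(F^*)=1$ and $\rho_{\min}(F)\rho_{\max}(F^*)=1$, and the test directions $\pm M^{-1}\omega$ for the lower bound on $\rho(F)$.

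The one substantive difference is in the identification of the dual norm. The paper simply quotes the explicit formulas $A=\delta^{-2}M^{-1}\omega\omega^\top M^{-1}+\delta^{-1}M^{-1}$ and $b=-\delta^{-1}M^{-1}\omega$ from \cite[Proposition~4.1]{mirebeau2014efficient}, and then recognises these as the blocks of the Schur inverse by invoking the block-inversion formula of \cite{lu2002inverses}; the compatibility condition $\<b,A^{-1}b\><1$ is deduced indirectly, from the fact that $F^{**}=F$ forces $F^*$ to be definite. Your Lagrangian derivation is self-contained: you obtain the quadratic in $t=1/\lambda$ directly from the constrained optimisation, read off $A$ and $b$ from the positive root, and then extract $\<b,A^{-1}b\>=1-\delta$ explicitly from the block identities $b=-\delta A\omega$ and $\omega^\top b=(\delta-1)/\delta$. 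This buys you an explicit value (not just a strict inequality) for the dual compatibility parameter, at the cost of a short computation the paper outsources to references.
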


The rest of this section is devoted to the proof.
The positive homogeneity $F(\lambda \dfx) = \lambda F(\dfx)$, and the triangular inequality $F(\dfx+\dfy) \leq F(\dfx)+F(\dfy)$, for all $\lambda\geq 0$ and $\dfx,\dfy \in \bR^2$, follow from the same properties of the norm $\|\cdot\|_M$ and of the linear form $\<\omega,\cdot\>$.
One has 
\begin{equation*}
	|\<\omega,\dfx\>| = |\<M^{-\frac 1 2} \omega, M^\frac 1 2 \dfx\>| \leq 
	\|\omega\|_{M^{-1}} \|\dfx\|_M 
\end{equation*}
by the Cauchy-Schwartz inequality, and therefore 
\begin{equation}
\label{eq:Randers_bounds}
	\|\dfx\|_M (1-\|\omega\|_{M^{-1}}) \leq F(\dfx) \leq \|\dfx\|_M (1+\|\omega\|_{M^{-1}}),
\end{equation}
showing that $F$ is definite, i.e.\ $F(\dfx)>0$ whenever $\dfx \neq 0$. 
On the other hand, observing that $F(-M^{-1} \omega) = \|\omega\|_{M^{-1}} (1-\|\omega\|_{M^{-1}})$, we find that the compatibility condition $\|\omega\|_{M^{-1}}<1$ is necessary for definiteness of $F$. Also 
$
\rho(F) \geq F(M^{-1} \omega)/F(-M^{-1} \omega) 
\geq 1/(1-\|\omega\|_{M^{-1}})
$
as announced in the last sentence of the proposition.

Denote by $\lambda = \|M\|$ the largest eigenvalue of $M$, and by $\dfe$ the corresponding unit eigenvector. One has $\max \{F(\dfe),F(-\dfe)\} = \sqrt{\lambda} + |\<\omega,\dfe\>| \geq \sqrt{\lambda}$, hence $\rho_{\max}(F) \geq \sqrt{\lambda}$. On the other hand, the equation~\eqref{eq:Randers_bounds} yields $\rho_{\max}(F) \leq (1+\|\omega\|_{M^{-1}})  \sqrt{\lambda} < 2 \sqrt{\lambda}$ as announced in Eq.~\eqref{eq:mumax_bounds}. Conversely, if $\lambda = \|M^{-1}\|^{-1}$ is the smallest eigenvalue of $M$, and if $\dfe$ is the associated eigenvector, then $\min\{F(\dfe), F(-\dfe)\} = \sqrt \lambda - |\<\omega,\dfe\>| \leq \sqrt \lambda$, hence $\rho_{\min}(F) \leq \sqrt \lambda$ as announced in the last sentence of the proposition.

The dual norm $F^*$ is given\footnote{
Randers norms are presented in \cite{mirebeau2014efficient} with the convention $F(\dfx) = \|\dfx\|_M - \<\tilde \omega, M \dfx\>$ corresponding to $\omega = -M\tilde \omega$.
} 
in \cite[Proposition 4.1]{mirebeau2014efficient} by $A = \delta^{-2} M^{-1} \omega \omega^\top M^{-1} + \delta^{-1} M^{-1}$ and $\kb = - \delta^{-1} M^{-1} \omega$.
On the other hand, the $2 \times 2$ block matrix 
$
	\begin{pmatrix}
		M & \omega\\
		\omega^\top & 1
	\end{pmatrix}
$
has a non-singular top left block $M$ and Schur complement $\delta$, hence its inverse is known explicitly \citep{lu2002inverses}.
Comparing these expressions we obtain the announced expression of the dual norm, and also that $F^{**} = F$, which alternatively could be proved in a more general setting using convex duality theory.
Since $F^*$ is definite, one must have $\<\kb,A^{-1} \kb\> < 1$. 

Consider $\dfx,\cfx \in \bR^2$. If $F(\dfx)=1$, then $\rho_{\min}(F) \| \dfx\| \leq 1$, and inserting this estimate in \eqref{eq:dual_norm} yields
\begin{equation*}
F^*(\cfx) \leq \<\cfx,\dfx\> \leq \|\cfx\| \|\dfx\| \leq \|\cfx\| /\rho_{\min}(F),	
\end{equation*}
hence $\rho_{\max}(F^*) \rho_{\min}(F) \leq 1$.
On the other hand, assume that $\cfx \neq \mathbf{0}$ and take $\dfx= \cfx/F(\cfx)$, which satisfies $F(\dfx)=1$, in~\cref{eq:dual_norm}. Then we obtain $F^*(\cfx) F(\cfx) \geq \|\cfx\|^2$, hence $\rho_{\min}(F^*) \rho_{\max} (F) \geq 1$. Exchanging the roles of the primal norm $F$ and of the dual norm $F^*$ leads to 
\begin{equation*}
\rho_{\max}(F) \rho_{\min}(F^*) \leq 1\text{~~and~~}\rho_{\min}(F) \rho_{\max} (F^*) \geq 1	,	
\end{equation*}
hence 
\begin{equation*}
\rho_{\max}(F) = \rho_{\min}(F^*)^{-1}\text{~~and~~}\rho_{\min}(F) = \rho_{\max}(F^*)^{-1},
\end{equation*}
as announced in Eqs.~\eqref{eq:mumax_bounds} and~\eqref{eq:mumin_bounds}. Thus Eq.~\eqref{eq:mumax_bounds} is established, and Eq.~\eqref{eq:mumin_bounds} follows by duality. Taking the ratio of these estimates yields the announced bounds for $\rho(F)$, which concludes the proof.

\subsection{Characterization of Randers Distances and Geodesics} 
\label{subsec:eikonal} 

The framework of optimal control \citep{bardi2008optimal} allows characterizing the Randers geodesic distance from a given source point as the viscosity solution to an eikonal PDE. This paves the way for the numerical computation of this distance, which is presented in~\cref{subsec_FMSolver}. Once the geodesic distance is known, the minimal Randers geodesic paths from the source point to the target points can be backtracked using an appropriate ODE.

Denote by $\rD : \overline \Omega \to \bR$ the geodesic distance defined by $\rD(\fx) := \Dist_\cF(\fs,\fx)$, where the source point $\fs \in \Omega$ and the Randers metric $\cF$ are fixed in this subsection. We also refer to $\rD$ as the geodesic distance map or minimal action map. Following \citep{bardi2008optimal}, the map  $\rD$ is the unique viscosity solution to the following static Hamilton-Jacobi-Bellman PDE, which is a variant of the eikonal PDE:
\begin{equation}
\label{eq_FinslerEikonal}
\begin{cases}
\cF^*(\fx,\diff \rD(\fx)) = 1,&\forall\fx\in\Omega \sm \{\fs\},\\
\rD(\fs) = 0,	
\end{cases}
\end{equation}
with outflow boundary conditions on $\partial \Omega$. Using the structure of Randers metrics, the eikonal PDE~\eqref{eq_FinslerEikonal} can be stated more explicitly as 
\begin{equation*}
\|\diff\rD(\fx)\|_{A(\fx)}+\<\kb(\fx), \diff \rD(\fx)\>=1,
\end{equation*}
where the dual metric parameters $A(\fx)$ and $\kb(\fx)$ are obtained from  \cref{prop:randers_properties}. 
There are other mathematically equivalent forms of the eikonal PDE \eqref{eq_FinslerEikonal}, such as $\|\diff \rD_\fs(\fx)-\omega(\fx)\|_{\cM^{-1}(\fx)}=1$, see~\citep{mirebeau2019riemannian,chen2018fast}.
\vspace{-0.5\baselineskip}\\

\noindent\emph{Geodesic backtracking.} 
Optimal feedbacks, and optimal paths, can be extracted from the solution of the eikonal PDE~\eqref{eq_FinslerEikonal}, following the framework of optimal control. We briefly give the heuristic of these techniques, and refer the reader to~\citep{bardi2008optimal} for a complete and rigorous presentation, in a much more general setting in terms of geometry and regularity (the metric needs not be of Randers form, and the related coefficients can be merely continuous), which is outside the scope in this paper. Let us emphasize in particular that $\rD$ is not a classical solution to the PDE~\eqref{eq_FinslerEikonal}, but only a viscosity solution, hence the term $\diff \rD$ appearing below is in~\citep{bardi2008optimal} replaced with a suitable weak and in general multivalued notion of derivative, which is not discussed here.

The geodesic flow, or optimal feedback map, is the vector field $\mathbf{V} : \overline \Omega \to \bR^2$ obtained as 
\begin{equation*}
\mathbf{V}(\fx):=\arg\max\{\< \diff \rD(\fx),\dfx \> \mid \dfx \in \bR^2, \cF(\fx,\dfx)=1\}.
\end{equation*}
Equivalently, in terms of the dual metric, one has 
\begin{equation}
\label{eq:geodesic_flow}
\mathbf{V}(\fx) = \diff \cF^*_\fx(\diff \rD(\fx)) =\frac{A(\fx)\diff \rD(\fx)}{\|\diff \rD(\fx)\|_{A(\fx)}} + \kb(\fx).
\end{equation}
A globally optimal geodesic path $\cG_\fx$ from the fixed source point $\fs$ to an arbitrary target $\fx$, satisfies
\begin{align}
\label{eq:backtracking_ODE}
\begin{cases}
	\cG_\fx'(t) = \mathbf{V}(\cG_\fx(t)), &\quad  0 < t \leq T := \rD(\fx), \\ 
	\cG_\fx(T) = \fx.
\end{cases}
\end{align}
This is known as the backtracking ODE, and it is solved backwards in time from the target point $\fx$ until the source point $\fs$ is reached. Indeed, assuming differentiability, one easily checks that $\rD(\cG_\fx(t)) = t$ for all $0<t \leq T$, and thus $\cG_\fx(t) \to \fs$ as $t \to 0^+$.  

Let us emphasize however that the geodesic flow $\mathbf{V}$ has not sufficient regularity (and may in fact be multivalued like $\diff \rD$) to apply the Cauchy-Lipschitz/Picard-Lindelöf existence and stability theorem for ODEs. Indeed, geodesic paths between two given points are not unique in general, and their existence is instead ensured by the argument presented in \cref{subsec_FinslerIntro}. The backtracking ODE allows  characterizing these optimal paths, and  computing them under suitable assumptions, see respectively Sections III.2.5 and A.1.2 of~\citep{bardi2008optimal}.

\begin{remark}
We fix throughout the paper a Riemannian metric, defined as 
\begin{equation}
\label{eq_RiemannMetric}
\cR(\fx,\dfx):=\sqrt{\<\dfx,\cM(\fx)\dfx\>}=\|\dfx\|_{\cM(\fx)}.
\end{equation}
where $\cM \in C^1(\overline \Omega , S_2^{++})$ is a tensor field, whose practical choice is discussed later in \cref{subsec_PracticalMetric}. As can be observed, Riemannian metrics $\cR$ are a special case of Randers metrics, whose asymmetric term  identically vanishes, i.e. $\omega\equiv \mathbf{0}$. The dual metric of $\cR$ also has a Riemannian form, defined by $A = \cM^{-1}$ and $\kb \equiv \mathbf{0}$, in view of \cref{prop:randers_properties}. The Riemannian eikonal PDE \eqref{eq_FinslerEikonal} and the flow of the backtracking ODE \eqref{eq:geodesic_flow} are specialized accordingly and become $\|\diff \rD(\fx)\|_{\cM(\fx)^{-1}} = 1$ and $\mathbf{V}(\fx) = \cM(\fx)^{-1} \diff \rD(\fx)/\|\diff \rD(\fx)\|_{\cM(\fx)^{-1}}$, respectively. 
\end{remark}

\subsection{Eikonal Solver: the Finsler Variant of the Fast Marching Method}
\label{subsec_FMSolver}
The fast marching method (FMM)~\citep{tsitsiklis1995efficient,sethian1999fast} is an efficient algorithm for solving the eikonal PDE associated to an isotropic metric, of the form $\cF(\fx,\dfx) = c(\fx) \|\dfx\|$ where $c: \overline \Omega \to ]0,\infty[$ is a given cost function. This numerical scheme requires a Cartesian grid $\Omega_h := \Omega \cap h \bZ^2$ where $h>0$ denotes the grid scale, containing the source point $\fs$. The FMM numerically solves the discretized eikonal PDE in a single pass over $\Omega_h$ using a front propagation. 
Adapting the FMM to Riemannian and Finsler metrics is a non-trivial task, which has led to a continued line of research \citep{sethian2003ordered,mirebeau2014anisotropic,mirebeau2014efficient,mirebeau2019riemannian}.
The numerical method used in this paper relies on a semi-Lagrangian scheme, based on the following update operator: for any discrete map\footnote{The choice of font distinguishes the continuous mapping $\rD : \overline \Omega \to \bR$ from the discrete one $\cD : \Omega_h \to \bR$ used in the numerical scheme.} $\cD : \Omega_h \to \bR$ and point $\fx \in \Omega_h$ of the discretization grid
\begin{equation}
\label{eq_HopfLax}
\Lambda \cD(\fx) := \min\big\{\cD(\fy)+\cF(\fx,\fx-\fy) \mid \fy\in\partial \cS(\fx)\big\},
\end{equation}
where $\cS(\fx)$ is a polygonal neighborhood of the point $\fx$ referred to as the stencil, whose vertices lie on the grid $h \bZ^2$. The table of distances $\cD$ is linearly interpolated between these vertices, and extended by $+\infty$ outside of $\Omega$, which implements outflow boundary conditions.
The scheme solution is defined as the unique solution to the fixed point problem $\cD(\fx) = \Lambda \cD(\fx)$, which mimics Bellman's optimality principle 
\begin{equation*}
\rD(\fx) = \min\{ \rD(\fy) + \Dist_\cF(\fy,\fx) \mid \fy\in\partial \cS(\fx)\}	
\end{equation*}
obeyed by the solution to the eikonal PDE \eqref{eq_FinslerEikonal}.

The original isotropic FMM relies on a basic diamond shaped stencil \citep{tsitsiklis1995efficient,sethian1999fast}, independent of the grid point $\fx \in \Omega_h$.
In contrast, extensions of the isotropic FMM to anisotropic Riemannian, Randers or Finslerian metrics require more sophisticated stencils $\cS(\fx)$ obeying an acuteness property \citep{sethian2003ordered,mirebeau2014anisotropic} depending on the local metric $\cF(\fx,\cdot)$. In this paper, we rely on \emph{fast marching using anisotropic stencil refinement} (FM-ASR)~\citep{mirebeau2014efficient}, so that $\cS(\fx)$ contains $\cO (\ln^2 \rho)$ grid points in average and $\cO(\rho \ln\ln \rho)$ in the worst case~\citep{mirebeau2019worst}, where $\rho := \rho(\cF(\fx,\cdot))$. 
The FM-ASR method is presented in \cref{alg_FM}, and has an overall complexity $\cO(K N \ln N)$, where $N = \#(\Omega_h)$ is the number of grid points, and $K$ is the mean number of grid points per stencil. In addition, computation time can be slightly shortened by stopping early as soon as the desired endpoint is accepted.

\begin{algorithm}[t]
\caption{\textsc{Fast Marching Method}}
\label{alg_FM}
\SetKwInOut{Input}{Input}
\SetKwInOut{Output}{Output}
\Input{A Finsler metric $\cF$ and a source point $\fs\in \Omega_h$.}
\Output{Approximated distance map $\cD(\fx) \approx \Dist_\cF(\fs,\fx)$.}
\SetKwInOut{Ini}{Initialization}
\Ini{ 
\begin{itemize}
\item Set $\cD(\fs)=0$.
\item Tag each grid point $\fx\in \Omega_h$ as \emph{Trial}.
\item Construct the stencils $\cS$ adapted to the metric $\cF$.
\end{itemize}
}
\While{a trial point remains}{ 
Find a \emph{Trial} point $\fx_{\min}$ with minimal distance $\cD(\fx_{\min})$.\\
Tag $\fx_{\min}$ as \emph{Accepted}.\\

\For{each Trial point $\fy$ such that $\fx_{\min}\in \cS(\fy)$}{
\label{algLineNeigh}
Set $\cD(\fy) \gets \Lambda \cD(\fy)$, using the update operator \eqref{eq_HopfLax}.\\
}
}
\end{algorithm}
In \cref{fig_DifferentMetrics}, we illustrate some numerically computed geodesic distance maps and the corresponding minimal paths with respect to different geodesic metrics $\cF$. Fig.~\ref{fig_DifferentMetrics}a illustrates a vector field, denoted by $\varpi(\fx) := -(\fx-\fx_0)^\perp / \|\fx-\fx_0\|$ for any $\fx\in\overline\Omega\backslash\{\fx_0\}$ and $\varpi(\fx_0)=\mathbf{0}$,  using blue arrows, where $\fx_0$ is shown as a black dot at the image center. 
In other words, $\varpi(\fx)$ is tangent to the circle centered at $\fx_0$ and passing through $\fx$. In this example, the tensor field $\cM$ and the vector field $\omega$, defining the metric as in Eq.~\eqref{eq_RandersForm}, are respectively defined as
\begin{equation}
\cM(\fx):=\Id-a_1^2\varpi(\fx)\varpi(\fx)^\top, \text{~and~}\omega(\fx):=a_2\varpi(\fx),
\end{equation}
where $\Id$ is the identity matrix and where $a_1,\,a_2\in [0,\infty[$ are two constants. 
In the case where $a_1=a_2=0$, one obtains the Euclidean metric $\cF(\fx,\dfx)=\|\dfx\|$, for which the minimal geodesic path is a straight segment between the given points, as depicted in Fig.~\ref{fig_DifferentMetrics}b. In Fig.~\ref{fig_DifferentMetrics}c, we choose $a_1=0.95$ and $a_2=0$ to construct an anisotropic and symmetric Riemannian metric $\cR$, yielding a minimal geodesic path whose tangents are almost aligned with the orientation of the vector field $\varpi$.  Fig.~\ref{fig_DifferentMetrics}d illustrates the geodesic distances and the associated minimal geodesic path between the given points, associated to the Randers metric defined by the parameters $a_1=0.3$ and $a_2=0.8$. 
Note that the smallest eigenvalue $1-a_1^2>0$ of the matrix $\cM(\fx)$ is positive, and that the Randers  compatibility condition $\|\omega(\fx)\|_{\cM(\fx)^{-1}} = a_2/\sqrt{1-a_1^2} < 1$ is satisfied. One can observe that the tangents to the Randers minimal geodesic path are almost opposite to the direction of the vector field $\varpi$.

\section{Region-based Randers Geodesic Formalism}
\label{sec_AC}
The general goal of region-based active contour models is to seek shapes minimizing a well-chosen energy functional, so as to delineate image features of interest. In this section, we focus on the minimization of a region-based active contour energy based on a contour evolution scheme. 

\paragraph{Notations.} 
We denote by $\bT := \bR/\bZ$ the $1$-periodic interval, in other words $\bT = [0,1]$ with periodic boundary conditions.

Recall that $\Omega$ refers to a simply connected open and bounded domain. Subregions are denoted $S \subset \Omega$, are referred to as \emph{shapes}, and are assumed to be measurable. The characteristic function $\chi_S$ of a shape $S$ is regarded as an element of $\cX :=L^1(\Omega,\{0,1\})$. The distance between two shapes is defined as $\|\chi_S - \chi_\gS\|_1$, i.e.\ the area of their symmetric difference. 
We say that a shape $S$ is \emph{simple} if it is homeomorphic to a ball and has a rectifiable boundary, and in this case we denote by $\cC_S \in \Lip(\bT,\Omega)$ an arbitrary counter-clockwise parametrization of $\partial S$. We also fix $\cM \in C^1(\overline \Omega, S_2^{++})$ and denote by $\cR$ the associated Riemannian metric, see~\cref{eq_RiemannMetric}. 

We denote by $\|f\|_1$ the $L^1$ norm of a measurable function $f$ on its set of definition, and likewise $\|f\|_2$ and $\|f\|_\infty$ denote the $L^2$ and $L^\infty$ norms. The notation ``$C=C(a,b)$'' means that the constant $C$ only depends on the parameters $a$ and $b$. We denote by  $|\xi|_{C^\alpha}$ the $\alpha$-H\"{o}lder semi-norm of function $\xi : \overline \Omega \to \bR$, which is defined as the smallest constant such that $|\xi(\fx)-\xi(\fy)| \leq |\xi|_{C^\alpha} \, \|\fx-\fy\|^\alpha$ for all $\fx,\fy \in \overline \Omega$, where $0<\alpha<1$. 

\subsection{Region-based Energy Functional}
A typical region-based energy functional $E_0:\cX\to\bR$ is defined as the sum of a region-based image appearance term $\Psi:\cX\to\bR$ and of a regularization term $\TV:\cX\to\bR$, which can be expressed as 
\begin{equation}
\label{eq_RegionalEnergyTV}
E_0(S)=\alphareg\,\Psi(\chi_{\aS})+\TV(\chi_{\aS}),
\end{equation}
where $\alphareg\in[0,\infty[$ is a constant parameter modulating the importance of $\Psi$. 
In applications, the region-based appearance term $\Psi$ often measures the homogeneity of image features within and outside of the shape $S$, see~\cref{subsec:GradientExamples} for examples. The regularization term $\TV(\chi_{\aS})$ is the total variation norm of $\chi_{\aS}$ which penalizes the perimeter, or the Euclidean curve length, of the boundary $\partial{\aS}$. In this way, the regularization term is independent of the image data.

Hybrid active contour models such as~\citep{kimmel2003regularized,zach2009globally,bresson2007fast} provide a simple and effective avenue to integrate a region-based appearance model and an edge-driven weighted curve length associated to a Riemannian metric. 
In the following we limit our attention to \emph{simple shapes} $S$ (see notations), whose boundary can be parametrized by a Lipschitz curve $\cC_S$. This admittedly limits the scope of our method: we cannot segment several disconnected regions in a domain, or an annulus shaped region, in a single run of the curve evolution scheme.
The hybrid active contour energy is defined as 
\begin{equation}
\label{eq_HybridEnergy}
E(S)=\alphareg\,\Psi(\chi_{\aS})+
\Length_\cR(\cC_S),
\end{equation}
where $\cR$ is a Riemannian metric, defined in terms of a tensor field $\cM \in C^1(\overline \Omega, S_2^{++})$, and which is fixed in this paper, see Eq.~\eqref{eq_RiemannMetric}.  The tensor field $\cM$ is in practice data-driven in terms of the magnitude and the directions of the image gradient features, see~\cref{subsec_PracticalMetric}. For simplicity we set the parameter $\alphareg=1$ in this section, w.l.o.g.\ up to considering the region term $\alphareg \, \Psi$. The tuning of parameter $\alphareg$ does however play a significant role in practice, as discussed in~\cref{sec_SegmentationAlgs}. The image segmentation functional~\eqref{eq_HybridEnergy} is thus enhanced by the image anisotropy features~\citep{kimmel2003regularized}. Note that for images whose gradients are unreliable to define the edges of interest, the Euclidean curve length of $\cC_{\aS}$ should be taken as the regularization term, in other words $\cM\equiv \Id$ and one recovers the energy functional~\eqref{eq_RegionalEnergyTV}.  

We present a contour evolution scheme based on the numerical computation of successive Randers geodesic paths, which is summarized in~\cref{alg_ContourEvolution}. Our main result~\cref{th:summary} summarizes the analysis of this scheme, to which this whole section is devoted. It establishes that the proposed scheme yields a critical point of the hybrid active contour energy functional~\eqref{eq_HybridEnergy}, within the set of simple regions $S$ whose boundary $\partial S$ is contained within a thin tubular shaped subdomain $\rT \subset \Omega$, see Eq.~\eqref{eqdef:tubular_domain}. Our algorithm is entirely constructive, implementable, and numerically efficient, based on the eikonal solver of \cref{subsec_FMSolver} and as illustrated in the experiments of  Section~\ref{sec_Experiments}. The limitation of the search space however means that the optimal shape $S$ found by~\cref{alg_ContourEvolution} may satisfy $\partial S \cap \partial \rT\neq \emptyset$, in other words the evolution of the contour of the shape $S$ can be stuck by the boundary of the search space $\rT$, and the global minimizer of~\cref{eq_HybridEnergy} (over all possible shapes within $\Omega$) can be different. In practice, this shortcoming is addressed by dynamically updating the tubular search region as the algorithm progresses, see the optional \cref{line:update_tubular} of \cref{alg_ContourEvolution}, and \cref{fig:FixedPoints}. However, the theoretical analysis of this enhancement to the method appears to be complex and is outside the scope of this paper. Thus our approach does \emph{not} establish the existence of a globally optimal shape minimizing~\cref{eq_HybridEnergy} without restriction on the search space, which can be addressed using different (usually non-constructive) techniques as discussed in \cref{rem:minimization}.

In summary, the successive steps of our segmentation method \cref{alg_ContourEvolution} are described in the following places: \Cref{sec_PracticalRanders} for \cref{line:update_tubular},~\cref{subsec:GradientExamples} for \cref{line:compute_shape_gradient}, \cref{subsec_ExistenceVectorField} for \cref{line:solve_curl}, \cref{subsec_RandersInterpretation} for \cref{line:make_metric}, \cref{subsec_SearchSpace} for \cref{line:closed_geodesic}, and \cref{subsec_convergence_analysis} for the convergence analysis.

\begin{remark}[Existence of global minimizers for image segmentation functionnals]
\labelx{rem:minimization}
The mathematical well-posedness of variational image segmentation methods is for a large part founded on two celebrated results establishing the existence of minimizers to the piecewise smooth Mumford-Shah model~\citep{mumford1989optimal}, and to the piecewise constant approximation models~\citep{zhu1996region,cohen1997avoiding,chan2001active}, see~\citep{morel2012variational} for a proof. These works also encompass the more difficult problem of multi-region segmentation, and describe their triple point junctions. On the other hand, these results limit their scope to only two specific types of region-based homogeneity penalty terms $\Psi$, related to either piecewise smooth approximation, or to piecewise constant  approximation with a gradient squared penalization. The authors are not aware of a theory sufficiently general to accommodate the various non-linear and non-local region terms typically found in applications such as \citep{cremers2007review,jung2012nonlocal,michailovich2007image,bresson2008non}, based on e.g.\ their regularity properties. Nevertheless, the existence of the minimizers for these variants is generally regarded as a consequence of the two fundamental cases treated. Finally, let us also mention that the existence of minimizers is established for the fuzzy~\citep{li2010variational} and convex~\citep{chambolle2012convex,chan2006algorithms} relaxations of the segmentation problem.
\end{remark}

\begin{algorithm}[t]
\caption{\textsc{Contour Evolution Scheme}}
\label{alg_ContourEvolution}
\SetKwInOut{Input}{Input}
\SetKwInOut{Output}{Output}
\Input{An initial shape $S_0$, with $\partial S_0\subset \rT$ a tubular domain.}
\Output{An approximation of the optimal shape.}
\While{shape $S_n$ is not stabilized}{
Compute the Euclidean distance $d_{\partial S_n}$ from the shape boundary $\partial S_n$.\\
\label{line:eucl_distance}
Update the tubular neighborhood $\rT$ if needed. \hfill (Optional)\\ 
\label{line:update_tubular}
Compute the gradient $\xi_{S_n}$ of the region-based energy functional $\Psi$.\\
\label{line:compute_shape_gradient}
Compute the vector field $\omega_{S_n}$, obeying the PDE~\eqref{eq:curl_omega}.\\
\label{line:solve_curl}
Construct the Randers metric $\cF^{S_n}$, by formula \eqref{eq:rander_stokes}.\\
\label{line:make_metric}
Choose a point $\fx_n \in \partial S_n$, far from $\fx_{n-1}$ if $n \geq 1$.\\ 
Compute the geodesic curve $\cC_{n+1}$ from $\fx_n$ to itself, circling once around $\rT$, with respect to the metric $\cF^{S_n}$.\\
\label{line:closed_geodesic}
Define the shape $S_{n+1}$ as the region enclosed by $\cC_{n+1}$.\\
$n \gets n+1$.
}
\end{algorithm}

\begin{theorem}[Summary of the results of this section]
\label{th:summary}
Let $\Omega \subset \bR^2$ be a simply connected bounded open domain. Let $\cM \in C^1(\overline \Omega, S_2^{++})$ be a Riemannian metric. 
Consider a region-based energy functional $\Psi$ obeying 
\begin{equation*}
\Big| \Psi(\chi_\gS) + \int_\Omega (\chi_S-\chi_\gS) \xi_\gS \,\diff \fx - \Psi(\chi_S) \Big| 
\leq K_\Psi \|\chi_S - \chi_\gS\|_1^2,	
\end{equation*}
for any\footnote{We may require in addition that $\partial\gS$ and $\partial S$ are curves homotopic to $\rC$ within the tubular neighborhood $\rT$, see the definitions below.}
measurable shapes $\gS$ and $S$, where $\xi_\gS \in C^0(\overline \Omega)$ is (slightly abusively) referred to as the gradient of $\Psi$ at $\gS$.
 Assume that $\xi_\gS$ depends continuously on $\gS$ (where the distance between two shapes is defined as $\|\chi_S-\chi_\gS\|_1$), and that $\|\xi_\gS\|_\infty$ and $|\xi_\gS|_{C^\alpha}$ are bounded independently of $\gS$ for some $0<\alpha\leq 1$. 
Let also $\rC \in C^3(\bT,\Omega)$ be a fixed curve parametrized at unit Euclidean speed, and let $\rT$ denote the tubular neighborhood of $\rC$ of width $U>0$. 
Denote by $\Gamma_0$ the collection of curves of the form $\cC = \rC + \mu \rN$, for some $\mu \in \Lip(\bT,[-U,U])$, where $\rN := (\rC')^\perp$.

Then the following holds if $U$ is small enough. There exists a vector field $\omega_\gS \in C^1(\rT,\bR^2)$, depending continuously on the shape $\gS$, such that $\curl \omega_\gS(\fx) = \xi_\gS(\fx)$ and $\|\omega_\gS(\fx)\|_{\cM(\fx)^{-1}} < 1$ for all $\fx \in \rT$. 
Define the metric 
$$
\cF^\gS_\fx(\dfx) := (1+2\lambda\, d_{\partial\gS}(\fx)^2) \| \dfx\|_{\cM(\fx)} + \<\omega_\gS(\fx),\dfx\>,
$$
where $\lambda>0$ is a suitable constant specified later.
Define by induction a sequence of curves $\cC_n$, regions $S_n$, and points $\fx_n$ as follows. 
Initialization : let $\cC_0 = \rC+\mu_0 \rN \in \Gamma_0$ with $\Lip(\mu_0) \leq 1$, let $S_0$ be the bounded region whose boundary is $\cC_0$, and let $\fx_0$ be a point of $\cC_0$. 
Induction for $n \geq 1$ : let $\cC_n$ be a curve from $\fx_{n-1}$ to itself, of minimal length w.r.t.\ the metric $\cF^{S_{n-1}}$ in the homotopy class of $\rC$ within $\rT$. We prove that $\cC_n = \rC + \mu_n \rN \in \Gamma_0$ with $\Lip(\mu_n) \leq 1$. Let $S_n$ be the bounded region whose boundary is $\cC_n$, and let $\fx_n$ be a point of $\cC_n$ such that $\|\fx_n - \fx_{n-1}\| \geq \delta > 0$.

Then one has $\sum_{n \geq 0} \|\mu_{n+1}-\mu_n\|_2^2 < \infty$, and there exists a uniformly converging subsequence $\mu_n \to \mu_* \in \Lip(\bT,[-U,U])$. Denote $\cC_* := \rC+\mu_* \rN\in \Gamma_0$ and let $S_*$ be the corresponding region. 
This limit region has the following critical point property : let $\eta : \bT \to \bR^2$ be any perturbation such that $\cC_\ve := \cC_* + \ve \eta$ takes values in $\rT$ for all sufficiently small $\ve>0$, and let $S_\ve$ be the corresponding bounded region. Then as $\ve \to 0$
$$
	\Psi(\chi_{S_\ve}) + \Length_\cR(\cC_\ve) \geq \Psi(\chi_{S_*}) + \Length_\cR(\cC_*) + o(\ve).
$$
\end{theorem}

\begin{proof}
	The properties of $\omega_\gS$ are established in \cref{th:exists_omega}. The properties of the curve $\cC_n$ are established in \cref{corol:minS_wellposed}. The sequence $(\mu_n)_{n\geq 0}$ is studied in \cref{th:successive_min}, and the critical point property is established in \cref{th:critical}.
\qed
\end{proof}

\begin{remark}
	We discuss in~\cref{subsec:GradientExamples} various energy functionals $\Psi$, and the conditions under which they obey the regularity assumption of \cref{th:summary}. 
	A possible construction of the metric $\cM$ is suggested in \cref{subsec_PracticalMetric}, and its regularity is briefly discussed in \cref{rem:reg_M}.
	A number of ad-hoc enhancements and variants of the curve evolution scheme are also discussed in \cref{sec_PracticalRanders}, including adaptive tubular neighbourhoods. However we make no attempt at their theoretical study, and instead regard \cref{th:summary} as a generic and self contained foundation for our approach.
\end{remark}

\subsection{Construction of the Tubular Search Space}
\label{subsec_SearchSpace}

Our segmentation method, presented in~\cref{alg_ContourEvolution}, looks for shapes whose contour fits within a tubular domain $\rT$. In this section, we describe this search space, and we establish its elementary properties.
For that purpose, we fix a simple and periodic curve $\rC \in C^3(\bT, \Omega)$, referred to as the \emph{centerline} of the tubular domain, parametrized counter-clockwise and at unit Euclidean speed\footnote{
This assumption implies that $\rC$ has Euclidean length one, but there is no loss of generality since the problem can be rescaled.
}.
Let $\kappa \in C^1(\bT, \bR)$ denote the signed curvature of the curve $\rC$, and let $\kappa_{\max} := \|\kappa\|_\infty$. 
Define also $\rN(t) := \rC'(t)^\perp$, for all $t \in \bT$, so that $\rC'' = \kappa \rN$. Fig.~\ref{Fig_offsetCurves}a depicts an example of such a tubular domain $\rT$ as well as its centerline $\rC$.

We refer to~\citep[Sections 1.4, 2.2 and 2.4]{abate2012curves} respectively for the general properties of curvature, tubular neighborhoods, and the rotation index of a curve around a point, also known as the winding number, which are used below. See also~\citep{thale2008reach} on the local feature size of a curve, also known as the reach. We recall that the curvature $\kappa$ of a planar curve $\gamma$, such that $\gamma'$ is non-vanishing and Lipschitz, is defined as 
\begin{equation}
\label{eqdef:curvature}
	\kappa := \frac{\det(\gamma',\gamma'')}{\|\gamma'\|^3}.
\end{equation}

A tubular neighborhood of the curve $\rC$ can be parametrized using as coordinates the curvilinear abscissa $t$ and the deviation from the centerline, as follows
\begin{align*}
	\Phi &: \bT \times ]-\lfs(\rC),\lfs(\rC)[ \to \bR^2, &
	\Phi(t,u) &:= \rC(t) + u \rN(t).
\end{align*}
We denoted by $\lfs(\rC)>0$ the \emph{local feature size} of the curve $\rC$, which is the minimal Euclidean distance from $\rC$ to its medial axis \citep{choi1997mathematical}. In particular, the mapping $\Phi$ is injective and $\kappa_{\max} \lfs(\rC) \leq 1$. The other properties of $\Phi$ are summarized in the next proposition.

\begin{proposition}
\labelx{prop:Phi}
The Jacobian matrix of the mapping $\Phi$ reads
\begin{align}
\label{eq:JacM_Phi}
\JacM_\Phi(t,u) &=
R(t)
\cdot
\begin{pmatrix}
1-\kappa(t) u &~0\\
0~ &~1	
\end{pmatrix}, &	
\end{align}
where $R(t)$ is the rotation matrix of columns $(\rC'(t),\rN(t))$, for all $t \in \bT$ and $|u| < \lfs(\rC)$. In particular, the Jacobian determinant is $\Jac_\Phi(t,u) = 1-\kappa(t) u$, the mapping $\Phi$ is a $C^2$ diffeomorphism on its image, and likewise $\Phi^{-1}$.
\end{proposition}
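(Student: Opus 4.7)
The plan is to compute the two partial derivatives of $\Phi$ explicitly, using the Frenet--Serret relations for a unit-speed plane curve, and then assemble the Jacobian matrix in the announced product form. The regularity and diffeomorphism claims then follow from the nonvanishing of the Jacobian determinant combined with global injectivity, the latter being a direct consequence of the definition of the local feature size.

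First I would compute $\partial_u \Phi(t,u) = \rN(t)$ and $\partial_t \Phi(t,u) = \rC'(t) + u \rN'(t)$. Since $\rC$ is parametrized at unit Euclidean speed, the pair $(\rC'(t), \rN(t))$ is an orthonormal frame, with $\rN := (\rC')^\perp$ the counter-clockwise rotation of $\rC'$. Differentiating $\|\rC'\|^2 = 1$ gives $\rC' \cdot \rC'' = 0$, so that $\rC'' = \kappa \rN$ (matching the sign convention fixed in the excerpt), and differentiating $\rN \cdot \rC' = 0$ and $\|\rN\|^2 = 1$ yields $\rN'(t) = -\kappa(t) \rC'(t)$. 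Substituting, I obtain
\begin{equation*}
\partial_t \Phi(t,u) = (1-\kappa(t) u) \, \rC'(t),
\qquad
\partial_u \Phi(t,u) = \rN(t).
\end{equation*}
Stacking these as columns and factoring out $R(t) = (\rC'(t)\,|\,\rN(t))$ on the left gives exactly the factorization~\eqref{eq:JacM_Phi}. Since $R(t)$ is a rotation matrix, $\det R(t) = 1$, hence $\Jac_\Phi(t,u) = 1-\kappa(t) u$.

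Next I would argue the $C^2$-diffeomorphism property. The assumed regularity $\rC \in C^3(\bT, \Omega)$ makes $\rN \in C^2$ and hence $\Phi \in C^2$. For $|u| < \lfs(\rC)$, the bound $|\kappa(t) u| \leq \kappa_{\max} \lfs(\rC) \leq 1$ is already recalled in the excerpt; the inequality is in fact strict when $|u|<\lfs(\rC)$, so the Jacobian determinant $1-\kappa(t)u$ is strictly positive and $\Phi$ is a local $C^2$ diffeomorphism by the inverse function theorem. Global injectivity on $\bT \times {]}-\lfs(\rC),\lfs(\rC){[}$ follows from the definition of the local feature size: any point $\fx = \Phi(t,u)$ in that tubular set has $\rC(t)$ as its unique nearest point on $\rC$ (otherwise $\fx$ would lie on the medial axis, at distance $|u| < \lfs(\rC)$ from $\rC$, contradicting the definition of $\lfs$), which recovers $t$, and then $u$ is recovered as the signed normal coordinate. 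Combining local diffeomorphism with global injectivity yields that $\Phi$ is a $C^2$ diffeomorphism onto its image, and $\Phi^{-1}$ is $C^2$ as well.

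The only delicate step is the global injectivity argument: the Frenet computation and the determinant formula are routine, but invoking the local feature size correctly, and checking that $|u|<\lfs(\rC)$ is indeed the sharp threshold preventing two distinct $(t,u)$ parameters from mapping to the same point, requires care. I would either cite \citep{choi1997mathematical} for this standard fact or include a short contradiction argument based on the characterization of the medial axis as the closure of the set of points with multiple nearest neighbors on $\rC$.
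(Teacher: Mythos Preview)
Your proof is correct and follows essentially the same route as the paper: both compute the partial derivatives directly via the Frenet relation $\rN' = -\kappa\,\rC'$, factor the Jacobian as $R(t)$ times a diagonal matrix, and then combine $C^2$-regularity, the nonvanishing Jacobian determinant on the open tube, and global injectivity from the definition of $\lfs(\rC)$ to conclude the diffeomorphism property. Your version is slightly more explicit about the Frenet derivation and the injectivity argument, but there is no substantive difference.
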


\begin{proof}
A closely related result is proved in~\citep[Theorem 2.2.5]{abate2012curves}.
	Note that $\rN'(t) = (\rC''(t))^\perp = \kappa \rN(t)^\perp = -\kappa\rC'(t)$, for any $t \in \bT$.
	Direct differentiation thus yields $\partial_t \Phi(t,u) = \rC'(t) + u \rN'(t) = (1-\kappa(t) u) \rC'(t)$, and $\partial_u \Phi(t,u) = \rN(t)$, which establishes Eq.~\eqref{eq:JacM_Phi}.
	Since $\rC'(t)$ is a unit vector, and $\rN(t)$ is its image by a counterclockwise rotation, we obtain that $R(t)$ is a rotation matrix as announced. In particular $\det R(t) = 1$ and the announced expression of $\Jac_\phi$ follows.
	
	The mapping $\Phi$ has $C^2$-regularity, since $\rN'' = (\rC''')^\perp$ is continuous by assumption. It is globally injective by definition of $\lfs(\rC)$, and has a locally invertible Jacobian matrix since $\kappa_{\max} \lfs(\rC)\leq 1$, which implies $\Jac_\Phi(t,u)>0$. Therefore $\Phi$ defines a $C^2$ diffeomorphism on its image, and so does $\Phi^{-1}$ by the inverse function theorem.
\qed
\end{proof}

\begin{figure*}[t]
\setlength{\fboxsep}{8pt}%
\centering
\includegraphics[height=4.5cm]{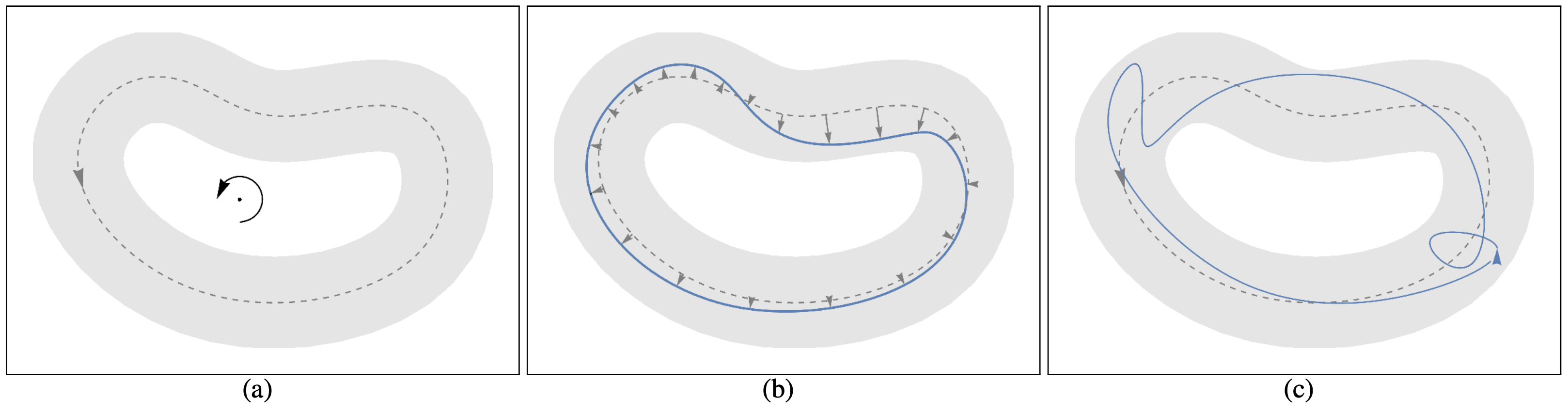}
\caption{\textbf{a} Illustration of the tubular domain $\rT$ (grayed region), the centerline $\rC$ (dashed line), and a point (black dot) within the bounded component of $\bR^2 \sm \rT$. Arrows indicate counterclockwise orientation. \textbf{b} A curve $\cC=\rC+\mu \rN \in \Gamma_0$ (blue solid line), defined by its deviation from the centerline $\mu$ (shown as arrows). Such a curve always bounds a shape $S \in \cX_0$. \textbf{c} A curve $\cC \in \Gamma_1$ which is indicated by the solid line.}
\label{Fig_offsetCurves}
\end{figure*}

We fix a tubular neighborhood $\rT$ of width $U$ of the reference curve $\rC$, defined as  
\begin{align}
\label{eqdef:tubular_domain}
	\rT &:= \Phi(\bT \times [-U,U]), & 
	0 &< U \leq \lfs(\rC)/3.
\end{align}
We assume that $\rT \subset \Omega$, up to reducing the width $U$, and note that further restrictions on $U$ are imposed in some results, such as \cref{th:geodesic_tubular} below.
Two sets of curves are of interest:
\begin{align*}
	\Gamma_1 &:= \{\cC \in \Lip(\bT, \rT)\mid \cC \text{ is homotopic to } \rC \text{ within } \rT\},\\
	\Gamma_0 &:= \{\cC = \rC+\mu \rN\mid \mu \in \Lip(\bT,[-U,U])\}.
\end{align*}
Clearly one has $\Gamma_0 \subset \Gamma_1$, using the homotopy $(\rC+s \mu \rN)_{s \in [0,1]}$, which continuously deforms the centerline $\rC$ into a curve of interest $\cC = \rC+\mu \rN \in \Gamma_0$ as the parameter $s$ varies from $0$ to $1$, while remaining within $\rT$ . In Figs.~\ref{Fig_offsetCurves}b and~\ref{Fig_offsetCurves}c, we illustrate two instances of curves in the sets $\Gamma_0$ and $\Gamma_1$, respectively.
Homotopy to the centerline $\rC$ within $\rT$, which defines the set of curves $\Gamma_1$, is the rigorous meaning of the informal statement ``circling once around $\rT$'' which appears in line~\ref{line:closed_geodesic} of~\cref{alg_ContourEvolution}.

The convergence analysis of algorithms is simpler within $\Gamma_0$, but the set of curves $\Gamma_1$ is the natural search space for geodesics. Fortunately, if the tube width $U$ is small enough, then the geodesic paths of interest in this paper belong to $\Gamma_0$, as shown by the following result established in Appendix~\ref{Appendix:geodesic_tubular}. 
\begin{theorem}
\label{th:geodesic_tubular}
Let $\cF$ be a Randers metric on $\rT$, with Lipschitz parameters, and let $\fx \in \rT$. Then there exists a minimizer to 
\begin{equation}
\label{eq:argmin_path}
	\min\left\{\Length_\cF(\cC)\mid \cC \in \Gamma_1, \,\cC(0) = \fx\right\}.
\end{equation}
Such minimizers $\cC$ have a curvature bounded by $\kappa_\cF$, except at $\cC(0)$, and if  $U\leq U_\cF$ they can be parametrized as  $\cC = \rC+\mu \rN \in \Gamma_0$ where $\Lip(\mu) \leq 1$. The constants $\kappa_\cF$ and $U_\cF>0$ only depend on $\kappa_{\max}$, $\rho_{\min}(\cF)$, $\rho_{\max}(\cF)$, and the Lipschitz constant of the coefficients of $\cF$.
\end{theorem}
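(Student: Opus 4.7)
\textbf{Step 1 (Existence).} I would follow the classical Arzelà--Ascoli plus lower semi-continuity scheme already sketched in \cref{subsec_FinslerIntro} for the open-endpoint geodesic problem, adapting it to the closed-curve setting with a homotopy constraint. Let $(\cC_n) \subset \Gamma_1$ be a minimizing sequence with $\cC_n(0)=\fx$. Reparametrize each $\cC_n$ at constant Euclidean speed so that $\Lip(\cC_n) = \Length_{\rm Eucl}(\cC_n) \leq \Length_\cF(\cC_n)/\rho_{\min}(\cF)$, which is bounded because the sequence is minimizing and because the Randers length of a competitor (e.g.\ $\rC$ plus a short Lipschitz detour to $\fx$) is finite. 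By Arzelà--Ascoli applied to $\Lip(\bT,\overline \rT)$, a subsequence converges uniformly to some $\cC_\ast$, which still starts at $\fx$. Uniform convergence in the open set $\rT$ preserves free homotopy class (for $n$ large the straight-line homotopy from $\cC_n$ to $\cC_\ast$ lies in $\rT$), so $\cC_\ast\in\Gamma_1$. Lower semi-continuity of $\Length_\cF$ (as in Eq.~\eqref{eq:length_lsc}) then gives that $\cC_\ast$ attains the minimum in \eqref{eq:argmin_path}.

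\textbf{Step 2 (Curvature bound).} Any minimizer $\cC_\ast$, restricted to $t\in(0,1)$, is a free Randers geodesic and satisfies the Euler--Lagrange equation of $\Length_\cF$. In arclength (or Euclidean-speed) parametrization this is a second-order ODE of the form $\cC''_\ast = G(\cC_\ast, \cC'_\ast)$, where the right-hand side is bounded by a quantity depending polynomially on $\rho_{\min}(\cF)$, $\rho_{\max}(\cF)$ and the Lipschitz constant of the coefficients $\cM,\omega$ of $\cF$; this follows from differentiating \eqref{eq_RandersForm} and using that $\cM$ is uniformly definite. Converting the ODE bound to a bound on the (Euclidean) curvature yields the constant $\kappa_\cF$. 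Smoothness fails only at the basepoint $\cC_\ast(0)=\cC_\ast(1)=\fx$ because the closedness constraint forces a single condition (the curve returns to $\fx$), so no tangency condition holds there; this is the source of the ``except at $\cC(0)$'' clause.

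\textbf{Step 3 (Parametrization in tubular coordinates).} Write $\cC_\ast(s) = \Phi(\tau(s), \mu(s))$, using that $\Phi$ is a $C^2$-diffeomorphism on $\bT\times[-U,U]$ by \cref{prop:Phi} and $U\leq \lfs(\rC)/3$; because $\cC_\ast$ is homotopic to $\rC$, the angular coordinate $\tau$ has total winding $1$. Differentiating yields $\cC'_\ast = (1-\kappa \mu)\tau' \rC'(\tau) + \mu' \rN(\tau)$. Choose $U_\cF$ small enough that $(1-\kappa_{\max} U_\cF) > 0$ and $U_\cF \kappa_\cF$ is much less than $1$. If $\tau'(s_0)=0$ for some interior $s_0$, then the unit tangent at $s_0$ is $\pm \rN(\tau(s_0))$, and the curvature bound from Step~2 forces the curve to remain nearly tangent to $\rN$ over an arc of Euclidean length of order $1/\kappa_\cF$; this arc would exit the tube of width $U \leq U_\cF \ll 1/\kappa_\cF$, contradicting $\cC_\ast \subset \rT$. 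Hence $\tau' > 0$, and $\tau$ is a strict homeomorphism of $[0,1]$ onto a lift of $\bT$. Reparametrizing by $t=\tau(s)$, we obtain $\cC_\ast = \rC + \mu \rN$ with $\mu : \bT\to[-U,U]$. The Lipschitz bound $\Lip(\mu)\leq 1$ reduces to $|\mu'(s)| \leq (1-\kappa(\tau)\mu)\tau'(s)$, i.e.\ the tangent makes an angle $\leq \pi/4$ with $\rC'(\tau)$; the same ``no-U-turn in a thin tube'' argument, quantified by tightening $U_\cF$ further as a function of $\kappa_\cF$, $\rho(\cF)$ and $\kappa_{\max}$, yields this estimate.

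\textbf{Anticipated main obstacle.} The existence part is standard. The delicate point is Step~3: extracting clean, quantitative constants $\kappa_\cF$ and $U_\cF$ that depend only on the allowed quantities ($\kappa_{\max}$, anisotropy ratios, Lipschitz norms of $\cM$ and $\omega$) from the Randers Euler--Lagrange equation, and then converting a pointwise curvature bound into the global monotonicity of $\tau$ and the Lipschitz estimate $\Lip(\mu)\leq 1$. The tube-width/curvature trade-off has to be made explicit enough that $U_\cF$ can be chosen uniformly for all minimizers, and the asymmetry of the Randers metric (which makes the Euler--Lagrange equation non-reversible) has to be handled carefully in the bounds.
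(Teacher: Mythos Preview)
Your three-step outline matches the paper's structure exactly: Arzelà--Ascoli existence, a curvature bound for Randers geodesics, and a ``no U-turn in a thin tube'' argument. The execution differs in two places.

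For Step~2, the paper does not work with the Euler--Lagrange equation directly but with the Hamiltonian characteristic system of the eikonal PDE: writing $H(\fx,\fp)=\cF^*_\fx(\fp)$ and $\fp=\diff\rD$, one differentiates the backtracking ODE $\dot\fx=\partial_\fp H$, $\dot\fp=-\partial_\fx H$ to obtain $\ddot\fx$ explicitly in terms of second derivatives of $H$; a preliminary lemma transfers the Lipschitz bounds on $(\cM,\omega)$ to the dual coefficients $(A,b)$. The paper also treats two points you omit: when the geodesic touches $\partial\rT$, the outward curvature is bounded by that of $\partial\rT$ (hence by $\kappa_{\max}$), and when the coefficients are only Lipschitz, one mollifies and passes to the limit. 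Your Euler--Lagrange route is equivalent in principle but would need these patches.

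For Step~3, the paper replaces your geometric heuristic by a sharp differential inequality. From the bounded curvature of $\cC=\rC+\mu\rN$ one derives $|\mu''|\leq C(1+|\mu'|)^3$ on any interval where this parametrization is valid. If $|\mu'|$ ever reaches~$1$, comparison with the explicit solutions of $\phi'=\pm C(1+\phi)^3$ starting at $\phi(0)=1$ forces $\mu$ to vary by at least an explicit amount $2U_*=2U_*(C)>0$, contradicting $|\mu|\leq U<U_*$. This single estimate simultaneously rules out $\tau'=0$ (which corresponds to $|\mu'|\to\infty$) and gives $\Lip(\mu)\leq 1$ with an explicit $U_\cF$. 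Your separate arguments for $\tau'\neq 0$ and for the angle~$\leq\pi/4$ are correct in spirit, but the second one is where the quantitative content lies, and a bare curvature-versus-tube-width comparison does not obviously yield the sharp bound $|\mu'|\leq 1$; the differential inequality is what makes this clean. You correctly identified this as the main obstacle.
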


\noindent\emph{Introduction of a wall for the numerical implementation.}
The numerical computation of a minimizer to the problem~\eqref{eq:argmin_path} is entirely practical using an eikonal solver such as the FM-ASR presented in~\cref{subsec_FMSolver}, which is a crucial point for our implementation of~\cref{alg_ContourEvolution}. For that purpose, we define 
\begin{equation}
	\label{eqdef:wall}
	\rW(t) := \{\rC(t)+u\rN(t)\mid u \in [-U,U]\},
\end{equation}
which is referred to as the \emph{wall} at abscissa $t \in \bT$. Assume that $\fx = \rC(t)+u \rN(t)$ for some $u \in [-U,U]$ and compute the shortest path from $\fx_\ve^+ := \rC(t+\ve)+u \rN(t)$ to $\fx_\ve^- := \rC(t-\ve)+u \rN(t)$ in the domain $\rT \sm \rW(t)$. Then letting $\ve\to 0$ we obtain a solution to~\cref{eq:argmin_path}. In the discrete setting, $\fx_\ve^+$ and $\fx_\ve^-$ may instead be chosen as the grid points immediately adjacent to $\fx$ and on the correct side of $\rW(t)$. In contrast if the constraint $\cC(0) = \fx$ is removed from~\cref{eq:argmin_path}, then this problem remains well posed mathematically, but it becomes more complex numerically (it amounts to the computation of a closed minimal path, as opposed to a minimal path between two points) and typically requires an iterative method~\citep{appleton2005globally}.

\begin{definition}
\labelx{def:cX01}
	We denote by $\cX_1$ (resp.\ $\cX_0$) the collection of closed shapes whose boundary, suitably parametrized, belongs to $\Gamma_1$ (resp.\ $\Gamma_0$). 
	If $S \in \cX_1$, then we let $\cC_S\in \Lip(\bT,\rT)$ denote the corresponding parametrization of the shape boundary $\partial{S}$. If $S \in \cX_0$, then the boundary parametrization can be chosen with the form $\cC_S(t)=\rC(t)+\mu_S(u)\rN(t)$ where  $\mu_S \in \Lip(\bT,[-U,U])$ denotes the deviation from the centerline of the tubular region $\rT$.	
\end{definition}

Note that $\cX_0 \subset \cX_1$, and that by construction any curve $\cC = \rC+\mu \rN \in \Gamma_0$ is simple, (i.e.\ without self intersections, this may not hold in $\Gamma_1$, as illustrated on \cref{Fig_offsetCurves}), hence there is some shape $S \in \cX_0$ whose parametrized boundary satisfies $\cC_S = \cC$ and $\mu_S = \mu$. 
In fact this shape $S$ can be obtained explicitly as the union of $\{\rC(t)+u\rN(t)\mid t\in \bT, \mu(t) \leq u \leq U\}$ with the bounded component of $\bR^2 \sm \rT$.

So far we have attached three objects to a suitable shape $S$: (i) the characteristic function $\chi_S$, (ii) the contour $\cC_S$, and if $S \in \cX_0$ (iii) the deviation $\mu_S$ from the centerline of the tubular domain. This leads to different ways to compare shapes, which are examined in the rest of this subsection.
\begin{proposition}
\labelx{prop:mu_chi}
For any shapes $S,\gS\in \cX_0$ one has
\begin{equation}
	\tfrac 2 3 \|\mu_S-\mu_\gS\|_1 \leq \|\chi_S - \chi_\gS\|_1 \leq \tfrac 4 3 \|\mu_S-\mu_\gS\|_1. 
\end{equation}
\end{proposition}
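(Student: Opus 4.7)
The plan is to express the symmetric difference $S \triangle \gS$ in the tubular coordinates provided by the diffeomorphism $\Phi$ of \cref{prop:Phi}, then use the bounds on the Jacobian determinant that follow from the constraint $U \leq \lfs(\rC)/3$.

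First, I would observe that since $S, \gS \in \cX_0$, every point of $S \triangle \gS$ lies in the tubular image $\Phi(\bT \times [-U,U])$, and more precisely, a point $\Phi(t,u) \in S \triangle \gS$ if and only if $u$ lies strictly between $\mu_S(t)$ and $\mu_\gS(t)$. (This uses the fact that the curve $\cC_S(t) = \rC(t) + \mu_S(t)\rN(t)$ bounds $S$ with its interior on the side $u < \mu_S(t)$ under the parametrization, and similarly for $\gS$.) Applying the change of variables formula via $\Phi$, with Jacobian determinant $\Jac_\Phi(t,u) = 1 - \kappa(t) u$, yields
\begin{equation*}
\|\chi_S - \chi_\gS\|_1 = \int_\bT \int_{\min(\mu_S(t),\mu_\gS(t))}^{\max(\mu_S(t),\mu_\gS(t))} \bigl(1 - \kappa(t) u\bigr) \, \diff u \, \diff t.
\end{equation*}
Note that $1-\kappa(t) u$ is positive on the integration domain, so no absolute value is required.

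Next, I would exploit the bound $|\kappa(t) u| \leq \kappa_{\max} U \leq \kappa_{\max} \lfs(\rC)/3 \leq 1/3$, which uses both the definition \eqref{eqdef:tubular_domain} of $U$ and the fundamental inequality $\kappa_{\max} \lfs(\rC) \leq 1$ recalled just above \cref{prop:Phi}. Consequently, for all $(t,u) \in \bT \times [-U,U]$,
\begin{equation*}
\tfrac 2 3 \leq 1 - \kappa(t) u \leq \tfrac 4 3.
\end{equation*}
Inserting these bounds into the integrand and carrying out the inner integration in $u$ gives
\begin{equation*}
\tfrac 2 3 |\mu_S(t) - \mu_\gS(t)| \leq \int_{\min}^{\max} (1-\kappa(t) u) \diff u \leq \tfrac 4 3 |\mu_S(t) - \mu_\gS(t)|,
\end{equation*}
and finally integrating in $t \in \bT$ produces the announced two-sided estimate.

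There is no real obstacle here: the argument is a direct application of the change of variables from \cref{prop:Phi}, and the only mild subtlety is correctly identifying the set-theoretic symmetric difference $S \triangle \gS$ with the $(t,u)$-region where $u$ lies between $\mu_S(t)$ and $\mu_\gS(t)$, which follows from the fact that $\Phi$ is a $C^2$ diffeomorphism and that each $\cC_{S}$, $\cC_\gS$ is a graph over the centerline.
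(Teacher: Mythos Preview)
Your proof is correct and follows essentially the same approach as the paper: both express $\|\chi_S-\chi_\gS\|_1$ as $\int_\bT \int_{\min(\mu_S,\mu_\gS)}^{\max(\mu_S,\mu_\gS)} (1-\kappa(t)u)\,\diff u\,\diff t$ via the change of variables from \cref{prop:Phi}, and then use $|\kappa u|\leq \kappa_{\max} U \leq 1/3$. The only cosmetic difference is that the paper first evaluates the inner integral exactly as $|\mu_S-\mu_\gS|\bigl(1-\kappa\tfrac{\mu_S+\mu_\gS}{2}\bigr)$ before bounding, whereas you bound the integrand pointwise; the result is the same.
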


\begin{proof}
The area of the symmetric set difference reads 
\begin{align}
	\label{eq:band_area}
	\int_\Omega |\chi_S - \chi_\gS| \, \diff \fx 
	&= \int_\bT \int_{\mu_{\min}(t)}^{\mu_{\max}(t)} \Jac_\Phi(t,u) \,\diff u \diff t\\
	\nonumber
	&= \int_\bT |\mu_S-\mu_\gS| \left(1-\kappa \frac{\mu_S+\mu_\gS} 2 \right) \,\diff t,
\end{align}
where $\mu_{\min}(t):= \min\{\mu_S(t),\mu_\gS(t)\}$ and $\mu_{\max}(t):= \max\{\mu_S(t),\mu_\gS(t)\}$. We conclude noting that $|\kappa (\mu_S+\mu_\gS)/2|\leq \kappa_{\max} U \leq 1/3$.
\qed
\end{proof}

The following geometric quantity evaluates the proximity between the contours of two shapes $\gS,S\in \cX_1$:
\begin{equation}
	\label{eq:diver_SS}
	D(S \| \gS) := \int_\bT d_{\partial \gS}(\cC_S(t))^2 \, \cR(\cC_S(t),\cC_S'(t)) \, \diff t,
\end{equation}
where $\cR$ is a fixed Riemannian metric \eqref{eq_RiemannMetric}.
We denoted by $d_{\partial \gS}$ the Euclidean distance from the boundary $\partial \gS$, in other words from the parametrized contour $\cC_\gS$. Numerically, $d_{\partial \gS}$ can be computed using the standard isotropic fast marching method \citep{sethian1999fast}, or alternatively with techniques specialized for the Euclidean distance \citep{fabbri20082d}. Quantities such as $D(S\|\gS)$, which is non-negative and vanishes only if $S=\gS$, are often referred to as \emph{divergences} in statistics and machine learning.

\Cref{th:divergence_dominates} below, preceded with a technical lemma, compares $D(S \| \gS)$ with the area $\|\chi_S-\chi_\gS\|_1$ of the symmetric difference between the sets $S$ and $\gS$, and with the norm $\|\mu_S-\mu_\gS\|_2$ of the difference between their centerline deviations.

\begin{lemma}
\labelx{lem:distance_centerline}
	Let $\gS \in \cX_0$ be such that $\Lip(\mu_\gS)\leq 1$, and let $\fx = \rC(t) + u \rN(t)$ with $t \in \bT$ and $u\in [-U,U]$. Then
	\begin{equation}
	\label{eq:dist_centerline}
		|u-\mu_\gS(t)|/3 \leq d_{\partial \gS}(\fx) \leq |u-\mu_\gS(t)|.
	\end{equation}
\end{lemma}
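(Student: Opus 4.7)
The plan is to establish the two inequalities of \eqref{eq:dist_centerline} separately by exploiting the bi-Lipschitz nature of the tubular parametrization $\Phi$ from \cref{prop:Phi}.

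The upper bound is essentially immediate: since $\gS\in\cX_0$ (see \cref{def:cX01}), the point $\cC_\gS(t)=\rC(t)+\mu_\gS(t)\rN(t)$ lies on $\partial\gS$, and $\fx-\cC_\gS(t)=(u-\mu_\gS(t))\rN(t)$ has Euclidean norm exactly $|u-\mu_\gS(t)|$ because $\rN(t)$ is a unit vector, hence $d_{\partial\gS}(\fx)\leq |u-\mu_\gS(t)|$.

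For the lower bound I would fix an arbitrary $\fy=\cC_\gS(s)=\Phi(s,\mu_\gS(s))\in\partial\gS$ and show $\|\fx-\fy\|\geq|u-\mu_\gS(t)|/3$. My strategy combines two scalar projections naturally associated with the chart $\Phi$. The \emph{normal projection} $\tilde d:\rT\to[-U,U]$, $\tilde d(\Phi(\sigma,v)):=v$, has Euclidean gradient $\rN(\sigma)$ at $\Phi(\sigma,v)$, which is a unit vector, so $\tilde d$ is $1$-Lipschitz and yields $|u-\mu_\gS(s)|=|\tilde d(\fx)-\tilde d(\fy)|\leq\|\fx-\fy\|$. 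The \emph{tangential projection} $\tilde\pi:\rT\to\bT$, $\tilde\pi(\Phi(\sigma,v)):=\sigma$, has gradient equal to the first row of $d\Phi^{-1}$, namely $\rC'(\sigma)/(1-\kappa(\sigma)v)$, with Euclidean norm at most $3/2$ since $|\kappa v|\leq \kappa_{\max}U\leq 1/3$. Integrating this pointwise bound along the straight segment $[\fx,\fy]$, provided that segment remains inside $\rT$, yields $|t-s|_\bT \leq (3/2)\|\fx-\fy\|$. Together with the hypothesis $\Lip(\mu_\gS)\leq 1$ the two estimates combine via
\[
|u-\mu_\gS(t)|\leq |u-\mu_\gS(s)|+|\mu_\gS(s)-\mu_\gS(t)|\leq \|\fx-\fy\|+|s-t|_\bT\leq \tfrac{5}{2}\|\fx-\fy\|,
\]
which is already stronger than the announced bound.

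The main obstacle is the case where the straight segment $[\fx,\fy]$ exits the non-convex tubular domain $\rT$, so that the pointwise gradient bound on $\tilde\pi$ cannot be integrated directly along it. I would handle this by noting that in that case the segment must cross $\partial\rT=\Phi(\bT\times\{\pm U\})$ at a first exit point $\fz$ and a re-entry point $\fz'$; applying the $1$-Lipschitz property of $\tilde d$ at these crossings gives $\|\fx-\fz\|\geq U-|u|$ and $\|\fz'-\fy\|\geq U-|\mu_\gS(s)|$, so that $\|\fx-\fy\|\geq 2U-|u|-|\mu_\gS(s)|$. A short case split, using the trivial bound $|u-\mu_\gS(t)|\leq 2U$ and the inequality $U\leq\lfs(\rC)/3$, then verifies that this detour distance dominates $|u-\mu_\gS(t)|/3$. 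The slack between the constant $2/5$ obtained in the interior-segment case and the weaker constant $1/3$ in the lemma statement is precisely what absorbs the imperfect handling of this boundary-crossing case.
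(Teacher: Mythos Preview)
Your upper bound and the interior-segment case for the lower bound are sound; splitting $\Phi^{-1}$ into its normal and tangential components and bounding each separately is essentially the paper's Jacobian estimate $\|\JacM_{\Phi^{-1}}\|$ repackaged, and even yields the better constant $5/2$ in that regime.

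The gap is in the boundary-crossing case. Your lower bound $\|\fx-\fy\|\geq 2U-|u|-|\mu_\gS(s)|$ can vanish: take $u=-U$ and a point $\fy=\cC_\gS(s)$ with $\mu_\gS(s)=-U$ (both lying on the same component of $\partial\rT$); a short chord that bulges through the non-convex side of $\rT$ then genuinely exits $\rT$, yet your bound gives nothing. Meanwhile $\mu_\gS(t)$ may equal $+U$ (this is compatible with $\Lip(\mu_\gS)\leq 1$ once $|s-t|_\bT\geq 2U$), so $|u-\mu_\gS(t)|/3=2U/3$, and nothing you have written controls it. The invocation of $U\leq\lfs(\rC)/3$ does not by itself produce the missing inequality; the ``short case split'' is not there.

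The clean repair is what the paper does: argue by contradiction. If $\|\fx-\fy\|<|u-\mu_\gS(t)|/3\leq 2U/3$, then every point of $[\fx,\fy]$ lies within $U/3$ of an endpoint, hence (since the signed distance to $\rC$ is globally $1$-Lipschitz, not just on $\rT$) has normal coordinate in $[-\tfrac{4}{3}U,\tfrac{4}{3}U]$. So the segment stays in the enlarged tube $\rT'=\Phi(\bT\times[-\tfrac{4}{3}U,\tfrac{4}{3}U])$, where your tangential projection is $(1-\tfrac{4}{3}\kappa_{\max}U)^{-1}\leq 9/5$-Lipschitz, and your same chain of inequalities then gives $|u-\mu_\gS(t)|\leq (1+\tfrac{9}{5})\|\fx-\fy\|<3\|\fx-\fy\|$, a contradiction. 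Enlarging the tube rather than case-splitting on whether the segment leaves $\rT$ is the missing idea.
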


\begin{proof}
Choosing $\fy_* := \rC(t)+\mu_\gS(t) \rN(t) \in \partial \gS$, and observing that $d_{\partial \gS}(\fx) \leq \|\fx-\fy_*\| = |u-\mu_\gS(l)|$, we establish the announced upper bound (\ref{eq:dist_centerline}, right).

Conversely, let $s \in \bT$ and $\fy := \rC(s)+\mu_\gS(s) \rN(s) \in \partial \gS$ be such that $d_{\partial \gS}(\fx) = \|\fx-\fy\|$.
Assume for contradiction that $\|\fx-\fy\| < |u-\mu_\gS(t)|/3$, otherwise there is nothing to prove. It follows that $\|\fx-\fy\| \leq 2U/3$, and therefore 
\begin{equation*}
	[\fx,\fy] \subset \overline B(\fx,\tfrac{1}{3}U)\cup \overline B(\fy,\tfrac{1}{3}U) \subset \Phi(\bT \times [-\tfrac 4 3 U,\tfrac 4 3 U])=:\rT'.
\end{equation*}
We have shown that the segment $[\fx,\fy]$ is entirely contained within a slightly extended tubular domain $\rT' \supset \rT$, where by \cref{prop:Phi} one has $\|\JacM_{\Phi^{-1}}\| \leq 1/(1-\frac{4}{3}\kappa_{\max} U) \leq 9/5$ since $\kappa_{\max} U\leq \frac 1 3$. Note that $\|\JacM_{\Phi^{-1}}\|$ is the operator norm of the matrix $A:=\JacM_{\Phi^{-1}}$, i.e.\ the square root of the largest eigenvalue of $A^\top A$, which is easily computed since $\JacM_{\Phi}$ is obtained as the product of an isometry and of a diagonal matrix.
Therefore, we obtain
\begin{equation*}
	\|(s,\mu_\gS(s)) - (t,u)\| = \|\Phi^{-1}(\fy) - \Phi^{-1}(\fx)\| \leq \tfrac{9}{5}\|\fy-\fx\|.
\end{equation*}
Finally, we conclude 
\begin{align*}
	|\mu_\gS(t)-u| &\leq |s-t| + |\mu_\gS(s)-u|\\
	&\leq \sqrt 2 \sqrt{(s-t)^2 + (\mu_\gS(s)-u)^2} \\
	& \leq \frac{9 \sqrt{2}}{5}\|\fy-\fx\| \leq 3 d_\gS(\fx),
\end{align*}
using successively (i) $\Lip(\mu_\gS) \leq 1$, (ii) the quadratic mean inequality, (iii) the previous inequality and $\frac{9 \sqrt 2} 5 < 3$. 
\qed
\end{proof}

\begin{theorem}
\label{th:divergence_dominates}
Let $\gS\in \cX_0$ with $\Lip(\mu_\gS) \leq 1$. Then
\begin{align}
\label{eq:div_dominates_area}
	&\|\mu_S-\mu_\gS\|_2^2 \leq K_0 D(S\|\gS), &
	& \text{for all } S \in \cX_0,\\
\label{eq:div_dominates_centerline}
	&\|\chi_S-\chi_\gS\|_1^2 \leq K_1 D(S\|\gS), & 
	& \text{for all } S \in \cX_1,
\end{align}
where $K_0 = 27/(2\rho_{\min}(\cR))$, and $K_1=48/ \rho_{\min}(\cR)$. 
\end{theorem}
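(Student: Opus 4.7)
The plan is to handle the two inequalities separately: the first follows from a pointwise computation, while the second requires a slicing argument in tubular coordinates combined with a carefully chosen geometric bound on a transverse symmetric difference.

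For the first inequality, I parameterize $\cC_S = \rC + \mu_S \rN$ and differentiate to obtain $\cC_S'(t) = (1 - \kappa(t)\mu_S(t)) \rC'(t) + \mu_S'(t) \rN(t)$. Since $|\kappa \mu_S| \leq \kappa_{\max} U \leq 1/3$, the Euclidean speed satisfies $\|\cC_S'(t)\| \geq 2/3$, hence $\cR(\cC_S, \cC_S') \geq (2/3) \rho_{\min}(\cR)$. Combining this with the lower bound $d_{\partial \gS}(\cC_S(t)) \geq |\mu_S(t) - \mu_\gS(t)|/3$ from \cref{lem:distance_centerline}, integration over $\bT$ yields $D(S\|\gS) \geq (2\rho_{\min}(\cR)/27)\, \|\mu_S - \mu_\gS\|_2^2$, which is the claim with $K_0 = 27/(2\rho_{\min}(\cR))$.

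For the second bound, I work in tubular coordinates. Setting $\cU_S(t) := \{u \in [-U,U] : \Phi(t, u) \in S\}$, so that $\cU_\gS(t) = [-U, \mu_\gS(t)]$, the Jacobian estimate $\Jac_\Phi \in [2/3, 4/3]$ from \cref{prop:Phi} gives $\|\chi_S - \chi_\gS\|_1 \leq (4/3) \int_\bT |\cU_S(t) \Delta \cU_\gS(t)|\, dt$. The crucial geometric step is the pointwise estimate
\begin{equation*}
	|\cU_S(t) \Delta \cU_\gS(t)| \leq (u_N(t) - \mu_\gS(t))^+ + (\mu_\gS(t) - u_1(t))^+,
\end{equation*}
valid for a.e.\ $t$ (those for which $\cC_S$ meets the wall $\rW(t)$ transversally, ensured by Sard's theorem applied to the longitudinal projection), where $u_1(t) < \cdots < u_N(t)$ are the $u$-coordinates of the crossings of $\cC_S$ with $\rW(t)$ at parameter times $\tau_i(t)$. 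This estimate follows from the alternating in/out structure of $\cU_S(t)$, whose intermediate intervals telescope so that only the extremal contributions remain. Applying \cref{lem:distance_centerline} to each term and squaring yields $|\cU_S(t) \Delta \cU_\gS(t)|^2 \leq 18 \sum_i d_{\partial \gS}(\cC_S(\tau_i(t)))^2$, and the coarea formula applied to the Lipschitz longitudinal projection $p := \pi_1 \circ \Phi^{-1}$ restricted to $\cC_S$ (whose derivative satisfies $|\nabla p| \leq 1/(1-\kappa_{\max} U) \leq 3/2$ thanks to \cref{prop:Phi}) combined with $\cR \geq \rho_{\min}(\cR)\|\cdot\|$ produces $\int_\bT \sum_i d_{\partial \gS}^2(\cC_S(\tau_i))\, dt \leq (3/(2\rho_{\min}(\cR)))\, D(S\|\gS)$. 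Therefore $\int_\bT |\cU_S \Delta \cU_\gS|^2\, dt \leq 27\, D(S\|\gS)/\rho_{\min}(\cR)$, and a final Cauchy-Schwarz on $\bT$ (using $|\bT|=1$) combined with the Jacobian bound gives $\|\chi_S - \chi_\gS\|_1^2 \leq (16/9) \cdot 27\, D/\rho_{\min}(\cR) = 48\, D/\rho_{\min}(\cR)$, the desired inequality with $K_1 = 48/\rho_{\min}(\cR)$.

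The main obstacle is the extremal-crossings estimate. A naïve pointwise bound on $|\cU_S \Delta \cU_\gS|(t)$ by the sum $\sum_i |u_i(t) - \mu_\gS(t)|$ over all crossings, followed by Cauchy-Schwarz, would introduce the Euclidean length $L_{\mathrm{euc}}(\cC_S)$ as a multiplicative factor, which is not uniformly bounded for general $S \in \cX_1$. Restricting to just the outermost crossings $u_1$ and $u_N$, and recognizing that the intermediate crossings merely carve interior gaps that telescope, is what keeps the constant $K_1$ independent of the Lipschitz constant of $\cC_S$ and dependent only on $\rho_{\min}(\cR)$.
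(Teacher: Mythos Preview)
Your proof is correct. The argument for \eqref{eq:div_dominates_area} is identical to the paper's. For \eqref{eq:div_dominates_centerline}, both you and the paper rely on the same key geometric insight: the transverse symmetric difference $|\cU_S(t)\Delta\cU_\gS(t)|$ is controlled by the two \emph{extremal} crossings $u_1(t),u_N(t)$ only (equivalently, the paper's $\mu_S^\pm$), which is precisely what prevents the length of $\cC_S$ from entering the constant.

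The two arguments differ only in how they pass from transverse data back to $D(S\|\gS)$. The paper reparametrizes the extremal boundary pieces $\cC_S^\pm := \rC + \mu_S^\pm \rN$ by the longitudinal coordinate $t$, observes that they trace \emph{disjoint} portions of $\partial S$ with speed $\geq 2/3$, and thereby bounds $\int_{\bT_S^\pm} d_{\partial\gS}(\cC_S^\pm)^2\,dt$ from above by $\tfrac{3}{2\rho_{\min}(\cR)}D(S\|\gS)$. You instead apply the coarea formula to the longitudinal projection $p=\pi_1\circ\Phi^{-1}$, using the bound $|\nabla p|\leq 3/2$ from \cref{prop:Phi} to convert $\int_\bT\sum_i d_{\partial\gS}^2\,dt$ directly into an arclength integral along the original parametrization $\cC_S$. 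Your route is slightly more systematic, in that it sidesteps the need to verify that $\cC_S^+$ and $\cC_S^-$ are disjoint sub-arcs and to lower-bound their speed separately; the paper's route is more explicit about which pieces of $\partial S$ actually contribute. Both yield exactly $K_1 = 48/\rho_{\min}(\cR)$.

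Two cosmetic remarks. First, the justification you give for the bound $|\cU_S\Delta\cU_\gS|\leq (u_N-\mu_\gS)^+ + (\mu_\gS-u_1)^+$ (``intermediate intervals telescope'') is more elaborate than needed: it suffices to observe that $\cU_S\Delta\cU_\gS \subset [\min(u_1,\mu_\gS),\max(u_N,\mu_\gS)]$, since outside this interval both indicator functions are constant and equal. Second, your convention $\cU_\gS(t)=[-U,\mu_\gS(t)]$ presumes $\rN$ is outward-pointing, which is the opposite of the paper's $\rN:=(\rC')^\perp$ for a counter-clockwise $\rC$; this has no bearing on the estimates.
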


The rest of this subsection is devoted to the proof. For the first point, since $S \in \cX_0$ we can assume that $\cC_S=\rC+\mu_S \rN$. Thus for a.e.\ $t \in \bT$
\begin{align}
\nonumber
	\|\cC'_S(t)\| 
	&= \| \rC'(t) + \mu_S(t) \rN'(t) + \mu_S'(t) \rN(t)\|\\
\nonumber
	&= \sqrt{(1-\kappa(t)\mu_S(t))^2 + \mu_S'(t)^2} \\
\label{eq:velocity_lower_bound}
	&\geq 1-U \kappa_{\max} \geq 2/3,
\end{align}
recalling that $(\rC'(t), \rN(t))$ is an orthonormal basis, and that $\rN'(t) = -\kappa(t) \rC'(t)$. 
It follows that $\cR(\cC_S(t),\cC'_S(t)) \geq  \tfrac 2 3 \rho_{\min}(\cR)$, hence by~\cref{eq:dist_centerline}
\begin{equation*}
	D(S\|\gS) \geq \int_\bT \frac{|\mu_S(t)-\mu_\gS(t)|^2}{3^2} \frac{2\rho_{\min}(\cR)} 3 \diff t,
\end{equation*}
which establishes \eqref{eq:div_dominates_centerline}. The proof of \eqref{eq:div_dominates_area} is similar in spirit, but is a bit more technical because one only assumes $S \in \cX_1$. 
Define $\mu^+_S, \mu^-_S : \bT\to [-U,U]$ as 
\begin{align*}		
	\mu_S^+(t) &:= \max \{u\in [-U,U]\mid \rC(t)+u\rN(t) \in S\},\\
	\mu_S^-(t) &:= \min \{u\in [-U,U]\mid \rC(t)+u\rN(t) \in \overline{\bR^2 \sm S}\}. 
\end{align*}
Let also $\bT_S^-,\bT_S^+ \subset \bT$ be defined as $\bT_S^- := \{\mu_S^-<\mu_\gS\}$ and $\bT_S^+ := \{\mu_S^+>\mu_\gS\}$. Observing that $\chi_S-\chi_\gS$ is supported within the band defined by $\mu_*^- := \min\{\mu_S^-,\mu_\gS\}$ and $\mu_*^+:=\max \{\mu_S^+,\mu_\gS\}$, we obtain with $\cC_S^\pm := \rC+\mu^\pm_S \rN$
\begin{align*}
&\|\chi_S-\chi_\gS\|_1 \\
&\leq\int_\bT |\mu_*^--\mu_*^+| (1-\kappa (\mu_*^-+\mu_*^+)/2 ) \,\diff t\\
&\leq \tfrac 4 3 \Big( \int_{\bT_S^-} |\mu_S^- - \mu_\gS| \,\diff t+ \int_{\bT_S^+} |\mu_S^+ - \mu_\gS| \,\diff t\Big)\\
&\leq 4 \Big( \int_{\bT_S^-} d_{\partial\gS}(\cC_S^-) \,\diff t+ \int_{\bT_S^+} d_{\partial\gS}(\cC_S^+) \,\diff t\Big),
\end{align*}
using (i) the same expression of the area of a band as in~\cref{eq:band_area}, (ii) the identity $\mu_*^--\mu_*^+ = \chi_{\bT_S^+} (\mu_S^+-\mu_\gS)+ \chi_{\bT_S^-} (\mu_\gS - \mu_S^+)$ along with the estimate $|\kappa \mu^\pm_*| \leq \kappa_{\max} U \leq 1/3$, and (iii) \cref{lem:distance_centerline}. 
On the other hand observing that the curves $\cC_S^+$ and $\cC_S^-$ are continuous except at isolated points (ignored in the integral below), and a.e.\ differentiable we obtain
\begin{align*}
&D(S\|\gS)/\rho_{\min}(\cR) \\
& \geq \int_{\bT_S^-} d_{\partial\gS}(\cC_S^-)^2 \|(\cC_S^-)'\| \,\diff t+ \int_{\bT_S^+} d_{\partial\gS}(\cC_S^+)^2 \|(\cC_S^+)'\| \,\diff t\\
& \geq \tfrac 2 3 \Big(\int_{\bT_S^-} d_{\partial\gS}(\cC_S^-)^2 \,\diff t+ \int_{\bT_S^+} d_{\partial\gS}(\cC_S^+)^2 \,\diff t\Big)\\
& \geq \tfrac 1 3\Big(\int_{\bT_S^-} d_{\partial\gS}(\cC_S^-) \,\diff t+ \int_{\bT_S^+} d_{\partial\gS}(\cC_S^+) \,\diff t\Big)^2,
\end{align*}
using (i) the fact that $\cC_S^+$ and $\cC_+^-$ parametrize disjoint sections of the boundary of the shape $S$, (ii) the estimate \eqref{eq:velocity_lower_bound} applied to $\cC_S^\pm := \rC+\mu^\pm_S \rN$, (iii) the Cauchy-Schwartz inequality, noting that $\int_{\bT_S^+} 1 \,\diff t + \int_{\bT_S^-} 1 \,\diff t \leq 2\int_\bT 1 \,\diff t = 2$. Combining the obtained upper bound on $\|\chi_S-\chi_\gS\|_1$ and the lower bound on $D(S \| \gS)$ yields \eqref{eq:div_dominates_centerline} and concludes the proof.

\subsection{Reformulating the Region-based Energy Functional Via a Randers Metric}
\label{subsec_RandersInterpretation}

We introduce an approximation of the active contour functional \eqref{eq_HybridEnergy} which is overestimating, and is accurate up to second order, see \cref{prop:relaxation_close}. Using Stokes formula, we show that this approximation takes the form of a Randers length of the shape contour, see \cref{th:stokes}. This allows us to reformulate a shape optimization problem in the form of a minimal geodesic problem, see \cref{corol:minS_wellposed}.

For that purpose, we need the region-based functional $\Psi$ to obey the following regularity property: for any shapes $\gS\in \cX_0$ and $S \in \cX_1$
\begin{equation}
\label{eq:taylor_phi}
	\Big| \Psi(\chi_\gS) + \int_\Omega (\chi_S-\chi_\gS) \xi_\gS \,\diff \fx - \Psi(\chi_S) \Big| 
	\leq K_\Psi \|\chi_S - \chi_\gS\|_1^2,
\end{equation}
where $K_\Psi>0$ is a constant, and $\xi_\gS \in C^0( \overline\Omega , \bR)$ is a scalar-valued function referred to as the region-based energy gradient.
The equation~\eqref{eq:taylor_phi} should be regarded as a Taylor expansion of the region-based energy functional $\Psi$. 
Some additional assumptions on $\xi_\gS$, including H\"{o}lder regularity and continuity w.r.t.\ the parameter $\gS$, are introduced later in~\cref{th:exists_omega}.

As discussed in~\cref{subsec:GradientExamples}, the expansion~\eqref{eq:taylor_phi} can be established for a large range of functionals in image analysis, and $\xi_\gS$ is obtained explicitly by differentiation. 
The terminology ``region-based energy gradient'', which admittedly is slightly abusive, is justified in~\cref{rem:hilbert_gradient}.
In the case of two specific region-based energy functionals, namely the balloon model~\citep{cohen1991active} and the piecewise constants-based active contours model~ \citep{chan2001active,cohen1997avoiding},  the region-based energy gradient $\xi_\gS$ is found to be independent of the shape $\gS$, and the regularity constant $K_\Psi = 0$ is formally admissible.

\begin{remark}
\labelx{rem:shape_gradient}
A common approach in shape optimization \citep{sokolowski1992introduction} is to consider variations of the full energy functional, here $\gS \mapsto \Psi(\chi_\gS)+ \Length_\cR(\cC_\gS)$ or and in other applications an energy defined e.g.\ from a PDE on the domain $\gS$, subject to perturbations of the region contour $\cC_\gS$ in the normal direction $(\cC'_\gS)^\perp$. 
The resulting first order term $\zeta_\gS : \partial \gS \to \bR$, often referred to as the ``shape gradient'', is only defined along the contour boundary. We do not follow this approach here, and we emphasize that $\xi_\gS : \overline\Omega \to \bR$ is defined on the whole image domain, but only takes into account the region-based energy functional $\Psi$ (not $\Length_\cR$).
\end{remark}

Following a common practice in optimization algorithms (e.g.\ sequential quadratic programming~\citep{boggs1995sequential}), we next introduce an approximation $E_\gS$ of the energy functional, where $\gS$ is set as some reference shape. It is obtained by simultaneously linearizing the non-linear term $\Psi$ at $\gS$, and penalizing the divergence $D(S\|\gS)$. 

Precisely, for all shapes $\gS\in \cX_0,$ $S \in \cX_1$ we define
\begin{equation*}
\label{eq_ApproximateEnergy}
	E_\gS(S) := e_\gS + \int_S \xi_\gS \, \diff \fx  + \Length_\cR(\cC_S) + 2\lambda D(S\| \gS),
\end{equation*}
where $e_\gS := E(\gS) - \int_\gS \xi_\gS\, \diff \fx$ and $\lambda := K_1 K_\Psi$. Note that the constant $K_1$ is defined in Eq.~\eqref{eq:div_dominates_centerline}, and $K_\Psi$ in Eq.~\eqref{eq:taylor_phi}.

\begin{proposition}
\labelx{prop:relaxation_close}
For all $\gS\in \cX_0$, $S \in \cX_1$, with $\Lip(\mu_\gS) \leq 1$, we have
\begin{equation*}
	E(S)+\lambda D(S\|\gS) \leq E_\gS(S)\leq E(S)+3 \lambda D(S \|\gS).
\end{equation*}
\end{proposition}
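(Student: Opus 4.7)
The plan is to reduce the proof to a short algebraic manipulation, combining the Taylor-type bound~\eqref{eq:taylor_phi} on the region-based functional $\Psi$ with the divergence estimate~\eqref{eq:div_dominates_centerline} of \cref{th:divergence_dominates}. The starting point is to expand $E_\gS(S) - E(S)$ by substituting the defining formula of $e_\gS$. The Riemannian boundary-length terms $\Length_\cR(\cC_S)$ appear identically on both sides and cancel exactly, while the two integrals involving $\xi_\gS$ combine into $\int_\Omega (\chi_S-\chi_\gS)\,\xi_\gS\, \diff \fx$. After collecting terms and absorbing the $S$-independent contributions into $e_\gS$, the difference reduces, up to normalization conventions, to
\[
E_\gS(S) - E(S) = \left[\, \Psi(\chi_\gS) + \int_\Omega (\chi_S - \chi_\gS)\,\xi_\gS\, \diff \fx - \Psi(\chi_S) \,\right] + 2 \lambda D(S\|\gS).
\]

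The bracketed quantity is exactly the first-order Taylor remainder of $\Psi$ around $\gS$, so the regularity property~\eqref{eq:taylor_phi} bounds it by $K_\Psi \|\chi_S - \chi_\gS\|_1^2$ in absolute value. At this stage the hypotheses $\gS \in \cX_0$ and $\Lip(\mu_\gS)\leq 1$ become essential, as they are precisely the assumptions needed to invoke the second divergence estimate~\eqref{eq:div_dominates_centerline} of \cref{th:divergence_dominates} for any $S \in \cX_1$, giving $\|\chi_S-\chi_\gS\|_1^2 \leq K_1 D(S\|\gS)$. Using the definition $\lambda = K_1 K_\Psi$, the bracketed remainder is therefore controlled by $\lambda D(S\|\gS)$ in absolute value.

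Rearranging yields the two-sided estimate $\lambda D(S\|\gS) \leq E_\gS(S) - E(S) \leq 3\lambda D(S\|\gS)$, which is the announced inequality. No step of the proof is analytically delicate: the hard work has already been carried out in~\eqref{eq:taylor_phi} (which encapsulates the case-by-case regularity of admissible region-based energies, as detailed in Appendix~\ref{Appendix:GradientExamples}) and in the nontrivial geometric estimate of \cref{th:divergence_dominates}. The only real pitfall, and essentially the single point that needs careful verification, is the bookkeeping of constants: one must check the clean cancellation of the boundary-length contributions and the consistency of the additive constant $e_\gS$, so that the coefficient $2\lambda D$ emerges from the combination of the $\lambda D$ term built into $E_\gS$ and the normalization realigning $E_\gS(\gS)$ with $E(\gS)$.
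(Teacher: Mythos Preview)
Your approach is exactly the paper's: rewrite \eqref{eq:taylor_phi} as $|E_\gS(S)-E(S)-2\lambda D(S\|\gS)|\leq K_\Psi\|\chi_S-\chi_\gS\|_1^2$, then invoke \cref{th:divergence_dominates} and the definition $\lambda=K_1K_\Psi$ to get $|E_\gS(S)-E(S)-2\lambda D(S\|\gS)|\leq \lambda D(S\|\gS)$.

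One point of caution: your final paragraph's account of \emph{why} the coefficient is $2\lambda$ is not right. You suggest that a second $\lambda D$ contribution arises from ``the normalization realigning $E_\gS(\gS)$ with $E(\gS)$,'' but there is no such term---the constant $e_\gS$ contributes nothing $S$-dependent, and $D(\gS\|\gS)=0$. In fact the paper's displayed definition of $E_\gS$ (with $\lambda D$) and of $e_\gS$ (with $E(\gS)$ rather than $\Psi(\chi_\gS)$) are inconsistent with its own proof and with the metric in \cref{th:stokes}, where the factor $(1+2\lambda\,d_{\partial\gS}^2)$ makes clear that the intended coefficient is $2\lambda$. So the identity $E_\gS(S)-E(S)=[\text{Taylor remainder}]+2\lambda D(S\|\gS)$ that you wrote is the correct one to use, but it reflects a typo in the paper's definition of $E_\gS$, not an additional normalization contribution.
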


\begin{proof}
	The estimate \eqref{eq:taylor_phi} can be rewritten as 
	\begin{equation*}
		|E_\gS(S) - E(S) - 2\lambda D(S\|\gS)| \leq K_\Psi \|\chi_S-\chi_\gS\|_1^2.
	\end{equation*}
	The announced result then follows from \cref{th:divergence_dominates}.
\qed
\end{proof}

The presence of the divergence term $D(S \|\gS)$ in $E_\gS(S)$ is essential for the theoretical analysis of \cref{alg_ContourEvolution}, which will be presented in \cref{subsec_convergence_analysis}. From the numerical standpoint, this term is easily implemented (although the practical choice of $\lambda$ would deserve some discussion), but empirically it does not appear to be necessary for the stability and the convergence of the iterations, hence it is usually omitted.

The following step requires an additional ingredient, which is the solution to a curl PDE\footnote{$\curl \omega = - \diver(\omega^\perp) = \partial_x \omega_y - \partial_y \omega_x$ if $\omega = (\omega_x,\omega_y)$}.
Precisely, \cref{th:exists_omega} in \cref{subsec_ExistenceVectorField} establishes, under suitable assumptions and possibly by reducing the tubular domain width $U$, that 
\begin{equation}
\label{eq:curl_omega}
	\curl \omega_\gS = \xi_\gS \quad \text{on } \rT,
\end{equation}
for some vector field $\omega_\gS \in C^1(\rT, \bR^2)$.
In addition, one has $\|\omega_\gS(\fx)\|_{\cM(\fx)^{-1}} \leq 1/2$ for all $\fx \in \rT$, the Lipschitz constant $\Lip(\omega_\gS)$ is bounded independently of $\gS$, and $\omega_S \to \omega_\gS$ uniformly as $\mu_S \to \mu_\gS$ uniformly with $S \in \cX_0$.

The following two results make the connection between the approximate energy $E_\gS$ and the framework of Randers geometry. The main ingredient of \cref{th:stokes} is the Stokes formula, which is equivalent to Green's divergence theorem in two dimensions, and is applicable thanks to the PDE~\eqref{eq:curl_omega}. This identity allows reformulating the integral over the shape $S$ appearing in $E_\gS$, into an integral over the boundary $\cC_S$, which then defines the asymmetric term of a Randers metric. 
\begin{theorem}
\label{th:stokes}
One has $E_\gS(S) = e'_\gS + \Length_{\cF^\gS}(\cC_S)$, for any shapes $\gS\in \cX_0$, $S \in \cX_1$, where $e'_\gS$ is a constant and $\cF^\gS$ denotes the following Randers metric
\begin{equation}
\label{eq:rander_stokes}
	\cF^\gS_\fx(\dfx) := (1+2\lambda\, d_{\partial\gS}(\fx)^2) \| \dfx\|_{\cM(\fx)} + \<\omega_\gS(\fx),\dfx\>.
\end{equation}
Furthermore, $\rho_{\min}(\cF^\gS)$, $\rho_{\max}(\cF^\gS)$, and the Lipschitz constant of the coefficients of $\cF^\gS$, are bounded independently of the shape $\gS$. 
\end{theorem}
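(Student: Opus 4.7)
My plan is to recognize $E_\gS(S)$ as the superposition of three ingredients — a boundary term produced from $\int_S \xi_\gS\,\diff\fx$ via Stokes' theorem, a reweighted Riemannian length coming from $\Length_\cM(\cC_S) + \lambda D(S\|\gS)$, and an additive constant — and then to reassemble them into the $\cF^\gS$-length of $\cC_S$. The key enabler is the PDE \eqref{eq:curl_omega}, which turns $\xi_\gS$ into the exterior derivative of a 1-form on $\rT$.

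The first step is Green's theorem in the plane. Since $\omega_\gS \in C^1(\rT, \bR^2)$ satisfies $\curl \omega_\gS = \xi_\gS$ on $\rT$, and $\cC_S \in \Lip(\bT, \rT)$ is a counter-clockwise parametrization of $\partial S$ for $S \in \cX_1$ (using winding-number conventions when $\cC_S$ is non-simple),
\begin{equation*}
\int_S \xi_\gS\, \diff \fx \;=\; \oint_{\partial S} \<\omega_\gS, \diff \ell\> \;=\; \int_\bT \<\omega_\gS(\cC_S(t)),\, \cC_S'(t)\>\, \diff t .
\end{equation*}
The second step is a purely algebraic collection: by definition of $D(S\|\gS)$ in \eqref{eq:diver_SS} together with $\cR(\fx,\dfx)=\|\dfx\|_{\cM(\fx)}$, one has
\begin{equation*}
\Length_\cM(\cC_S) + \lambda D(S\|\gS) \;=\; \int_\bT \bigl(1 + \lambda\, d_{\partial\gS}(\cC_S(t))^2\bigr)\, \|\cC_S'(t)\|_{\cM(\cC_S(t))}\, \diff t.
\end{equation*}
Summing the two identities and inserting into the definition of $E_\gS(S)$ precisely reconstructs $\Length_{\cF^\gS}(\cC_S)$ — modulo the scalar factor on the quadratic coefficient $d_{\partial\gS}^2$, which is absorbed into the definition of $\cF^\gS$ in \eqref{eq:rander_stokes}. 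One then sets $e'_\gS := e_\gS$, which depends only on $\gS$.

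For the uniform estimates, I would control separately the three coefficients of $\cF^\gS$ on the compact closure $\overline\rT$. The scalar prefactor $1 + 2\lambda\, d_{\partial\gS}^2$ is pinched between $1$ and $1 + 8\lambda U^2$ since $d_{\partial\gS} \leq 2U$ on $\rT$, and is $(4\lambda U)$-Lipschitz because $d_{\partial\gS}$ itself is $1$-Lipschitz. The tensor field $\cM \in C^1(\overline\rT, \bS_2^{++})$ has spectrum pinched between positive constants and a Lipschitz constant independent of $\gS$. The vector field $\omega_\gS$ is uniformly controlled by $\|\omega_\gS(\fx)\|_{\cM(\fx)^{-1}} \leq 1/2$ with $\gS$-independent Lipschitz constant, both granted by \cref{th:exists_omega}. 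Feeding these three uniform bounds into \cref{prop:randers_properties} delivers the announced $\gS$-independent estimates on $\rho_{\min}(\cF^\gS)$ and $\rho_{\max}(\cF^\gS)$, and a product-rule manipulation then yields the Lipschitz control on the coefficients.

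The main obstacle I anticipate is justifying the Stokes identity rigorously when $\cC_S$ is a non-simple curve in $\Gamma_1$: the enclosed ``shape $S$'' then requires a winding-number interpretation so that $\int_S \xi_\gS \,\diff\fx$ and $\oint_{\partial S} \<\omega_\gS, \diff\ell\>$ match. This is nevertheless classical once one views $\cC_S$ as a Lipschitz cycle in the smooth domain $\rT$ and invokes Stokes' theorem for Lipschitz chains; all remaining steps are routine calculus.
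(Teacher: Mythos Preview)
Your Stokes step contains a genuine gap, and the obstacle you anticipate is not the real one. The vector field $\omega_\gS$ is only defined on the tubular domain $\rT$, and the curl identity $\curl\omega_\gS=\xi_\gS$ only holds there. But the shape $S\in\cX_1$ is \emph{not} contained in $\rT$: its boundary $\cC_S$ lies in $\rT$, yet $S$ itself encloses the entire bounded component of $\bR^2\setminus\rT$ (the ``hole'' of the annulus). Hence Green's theorem over $S$ cannot be invoked with the one-form $\omega_\gS$, and the identity $\int_S\xi_\gS\,\diff\fx=\oint_{\partial S}\<\omega_\gS,\diff\ell\>$ simply fails in general, even for a perfectly simple curve $\cC_S$. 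Non-simplicity and winding numbers are a side issue here.

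The paper repairs this by choosing a fixed reference shape $\tilde S\in\cX_0$ (the innermost one, with $\mu_{\tilde S}\equiv U$), so that $\tilde S\subset S$ and $S\setminus\tilde S\subset\rT$ for every $S\in\cX_1$. Stokes' formula applied to the annular region $S\setminus\tilde S$, where the curl equation does hold, yields
\[
\int_S\xi_\gS\,\diff\fx \;=\; \int_\bT\<\omega_\gS\circ\cC_S,\cC_S'\>\,\diff t \;+\; c_\gS,
\]
where $c_\gS:=\int_{\tilde S}\xi_\gS\,\diff\fx-\int_\bT\<\omega_\gS\circ\cC_{\tilde S},\cC_{\tilde S}'\>\,\diff t$ depends only on $\gS$. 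This extra constant is absorbed into $e'_\gS=e_\gS+c_\gS$, which is why $e'_\gS\neq e_\gS$ in general. Once this correction is made, the rest of your argument (collecting the $\Length_\cM$ and $D(S\|\gS)$ terms, and the uniform estimates via \cref{prop:randers_properties} and \cref{th:exists_omega}) is essentially the paper's.
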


\begin{proof}
Defining $\cM_\gS := (1+2\lambda\,d_{\partial \gS}(\fx)^2)^2 \cM$, we check that  
$\cF^\gS_\fx(\dfx) = \| \dfx\|_{\cM_\gS(\fx)} + \<\omega_\gS(\fx),\dfx\>$ for all $\fx \in \rT$, $\dfx \in \bR^2$, thus $\cF^\gS$ does have the structure of a Randers metric \eqref{eq_RandersForm}.

Denote by $\tilde{S} \in \cX_0$ the innermost shape, whose boundary reads $\cC_{\tilde{S}} = \rC+U \rN$, equivalently $\mu_{\tilde S} \equiv U$. 
By construction one has $\tilde{S}\subset S$ and $S \sm \tilde{S} \subset \rT$.
The Stokes formula, applied to the region $S \sm \tilde S$, on which \eqref{eq:curl_omega} holds, yields
\begin{align*}
\int_{S\sm \tilde S} \xi_\gS \,\diff \fx &=\int_S \xi_\gS \,\diff \fx-\int_{\tilde{S}} \xi_\gS \,\diff \fx\\
&=\int_\bT \<\omega_\gS\circ\cC_S, \cC_S'\> \,\diff t 
- \int_\bT \<\omega_\gS\circ \cC_{\tilde S}, \cC_{\tilde S}'\> \,\diff t.
\end{align*}
Therefore
\begin{equation*}
\int_S \xi_\gS \,\diff \fx = c_\gS + \int_\bT \<\omega_\gS\circ\cC_S, \cC_S'\> \,\diff t,
\end{equation*}
where $c_\gS$ is a scalar valued that is defined as
\begin{equation*}
c_\gS := \int_{\tilde S} \xi_\gS \diff \fx -\int_\bT \<\omega_\gS\circ \cC_{\tilde S}, \cC_{\tilde S}'\> \,\diff t.	
\end{equation*}
The identity $E_\gS(S) = e_\gS+c_\gS + \Length_{\cF^\gS}(\cC_S)$ follows, in view of the expression~\eqref{eq:diver_SS} of $D(S\| \gS)$.

One has $\| \omega_\gS(\fx)\|_{\cM(\fx)^{-1}} \leq 1/2$ by assumption, hence the metric $\cF^\gS$ is definite and $\rho_{\min}(\cF^\gS) \geq \rho_{\min}(\cR)/2$ by Eq.~\eqref{eq:Randers_bounds}. Noting that $\|d_{\partial\gS}\|_\infty\leq 2 U$ by construction~\eqref{eq:diver_SS} on the set $\rT$, we obtain that $\rho_{\max}(\cF^\gS) \leq 2 (1+8 \lambda U^2) \rho_{\max}(\cR)$ by Eq.~\eqref{eq:mumax_bounds}. 
One has $\Lip(d_{\partial\gS}) = 1$, since $d_{\partial \gS}$ is the Euclidean distance function from a set, hence the tensor field  $(1+2\lambda\,d_{\partial \gS}(\fx)^2)^2 \cM$ has a Lipschitz constant bounded independently of the shape $\gS$. On the other hand $\Lip(\omega_\gS)$ is bounded independently of $\gS$ by assumption. 
\qed
\end{proof}

For readability, we denote $\Length_\gS := \Length_{\cF^\gS}$ and $\Dist_\gS := \Dist_{\cF^\gS}$ in the following, for any $\gS\in \cX_1$.

\begin{corollary}
\labelx{corol:minS_wellposed}
	Assume that $U$ is sufficiently small. 
	Then for any $\gS \in \cX_0$ with $\Lip(\mu_\gS) \leq 1$, and any $\fx \in \rT$, the optimization problem
	\begin{equation}
	\label{eq:argmin_region}
		\min\{ E_\gS(S)\mid S \in \cX_1,\, \fx \in \partial S\}.
	\end{equation}
	admits a minimizer. 
	More precisely, $S \mapsto \cC_S$ defines a bijection between the minimizers of~\eqref{eq:argmin_path} with $\cF := \cF^\gS$, and the minimizers of~\eqref{eq:argmin_region}. As a result, any such minimizer satisfies $S \in \cX_0$ and $\Lip(\mu_S) \leq 1$.
\end{corollary}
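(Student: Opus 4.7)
The strategy is to transfer the region-based minimization problem (30) into a closed-geodesic minimization problem through Theorem~\ref{th:stokes}, and then to invoke Theorem~\ref{th:geodesic_tubular} to obtain existence and regularity of the minimizer. First, Theorem~\ref{th:stokes} gives $E_\gS(S) = e'_\gS + \Length_{\cF^\gS}(\cC_S)$ for every $S \in \cX_1$, and the constant $e'_\gS$ does not depend on $S$. By Definition~\ref{def:cX01}, the correspondence $S \mapsto \cC_S$ identifies $\cX_1$-shapes with simple curves in $\Gamma_1$, up to increasing reparametrizations of $\bT$ (which leave the Randers length invariant by change of variable). A cyclic shift in $\bT$ turns the constraint $\fx \in \partial S$ into $\cC_S(0)=\fx$. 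Hence minimizing~\eqref{eq:argmin_region} is equivalent to minimizing $\Length_{\cF^\gS}(\cC)$ over those $\cC \in \Gamma_1$ with $\cC(0)=\fx$ that are simple and bound a shape.

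Second, I would use the uniform estimates in Theorem~\ref{th:stokes}: the ratios $\rho_{\min}(\cF^\gS)$, $\rho_{\max}(\cF^\gS)$ and the Lipschitz constant of the coefficients of $\cF^\gS$ are bounded by constants that depend only on the ambient data, not on $\gS$ (subject to $\gS \in \cX_0$ with $\Lip(\mu_\gS)\leq 1$). Consequently, the threshold $U_\cF$ produced by Theorem~\ref{th:geodesic_tubular} can be chosen as a single $U^* > 0$ valid for every admissible $\gS$; this is the meaning of ``$U$ sufficiently small''.

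Third, I would apply Theorem~\ref{th:geodesic_tubular} to the metric $\cF^\gS$. It yields a minimizer $\cC^*$ of $\inf\{\Length_{\cF^\gS}(\cC);\,\cC \in \Gamma_1,\,\cC(0)=\fx\}$, and since $U \leq U^*$ it further guarantees $\cC^* = \rC + \mu^* \rN \in \Gamma_0$ with $\Lip(\mu^*)\leq 1$. In particular $\cC^*$ is simple, so it bounds a unique shape $S^* \in \cX_0 \subset \cX_1$ with $\fx \in \partial S^*$ and $\mu_{S^*} = \mu^*$. Because every admissible competitor $S \in \cX_1$ in~\eqref{eq:argmin_region} produces a curve $\cC_S \in \Gamma_1$ with $\cC_S(0)=\fx$, we have $\Length_{\cF^\gS}(\cC_S) \geq \Length_{\cF^\gS}(\cC^*)$, whence $E_\gS(S)\geq E_\gS(S^*)$ by Theorem~\ref{th:stokes}. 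Thus $S^*$ solves~\eqref{eq:argmin_region}.

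Finally, for the bijection, any minimizer $S$ of~\eqref{eq:argmin_region} must satisfy $\Length_{\cF^\gS}(\cC_S) = \Length_{\cF^\gS}(\cC^*)$, so $\cC_S$ is itself a minimizer in $\Gamma_1$; invoking once more the conclusion of Theorem~\ref{th:geodesic_tubular} for $U \leq U^*$ forces $\cC_S \in \Gamma_0$ with $\Lip(\mu_S)\leq 1$, i.e.\ $S \in \cX_0$. Conversely, any geodesic minimizer $\cC^* \in \Gamma_0$ bounds a unique shape $S^* \in \cX_0$, and these two operations are mutually inverse. The only substantive content is imported from Theorem~\ref{th:geodesic_tubular}, which handles the genuine difficulty (preventing self-intersecting or U-turning minimizers in $\Gamma_1$); once that is in hand, the corollary is essentially a bookkeeping exercise that combines the reformulation of Theorem~\ref{th:stokes} with the uniformity of the metric bounds.
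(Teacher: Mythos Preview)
Your proposal is correct and follows essentially the same approach as the paper's own proof: both translate the region problem into a curve-length problem via \cref{th:stokes}, use the uniform metric bounds there to apply \cref{th:geodesic_tubular} with a single threshold on $U$, and then exploit the fact that the resulting minimizer lies in $\Gamma_0$ (hence bounds a shape) to close the bijection. Your write-up is in fact slightly more explicit than the paper's, which compresses the bijection argument into ``from this point the result easily follows.''
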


\begin{proof}
First observe that~\cref{th:geodesic_tubular} applies to the metric $\cF^\gS$, provided the tubular region width $U$ is sufficiently small, in view of the properties established in \cref{th:stokes}.

Define $E_\gS^{\min}$ as the minimum value \eqref{eq:argmin_region}, and $\Length_\gS^{\min}$ as the minimum value \eqref{eq:argmin_path} using $\cF := \cF^\gS$. For any shape $S \in \cX_1$, the boundary curve $\cC_S\in \Gamma_1$, hence $c_\gS+\Length_\gS^{\min} \leq E_\gS^{\min}$ by \cref{th:stokes}. On the other hand, the optimization problem \eqref{eq:argmin_path} admits a minimizer $\cC_*$, by \cref{th:geodesic_tubular}, obeying $\Length_\gS(\cC_*) = \Length_\gS^{\min}$ and $\cC_*\in \Gamma_0$. 
As observed below \cref{def:cX01}, there exists a shape $S_* \in \cX_0$ such that $\cC_{S_*} = \cC_*$. Hence $E_\gS^{\min} \leq E_{\gS}(S_*) \leq  c_\gS + \Length_\gS(\cC_*) = c_\gS +\Length_\gS^{\min}$. Therefore $E_\gS^{\min} = c_\gS+\Length_\gS^{\min}$, and from this point the result easily follows.
\qed
\end{proof}

\subsection{Minimization of the Active Contour Energy}
\label{subsec_convergence_analysis}

This subsection is devoted to the convergence analysis of \cref{alg_ContourEvolution}, which is based on a recursive sequence of minimizations of the approximate energy $E_\gS$.
In a strict understanding, our analysis falls short of proving that the shapes $(S_n)_{n \geq 0}$ produced by the algorithm converges, or that  a minimizer of the active contour energy $E$ is obtained in the process. Instead, we establish that the distance between successive shapes $S_n$ and $S_{n+1}$ tends to zero, as stated in~\cref{th:successive_min}, and that they cluster around a shape $\gS$ which is a critical point of $E$, see \cref{th:critical}. A number of other properties are also proved, such as the decrease of the energies $(E(S_n))_{n \geq 0}$ and a geodesic property of $\cC_\gS$, see the precise statements of \cref{th:successive_min,th:critical}, which provide additional guarantees for our algorithm. We assume below that the tube width $U$ is sufficiently small so that \cref{corol:minS_wellposed} holds.

\begin{theorem}
\label{th:successive_min}
Let $S_0 \in \cX_0$ be such that $\Lip(\mu_{S_0}) \leq 1$, and let for all $n \geq 0$
\begin{equation}
	\label{eq:successive_min}
	S_{n+1} \in \arg\min \{E_{S_n}(S)\mid S\in \cX_1, \fx_n \in \partial S\},
\end{equation} 
where $\fx_n \in \partial S_n$ is arbitrary. 
Then the sequence $(S_n)_{n \geq 0}$ is well defined, $S_n \in \cX_0$, and $\Lip(\mu_{S_n}) \leq 1$. 
Furthermore, the energies $(E(S_n))_{n\geq 0}$ are decreasing and converging, and 
\begin{align}
\label{eq:sum_sqnorms}
	\sum_{n \geq 0} \| \mu_{n+1} -\mu_n \|_2^2 &< \infty, &
	\text{with } \mu_n := \mu_{S_n}.
\end{align} 
\end{theorem}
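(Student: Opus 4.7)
The plan is a clean induction-and-telescoping argument: at each step $n$, I first need to verify that the new minimization problem is well posed with the structural properties propagated, and second, to derive a one-step energy decrease identity whose gain is controlled by the divergence $D(S_{n+1}\|S_n)$. Summing this gain and invoking \cref{th:divergence_dominates} will then yield the square-summability of $\|\mu_{n+1}-\mu_n\|_2$.

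To propagate the structural hypotheses, I would argue by induction on $n$. The base case $n=0$ holds by assumption. Assuming $S_n \in \cX_0$ and $\Lip(\mu_{S_n}) \leq 1$, the fact that $\fx_n \in \partial S_n \subset \rT$ places the minimization~\eqref{eq:successive_min} precisely within the scope of \cref{corol:minS_wellposed} (applied with $\gS = S_n$), which, for $U$ sufficiently small, delivers the existence of a minimizer $S_{n+1}$ and guarantees $S_{n+1} \in \cX_0$ with $\Lip(\mu_{S_{n+1}}) \leq 1$. This closes the induction and establishes well-definedness.

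For the one-step energy decrease, I would observe that $S_n$ is itself a competitor in~\eqref{eq:successive_min} (because $\fx_n \in \partial S_n$), so by optimality $E_{S_n}(S_{n+1}) \leq E_{S_n}(S_n)$. Since the divergence in~\eqref{eq:diver_SS} vanishes on the diagonal, $E_{S_n}(S_n) = E(S_n)$. Combining this with the lower bound of \cref{prop:relaxation_close}, $E(S_{n+1}) + \lambda D(S_{n+1}\|S_n) \leq E_{S_n}(S_{n+1})$, I obtain the key inequality
\[
  E(S_{n+1}) + \lambda D(S_{n+1}\|S_n) \leq E(S_n),
\]
and in particular $(E(S_n))_{n \geq 0}$ is non-increasing. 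I would then invoke, as a mild standing hypothesis, that $\Psi$ is bounded below on $\cX_1$ (which holds for all region-based appearance terms discussed in Appendix~\ref{Appendix:GradientExamples}, while the Riemannian length term is obviously non-negative), so that $(E(S_n))$ is bounded below and hence converges. Telescoping the one-step inequality then produces $\lambda \sum_{n\geq 0} D(S_{n+1}\|S_n) \leq E(S_0) - \lim_n E(S_n) < \infty$. Finally, since the induction has secured $S_n \in \cX_0$ with $\Lip(\mu_{S_n}) \leq 1$ and $S_{n+1} \in \cX_0$, \cref{th:divergence_dominates} applies to give $\|\mu_{n+1}-\mu_n\|_2^2 \leq K_0 D(S_{n+1}\|S_n)$, from which \eqref{eq:sum_sqnorms} follows.

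The main obstacle I foresee is not any single calculation but rather the careful threading of the structural hypotheses through the induction: the properties $S_n \in \cX_0$ and $\Lip(\mu_{S_n}) \leq 1$ are \emph{exactly} what is needed both to invoke \cref{corol:minS_wellposed} at each step and to apply \cref{th:divergence_dominates} at the end, so their preservation must be checked at every iteration. The separation condition $\|\fx_{n+1}-\fx_n\| \geq \delta$ plays no role in the present theorem (it is presumably reserved for the subsequent critical-point analysis), and the boundedness from below of $E$ should be stated explicitly as a hypothesis rather than left implicit; the actual combinatorics of telescoping is then routine.
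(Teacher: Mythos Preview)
Your proposal is correct and follows essentially the same route as the paper: induction via \cref{corol:minS_wellposed} to propagate $S_n\in\cX_0$ and $\Lip(\mu_{S_n})\leq 1$, the one-step inequality $E(S_{n+1})+\lambda D(S_{n+1}\|S_n)\leq E_{S_n}(S_{n+1})\leq E_{S_n}(S_n)=E(S_n)$ via \cref{prop:relaxation_close}, telescoping, and then \cref{th:divergence_dominates}. The only cosmetic difference is that the paper obtains the lower bound on $E$ directly from the Taylor expansion \eqref{eq:taylor_phi} (which bounds $\Psi$ on $\cX_0$) rather than stating it as a separate standing hypothesis; your observation that the separation condition $\|\fx_{n+1}-\fx_n\|\geq\delta$ is inert here and is reserved for \cref{th:critical} is also correct.
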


\begin{proof}
By \cref{corol:minS_wellposed} one has $S_1 \in \cX_0$ and $\Lip(\mu_1) \leq 1$. By induction, these properties are satisfied for all $n\geq 0$, and the sequence $(S_n)_{n\geq 0}$ is well defined.

By \cref{prop:relaxation_close} and Eq.~\eqref{eq:successive_min}, one has 
\begin{align}
\label{eq:energy_decreases_diff}
&E(S_{n+1})+\lambda D(S_{n+1}\|S_n) \\
\nonumber
&\leq E_{S_n}(S_{n+1}) \leq E_{S_n}(S_n) = E(S_n),
\end{align}
hence $(E(S_n))_{n\geq 0}$ decreases, since $D(S_{n+1}\|S_n)\geq 0$. 
Observing that the region-based term $\Psi(S)$ is bounded over all $S\in \cX_0$ by Eq.~\eqref{eq:taylor_phi}, and that the boundary term $\Length_\cR(\cC_S)$ is non-negative, we obtain that the total energy $E$ is bounded below on $\cX_0$. Therefore $E(S_n)$ converges as $n \to \infty$ to a limit denoted $E_\infty$. 
In addition, by Eq.~\eqref{eq:energy_decreases_diff}
\begin{equation*}
	\lambda \sum_{n \geq 0} D(S_{n+1}\|S_n) \leq \sum_{n \geq 0} \big(E(S_n)-E(S_{n+1})\big),
\end{equation*}	
hence $\sum_{n \geq 0} \| \mu_{n+1} -\mu_n \|_2^2 \leq K_0 (E(S_0)-E_\infty)/\lambda$ by Eq.~\eqref{eq:div_dominates_centerline}, which establishes the estimate~\eqref{eq:sum_sqnorms} and concludes the proof.
\qed
\end{proof}
The property~\eqref{eq:sum_sqnorms} is not sufficient to ensure the convergence of the sequence $(\mu_n)_{n\geq 0}$, of deviations from the centerline of the tubular domain $\rT$. 
Nevertheless, the following result establishes that a cluster point, denoted by $\mu_\gS$, exists, defining a shape $\gS \in \cX_0$ whose contour $\cC_\gS$ is a circular geodesic path~\eqref{eq:locally_geodesic} (and locally a minimizing geodesic) and which is a critical point of the energy functional~\eqref{eq:critical_point}. 

\begin{figure}[t]
\includegraphics[height=4.5cm]{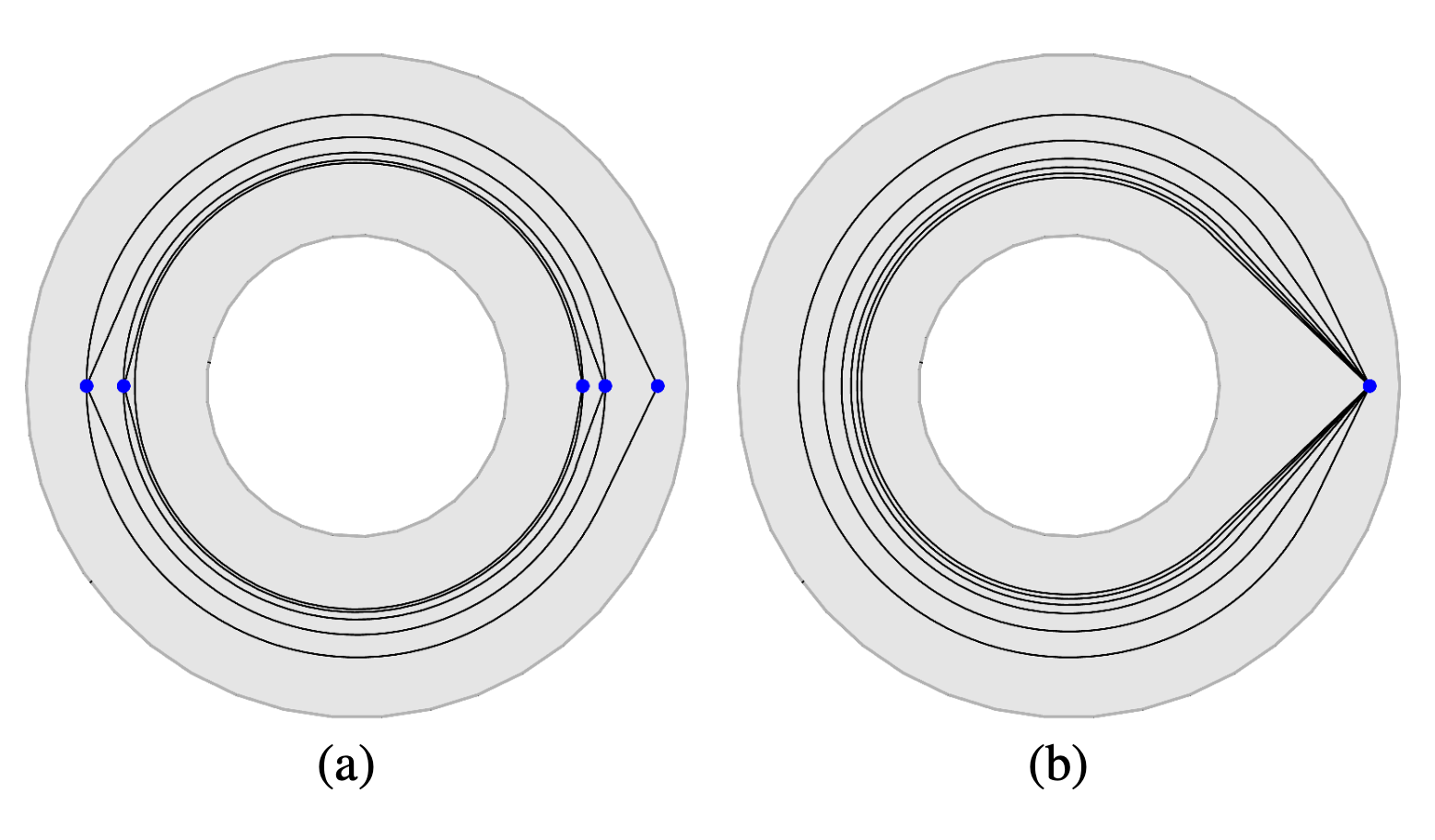}
\caption{Illustration of the importance of choosing $\fx_n$ and $\fx_{n+1}$ (indicated by blue dots) sufficiently far from each other. \textbf{a} If $\fx_{n+1}$ is opposite to $\fx_n$, then the curves $\cC_n$ converge to a smooth closed geodesic. \textbf{b} If $\fx_n=\fx_0$ for all $n\geq 0$, then a non-vanishing angle remains at this point.}
\label{fig_ChoosingSRCPoint}
\end{figure}

A peculiar aspect of \cref{th:critical}, and \cref{alg_ContourEvolution}, is the need to choose a sequence of points $(\fx_n)_{n\geq 0}$ whose successive elements are far enough, as expressed by the constraint $\|\fx_{n+1}-\fx_n \| \geq \delta > 0$. First, let us recall that the constraint $\fx_n \in \partial S$ in Eq.~\eqref{eq:successive_min} is what makes this problem tractable numerically, as discussed below \cref{th:geodesic_tubular}. Now, as illustrated on \cref{fig_ChoosingSRCPoint}, using well separated successive points $\fx_n$ and $\fx_{n+1}$ allows obtaining a smooth closed geodesic in the limit, whereas using unchanged or excessively close successive points may lead to a non-smooth limit curve.

Some elements $t_0, \cdots, t_k \in \bT$ of the periodic unit interval are said circularly ordered, which is denoted $t_0 \leq \cdots \leq t_k$, if they admit 
representatives modulo $1$, denoted $\hat t_0, \cdots, \hat t_k\in \bR$ satisfying $\hat t_0 \leq \cdots \leq \hat t_k \leq \hat t_0+1$. 
In particular, we let $[t_0,t_1] := \{t \in \bT\mid t_0 \leq t \leq t_1\}$. Finally $|t_0-t_1| := \min_{z\in \bZ} |\hat t_0-\hat t_1-z|$. 

\begin{theorem}
\label{th:critical}
Same assumptions as \cref{th:successive_min}, with additionally $\|\fx_{n+1}-\fx_n\|\geq \delta > 0$  for all $n\geq 0$. Then there exists a subsequence converging uniformly: $\mu_{\vp(n)} \to \mu_\gS$ as $n \to \infty$, where $\gS \in \cX_0$ is such that $\Lip(\mu_\gS) \leq 1$.
Furthermore one has $\mu_{\vp(n)+1} \to \mu_\gS$ uniformly as $n \to \infty$, and 
\begin{equation}
\label{eq:sci_region}
E(\gS) \leq \lim_{n \to \infty} E(S_n).	
\end{equation}
Let $t_0<t_1<t_2 \in \bT$ be three points in circular order, with $|t_0-t_1| \leq \delta/3$. Then
in the domain $\rT \sm \rW(t_2)$
\begin{equation}
\label{eq:locally_geodesic}
	\Length_\gS(\cC_{\gS|[t_0,t_1]}) 
	= \Dist_\gS (\cC_\gS(t_0), \cC_\gS(t_1)),
\end{equation}
where $\cC_\gS:=\rC+\mu_\gS \rN$ and $\cC_{\gS|[t_0,t_1]}$ denotes the restriction of this path to the interval $[t_0,t_1]$. 
Finally consider $S_\ve \in \cX_1$ such that $\cC_{S_\ve} = \cC_\gS +\ve \eta$, where $\eta \in \Lip( \bT , \bR^2)$ is fixed for all $\ve \in ]0,1[$. 
Then as $\ve \to 0$
\begin{equation}
\label{eq:critical_point}
	E(S_\ve) \geq E(\gS) + o(\ve).
\end{equation}
\end{theorem}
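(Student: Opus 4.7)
I would start by extracting a uniformly convergent subsequence $\mu_{\vp(n)} \to \mu_\gS$ via Arzelà-Ascoli~\eqref{eq:ArzelaAscoli}, since the $(\mu_n)_{n\geq 0}$ are $1$-Lipschitz maps $\bT \to [-U,U]$. The limit inherits $\Lip(\mu_\gS)\leq 1$ and so defines $\gS \in \cX_0$. For Part~2, the summability~\eqref{eq:sum_sqnorms} gives $\|\mu_{n+1}-\mu_n\|_2 \to 0$, and since $\Lip(\mu_{n+1}-\mu_n)\leq 2$ I would upgrade this to $\|\mu_{n+1}-\mu_n\|_\infty \to 0$ via the standard fact that a uniformly Lipschitz sequence on a bounded interval with vanishing $L^2$ norm also has vanishing $L^\infty$ norm. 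This yields $\mu_{\vp(n)+1}\to \mu_\gS$ uniformly.

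\textbf{Part 3.} The uniform convergence $\mu_{\vp(n)}\to\mu_\gS$ entails uniform convergence of contours $\cC_{S_{\vp(n)}}\to \cC_\gS$ and, via~\cref{prop:mu_chi}, $L^1$-convergence $\chi_{S_{\vp(n)}}\to \chi_\gS$. I would then combine the continuity of $\Psi$ for $L^1$-convergence of characteristic functions (a direct consequence of the Taylor estimate~\eqref{eq:taylor_phi}) with the lower semi-continuity of $\Length_\cR$ under uniform convergence of curves, analogous to~\eqref{eq:length_lsc}, to deduce $E(\gS)\leq \liminf_n E(S_{\vp(n)}) = \lim_n E(S_n)$.

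\textbf{Part 4 (main obstacle).} The delicate task is to promote the minimizing-geodesic property of the discrete curves $\cC_{S_{n+1}}$ to the limit $\cC_\gS$, despite the potential singularities inherited from the source points $\fx_n$. The plan is to exploit the separation $\|\fx_{n+1}-\fx_n\|\geq \delta$: after passing to further subsequences I would obtain $\fx_{\vp(n)}\to \fx^0 = \cC_\gS(s^0)$ and $\fx_{\vp(n)+1}\to \fx^1=\cC_\gS(s^1)$, and use the Euclidean speed bound $\|\cC_\gS'\|\leq \sqrt{2}$ from~\eqref{eq:velocity_lower_bound} to convert $\|\fx^0-\fx^1\|\geq \delta$ into the parameter separation $|s^0-s^1|\geq \delta/\sqrt{2}$. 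Hence any arc with $|t_0-t_1|\leq \delta/2$ contains at most one of $s^0,s^1$; let $s^*$ denote the one that lies outside $[t_0,t_1]$, and $\cC_n^*\in \{\cC_{S_{\vp(n)+1}},\cC_{S_{\vp(n)+2}}\}$ the corresponding discrete curve (noting that $\mu_{\vp(n)+2}\to\mu_\gS$ uniformly by reapplying the argument of Part~2). By~\cref{corol:minS_wellposed}, each $\cC_n^*$ is a minimizing closed geodesic in $\rT\sm\rW(s_n^*)$ for the Randers metric $\cF_n^*$; its restriction to $[t_0,t_1]$, which avoids the wall for $n$ large, must then be a minimum path between its endpoints in $\rT\sm\rW(s_n^*)$ — otherwise one could shorten $\cC_n^*$ by splicing in a shorter subarc, contradicting its global minimality. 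Since $d_{\partial S_{\vp(n)+k}}\to d_{\partial\gS}$ and $\omega_{S_{\vp(n)+k}}\to \omega_\gS$ uniformly, by the stability stated in~\cref{subsec_ExistenceVectorField}, the coefficients of $\cF_n^*$ converge uniformly to those of $\cF^\gS$. A standard lower-semi-continuity-plus-approximation argument, analogous to the existence of minimal geodesics following~\eqref{eq:ArzelaAscoli}, would then transfer the minimality to the limit: $\cC_{\gS|[t_0,t_1]}$ is a minimum path in $\rT\sm\rW(s^*)$ for $\cF^\gS$, and in particular satisfies the geodesic equation. To replace $\rW(s^*)$ by the prescribed $\rW(t_2)$, I would rely on the smallness $|t_0-t_1|\leq\delta/2$ together with the uniform ratio bounds from \cref{th:stokes}: any competing path in $\rT\sm\rW(t_2)$ either stays close to the arc (in which case the smooth geodesic is locally minimum) or winds around, in which case its $\cF^\gS$-length is bounded below by $\rho_{\min}(\cF^\gS)\cdot 2(1-\delta/2)/3$, exceeding the arc length $\leq \rho_{\max}(\cF^\gS)\cdot \sqrt{2}\,\delta/2$ provided $\delta$ is sufficiently small.

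\textbf{Part 5.} The local geodesic identity~\eqref{eq:locally_geodesic}, applied on a finite cover of $\bT$ by short arcs, means that $\cC_\gS$ satisfies the Euler-Lagrange equation for $\Length_{\cF^\gS}$ and is a critical point of this length among closed Lipschitz curves in $\Gamma_1$. Applied to the perturbation $\cC_{S_\ve}=\cC_\gS+\ve\eta$, first-order optimality gives $\Length_{\cF^\gS}(\cC_{S_\ve}) = \Length_{\cF^\gS}(\cC_\gS)+o(\ve)$, and by~\cref{th:stokes} this becomes $E_\gS(S_\ve) = E_\gS(\gS)+o(\ve)$. Since $\|\chi_{S_\ve}-\chi_\gS\|_1 = O(\ve)$ and $D(S_\ve\|\gS)=O(\ve^2)$ (using $d_{\partial\gS}(\cC_{S_\ve}(t))\leq \ve\|\eta\|_\infty$), combining~\eqref{eq:taylor_phi} with~\cref{prop:relaxation_close} yields $E_\gS(S_\ve) = E(S_\ve)+O(\ve^2)$ while $E_\gS(\gS)=E(\gS)$. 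The desired inequality $E(S_\ve)\geq E(\gS)+o(\ve)$ of~\eqref{eq:critical_point} then follows.
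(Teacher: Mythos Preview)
Parts 1--3 and the outline of Part 5 are essentially the paper's argument. The substantive divergence, and the genuine gap, is in Part 4.

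\textbf{The gap in Part 4.} Your final step asks to transfer the minimality from the domain $\rT\sm\rW(s^*)$ to the domain $\rT\sm\rW(t_2)$, and you conclude with a length comparison that works ``provided $\delta$ is sufficiently small''. But $\delta$ is a fixed datum of \cref{th:successive_min}, not a free parameter you may shrink. If $\delta$ is large relative to $\rho_{\max}(\cF^\gS)/\rho_{\min}(\cF^\gS)$ your dichotomy does not rule out competitors that wander far without winding. (Incidentally, the speed bound $\|\cC_\gS'\|\leq\sqrt 2$ is not quite right either: from $\cC_\gS=\rC+\mu_\gS\rN$ with $|\mu_\gS'|\leq 1$ and $|\kappa\mu_\gS|\leq 1/3$ one gets $\|\cC_\gS'\|\leq\sqrt{(4/3)^2+1}$.)

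\textbf{What the paper does instead.} The paper never works in $\rT\sm\rW(s_n)$; it establishes the inequality \emph{directly in $\rT\sm\rW(t_2)$ at the discrete level}, before any limit. Writing $\fx_n=\rC(s_n)+\mu_n(s_n)\rN(s_n)$, one checks $|s_n-s_{n+1}|\geq 3\delta/4+o(1)>|t_0-t_1|$, so by passing to a further subsequence one may assume $s_{\vp(n)}\notin[t_0,t_1]$. Now suppose $\cC_{n|[t_0,t_1]}$ is not length-minimizing in $\rT\sm\rW(t_2)$: take a shorter competitor $\cC$ there and splice it into $\cC_n$ over $[t_0,t_1]$ to form $\tilde\cC$. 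Two observations finish the job: (i) since $\rT\sm\rW(t_2)$ is simply connected, $\cC$ is homotopic to $\cC_{n|[t_0,t_1]}$ within it, so $\tilde\cC$ is homotopic to $\cC_n$ in $\rT$, hence $\tilde\cC\in\Gamma_1$; (ii) since $s_{\vp(n)}\notin[t_0,t_1]$, the spliced curve still satisfies $\tilde\cC(s_{\vp(n)})=\fx_{\vp(n)}$. Thus $\tilde\cC$ is admissible for \eqref{eq:argmin_path} and strictly shorter than $\cC_n$, contradicting \cref{corol:minS_wellposed}. Passing to the limit (lower semi-continuity of length, uniform convergence of metric coefficients) gives \eqref{eq:locally_geodesic} in $\rT\sm\rW(t_2)$ with no smallness hypothesis on $\delta$. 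The role of the wall $\rW(t_2)$ in this argument is purely topological (to make the shortcut homotopic), not to locate the original source point.

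\textbf{On Part 5.} Your sketch via the Euler--Lagrange equation is morally right, but the paper avoids invoking it (which would require extra regularity of $\cC_\gS$). Instead it takes a partition of unity $\{\vp_m\}$ by hat functions supported on arcs of length $\leq\delta/2$, writes $\eta=\sum_m\vp_m\eta$, observes that each localised perturbation $\eta_m$ satisfies $\Length_\gS(\cC_\gS+\ve\eta_m)\geq\Length_\gS(\cC_\gS)$ by \eqref{eq:locally_geodesic}, and uses \emph{linearity of the first-order differential of $\Length_\gS$} in the perturbation to sum these into $\Length_\gS(\cC_\gS+\ve\eta)\geq\Length_\gS(\cC_\gS)+o(\ve)$. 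Your equality $\Length_{\cF^\gS}(\cC_{S_\ve})=\Length_{\cF^\gS}(\cC_\gS)+o(\ve)$ should be the inequality $\geq$; that is all that \eqref{eq:locally_geodesic} provides, and all that \eqref{eq:critical_point} needs.
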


The rest of this subsection is devoted to the proof of \cref{th:critical}, 
which establishes the main guarantees of our segmentation method \cref{alg_ContourEvolution}, and is thus the central mathematical result of this paper.
We would like to acknowledge the constructive input of an anonymous referee, who encouraged the authors to fully develop this analysis.
Recall that $\Length_\gS := \Length_{\cF^\gS}$ and $\Dist_\gS := \Dist_{\cF^\gS}$, where the Randers metric $\cF^\gS$ is defined in \cref{th:stokes}.
\vspace{-0.5\baselineskip}\\

\noindent\emph{Properties of the sequence $\mu_n$}.
Observe that the functions $\mu_n : \bT \to [-U,U]$ are defined between compact spaces, and that $\Lip(\mu_n) \leq 1$ for all $n \geq 0$. Therefore, by the Arzelà-Ascoli compactness theorem \eqref{eq:ArzelaAscoli}, there exists a converging subsequence $\mu_{\vp(n)} \to \mu_*$ uniformly as $n \to \infty$, where $\mu_* : \bT \to [-U,U]$ satisfies likewise $\Lip(\mu_*) \leq 1$.
This defines a curve $\cC_* := \rC+ \mu_* \rN \in \Gamma_0$, which as observed below \cref{def:cX01} is the boundary of some region $\gS \in \cX_0$, as announced. Note that $\mu_\gS = \mu_*$ and $\cC_\gS = \cC_*$.

One has $\|\mu_{n+1}-\mu_n\|_2 \to 0$ as $n \to \infty$, in view of Eq.~\eqref{eq:sum_sqnorms}. Since $\Lip(\mu_n)\leq 1$ and $\Lip(\mu_{n+1}) \leq 1$, it follows that $\mu_{n+1}-\mu_n\to 0$ uniformly. Therefore $\mu_{\vp(n)+1} = (\mu_{\vp(n)+1}-\mu_{\vp(n)}) + \mu_{\vp(n)} \to \mu_\gS$ as $n \to \infty$, as announced.

The region-based energy $\Psi$ satisfies, for any shape $S\in \cX_0$
\begin{align*}
&|\Psi(\chi_\gS) -\Psi(\chi_S)|\\
&\leq \|\xi_\gS\|_\infty \|\chi_S-\chi_\gS\|_1+	K_\Psi\|\chi_S-\chi_\gS\|_1^2\\
&\leq K \|\chi_S-\chi_\gS\|_1
\leq \tfrac 4 3 K \|\mu_S-\mu_\gS\|_1.
\end{align*}
We used successively (i) the expansion \eqref{eq:taylor_phi}, (ii) the constant $K := \|\xi_\gS\|_\infty+ \Leb(\rT)K_\Psi$ where $\Leb$ stands for the Lebesgue measure, and (iii) \cref{prop:mu_chi}. It follows that $\Psi(\chi_{S_{\vp(n)}}) \to \Psi(\chi)$ as $n \to \infty$, and on the other hand $\Length_\cR(\cC_\gS) \leq \liminf \Length_\cR(\cC_{S_{\vp(n)}})$ as $n \to \infty$ by lower semi-continuity of the length functional~\eqref{eq:length_lsc}. Taking the sum of these two estimates we obtain $E(\gS) \leq \liminf E(S_{\vp(n)})$ as $n\to \infty$, which implies~\eqref{eq:sci_region} since $E(S_n)$ is converging .
\vspace{-0.5\baselineskip}\\

\noindent\emph{Exploiting the assumption $\|\fx_n-\fx_{n+1}\|\geq \delta$}.
Let $s_n \in \bT$ be such that $\fx_n = \rC(s_n) + \mu_n(s_n) \rN(s_n)$, for all $n \geq 0$. Then we have 
\begin{align*}
&\|\fx_{n+1}-\fx_n\| \\
&\hspace{-1.5mm} \leq \|\rC(s_{n+1})-\rC(s_n)\| + |\mu_n(s_{n+1})| \|\rN(s_{n+1})-\rN(s_n)\|\\
&\hspace{-1.5mm} + \big(|\mu_{n+1}(s_{n+1})-\mu_{n+1}(s_n)|\\
&\hspace{-1.5mm}+|\mu_{n+1}(s_n)-\mu_n(s_n)|\big) \| \rN(s_n)\|\\
&\hspace{-1.5mm}  \leq (2+U\kappa_{\max})|s_{n+1} - s_n| + |\mu_{n+1}(s_n)-\mu_n(s_n)|.
\end{align*}
We used, in the last estimate, the fact that $\rC$ and $\mu_{n+1}$ are $1$-Lipschitz, and that $\rN$ is $\kappa_{\max}$-Lipschitz.
Recalling that $U\kappa_{\max} \leq 1/3$ and that $\mu_{n+1}-\mu_n \to 0$ uniformly, we obtain that $|s_n - s_{n+1}|\geq \delta /(2+1/3)+o(1)$ as $n \to \infty$.
Following the theorem statement, consider $t_0<t_1<t_2$ cyclically ordered in $\bT$ and such that $|t_1-t_0| \leq \delta/3$. Then for sufficiently large $n$, one has $|s_{n+1}-s_n| > |t_0-t_1|$, hence one cannot have both $s_n$ and $s_{n+1}$ in $[t_0,t_1]$. 
Without loss of generality, we thus assume that $s_{\vp(n)} \notin [t_0,t_1]$ for all $n \geq 0$, up to changing the extraction to $\psi(n) = \vp(n_0+n)+\sigma(n)$ for a suitable $n_0\geq 0$ and $\sigma(n)\in \{0,1\}$.
\vspace{-0.5\baselineskip}\\

\noindent\emph{The circular geodesic property}.
For readability, we let $\cC_n := \cC_{S_n} = \rC+\mu_n \rN$.
For any $n \geq 0$, one has in the walled domain $\rT\sm \rW(t_2)$
\begin{equation}
\label{eq:locally_geodesic_n}
\Length_{S_n}(\cC_{n|[t_0,t_1]}) 
	\leq \Dist_{S_n} (\cC_n(t_0), \cC_n(t_1)).
\end{equation}
Indeed if this fails, by contradiction, then there exists a path $\cC \in \Lip([t_0,t_1],\rT\sm \rW(t_2))$ with endpoints $\cC(t_0) = \cC_n(t_0)$ and $\cC(t_1) = \cC_n(t_1)$, and such that $\Length_{S_n}(\cC) < \Length_{S_n}(\cC_{n|[t_0,t_1]})$. Then define, as illustrated on \cref{fig_shortcut}a
\begin{equation}
\label{eq:shortcut}
\tilde \cC(t) := 
\begin{cases}
\cC(t), &\text{if } t \in [t_0,t_1],\\
\cC_n(t), &\text{otherwise}.	
\end{cases}
\end{equation}
One has $\tilde \cC(s_{\vp(n)}) = \cC_n(s_{\vp(n)}) = \fx_n$ since $s_{\vp(n)} \notin [t_0,t_1]$, and $\Length_{S_n}(\tilde \cC) < \Length_{S_n}(\cC_n)$ by concatenation.
Note in addition that $\cC$ is homotopic to $\cC_{n|[t_0,t_1]}$, since the walled domain $\rT\sm \rW(t_2)$ is diffeomorphic to $(\bT \sm\{t_2\}) \times [-U,U]$ hence is simply connected, and therefore $\tilde \cC$ is homotopic to $\cC_n$, thus $\tilde \cC \in \Gamma_1$. 
This shows that $\cC_n$ is not a minimizer of \eqref{eq:argmin_path}, with $\cF = \cF^{S_n}$ and $\fx = \fx_n$, which contradicts Eq.~\eqref{eq:successive_min} and \cref{corol:minS_wellposed}. Thus  the inequality \eqref{eq:locally_geodesic_n} holds.

Taking the limit of \eqref{eq:locally_geodesic_n} as $n \to \infty$, and recalling that the length is lower semi-continuous \eqref{eq:length_lsc} whereas the other involved quantities are continuous, we obtain 
\begin{equation*}
\Length_\gS(\cC_{\gS|[t_0,t_1]})\leq \Dist_\gS (\cC_\gS(t_0),\cC_\gS(t_1))
\end{equation*}
in $\rT \sm \rW(t_2)$.
However the converse inequality holds as well by definition of the geodesic distance, which establishes Eq.~\eqref{eq:locally_geodesic}. By the same argument, Eq.~\eqref{eq:locally_geodesic_n} is in fact an equality. 
\vspace{-0.5\baselineskip}\\

\begin{figure*}[t]
\centering
\includegraphics[height=6cm]{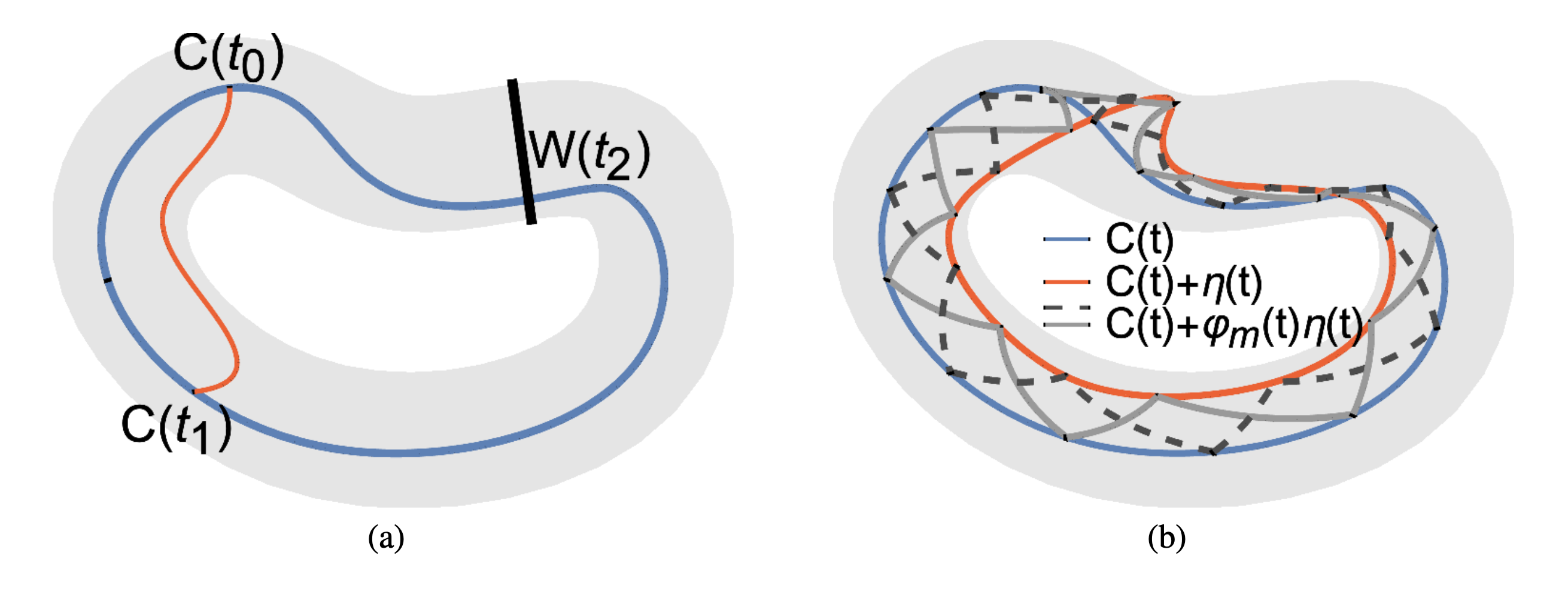}
\caption{
\textbf{a} Illustration of the curve concatenation \eqref{eq:shortcut}. \textbf{b} A global perturbation (orange) of given reference curve (blue) can be expressed as a sum of local perturbations (gray and dashed), as used in Eq.~\eqref{eq:linearity_differential}.
}
\label{fig_shortcut}
\end{figure*}

\noindent\emph{The critical point property, for a local perturbation}.
For readability, and following the notations of the theorem, denote $\cC_\ve := \cC_{S_\ve} = \cC_\gS+\ve \eta$, for all $0< \ve < 1$. 
We assume in this paragraph that $\eta(t) = 0$ for all $t \notin ]t_0,t_1[$, where $t_0 < t_1 < t_2$ and $|t_0-t_1| \leq \delta/3$.
For sufficiently small $\ve>0$, the path section $\cC_{\ve|[t_0,t_1]}$ takes its values in the walled domain $\rT \sm \rW(t_2)$. Thus, by definition of the distance function within this domain,
\begin{equation*}
	\Length_\gS(\cC_{\ve|[t_0,t_1]}) \geq \Dist_\gS(\cC_\ve(t_0),\cC_\ve(t_1)).
\end{equation*}
In view of \eqref{eq:locally_geodesic}, and noting that $\cC_\ve(t) = \cC_\gS(t)$ for all $t \notin ]t_0,t_1[$, we obtain $\Length_\gS(\cC_\ve) \geq \Length_\gS(\cC_\gS)$. This establishes \eqref{eq:critical_point} in this special case. 
\vspace{-0.5\baselineskip}\\

\noindent\emph{The critical point property, general case}.
Let $t'_0 < \cdots < t'_M = t'_0$ be circularly ordered elements of $\bT$ such that $|t'_m - t'_{m+1}| \leq \delta/6$ for all $1 \leq m \leq M$, with periodic indexing. Let also $\vp_m : \bT \to \bR$ be the piecewise affine function such that $\vp_m(t'_m)=1$ and $\vp_m(t'_{\tilde m})=0$ for all $\tilde m \neq m$, also known as the $m$-th ``hat function'', where $1 \leq m \leq M$. Note that $\vp_1+\cdots+\vp_M=1$,  that $0 \leq \vp_m\leq 1$, and that $\supp(\vp_m) = [t'_{m-1}, t'_{m+1}]$ is an interval of length at most $\delta/3$, for any $1 \leq m \leq M$.

We consider a perturbation $\eta$, as in the theorem statement, and denote $\eta_m := \vp_m \eta$ for all $1 \leq m \leq M$. Then 
\begin{equation*}
		\cC_\gS(t) + \ve \eta_m(t) = \cC_\gS(t) + (\ve \vp_m(t)) \, \eta(t) \in \rT
\end{equation*}
for all $\ve \in [0,1]$ and all $t \in \bT$. Therefore $\Length_\gS(\cC_\gS+\ve \eta_m) \geq \Length_\gS(\cC_\gS)$, as shown in the previous paragraph and since $\supp(\eta_m) \subset [t'_{m-1},t'_{m+1}]$ is sufficiently short ($|t'_{m-1} - t'_{m+1}| \leq \delta/3$). Thus 
\begin{align}
\nonumber
&\Length_\gS(\cC_\gS+\ve \eta) - \Length_\gS(\cC_\gS)\\
\nonumber
&=\int_\bT \Big[\cF^\gS\big(\cC_\gS(t)+\ve \eta(t),\, \cC'_\gS(t)+\ve \eta'(t)\big) \\
\nonumber
& \qquad - \cF^\gS\big(\cC_\gS(t),\, \cC'_\gS(t)\big) \Big]\diff t \\ 
\nonumber
&=\ve \int_\bT \big(\<a_\gS(t), \eta(t)\> + \<b_\gS(t),\eta'(t)\>\big) \diff t + o(\ve),\\
\nonumber
&=\ve \sum^{M}_{m=1} \int_\bT \big(\<a_\gS(t), \eta_m(t)\> + \<b_\gS(t),\eta_m'(t)\>\big) \diff t + o(\ve),\\
\nonumber
&=\sum^M_{m=1} \big(\Length_\gS(\cC_\gS+\ve \eta_m) - \Length_\gS(\cC_\gS)\big) + o(\ve), \\
\label{eq:linearity_differential}
&\geq o(\ve),
\end{align}
as $\ve\to 0^+$. We have used the fact that the global perturbation $\eta$ is expressed as the sum of the local perturbations $\eta_m$, as illustrated on \cref{fig_shortcut}b, and the linearity of the first order differential of $\cF^\gS$ at $\cC_\gS$. The coefficients $a_\gS,b_\gS \in C^0( \bT, \bR^2)$ can be expressed in terms of $\cC_\gS$, $\cM$, $\omega_\gS$ and of their first order derivatives.
(Because $\cC_\gS$ is a geodesic, see Eq.~\eqref{eq:locally_geodesic}, it has bounded curvature, hence is continuously differentiable - the curvature bound follows from \cref{prop:bounded_curvature} and the fact that the smooth change of variables $\Phi$ maps the convex band $\bR\times ]-U,U[$ onto the tubular domain $\rT$ where $\cC_\gS$ lies, as in the proof of \cref{th:geodesic_tubular}.
The fields $\cM$ and $\omega_\gS$ are continuously differentiable by assumption. The term $d_{\partial \gS}(\cC_\gS+\ve \eta)^2 = \cO(\ve^2)$ from Eq.~\eqref{eq:rander_stokes} is absent from the Taylor expansion since it is of second order.)

In addition, 
\begin{equation*}
	D(\cC_\gS+\ve \eta\|\cC_\gS) \leq 
	\ve^2 \|\eta\|^2_\infty \Length_\cR(\cC_\gS+\ve \eta) = \cO(\ve^2),
\end{equation*}
in view of the definition \eqref{eq:diver_SS} of $D$, and of the upper bound on the Euclidean distance $d_{\partial\gS}(\cC_\gS(t)+\ve \eta(t)) \leq |\ve \eta(t)| \leq \ve \|\eta\|_\infty$, for any $t\in \bT$ and any $\ve \in [0,1]$. 
Finally
\begin{align*}
	&E(S_\ve) -E(\gS) \geq E_\gS(S_\ve)-E_\gS(\gS)-3 \lambda D(\cC_\gS+\ve \eta\|\cC_\gS)\\
	&=\Length_\gS(\cC_\gS+\ve \eta) - \Length_\gS(\cC_\gS) + \cO(\ve^2) \geq o(\ve),
\end{align*}
using \cref{prop:relaxation_close}. This establishes \eqref{eq:critical_point} and concludes the proof of \cref{th:critical}.

\subsection{Construction of the Randers Vector Field}
\label{subsec_ExistenceVectorField}
In this section, we present the construction of the vector field $\omega_\gS$, which defines the asymmetric part of the Randers metric $\cF^\gS$ used in our geodesic segmentation method.
This is a core ingredient of the introduced region-based Randers geodesic model, which allows  reformulating the region term of the segmentation functional \eqref{eq_HybridEnergy} into a boundary term using Stokes formula, see \cref{th:stokes}. 
Precisely, this subsection is devoted to a constructive proof of the following result.
A discussion of its numerical implementation is presented in \cref{subsec:curl_implem}. 

Recall that the $\alpha$-H\"{o}lder semi-norm of $\xi : \overline \Omega \to \bR$, denoted $|\xi|_{C^\alpha}$, is defined as the smallest constant such that 
$|\xi(\fx)-\xi(\fy)| \leq |\xi|_{C^\alpha} \, \|\fx-\fy\|^\alpha$ for all $\fx,\fy \in \overline \Omega$, where $0<\alpha<1$. The set of shapes $\cX_0$ could be replaced with an arbitrary metric space in the following result, and no assumption is made on the expression of the region-based energy gradient $\xi_\gS$.
The tubular domain $\rT\subset \Omega$ of width $U$ is defined at Eq.~\eqref{eqdef:tubular_domain}.

\begin{theorem}
\label{th:exists_omega}
Assume that $\gS \in \cX_0 \mapsto \xi_\gS \in C^0(\overline \Omega,\bR)$ is continuous, and that $\|\xi_\gS\|_\infty$ and $|\xi_\gS|_{C^\alpha}$ are bounded independently of $\gS \in \cX_0$, for some $0<\alpha<1$.
Then one can define $\omega_\gS \in C^1(\rT,\bR^2)$ such that 
\begin{equation}
\label{eq:curl_omega_th}
\curl \omega_\gS = \xi_\gS \quad \text{on~~} \rT.
\end{equation}
In addition, one has $\|\omega_\gS\|_\infty \leq K U (1+|\ln U|)$ and $\|\diff \omega_\gS\|_\infty \leq K$, where the constant $K$ is independent of $U$, and $\gS \in \cX_0 \mapsto \omega_\gS \in C^0$ is continuous. 
\end{theorem}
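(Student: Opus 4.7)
My approach is to realize $\omega_\gS$ as the rotated gradient of a two-dimensional Newtonian potential associated to $\xi_\gS$. The plan is to first construct a compactly supported Hölder extension $\tilde\xi_\gS \in C^{0,\alpha}(\bR^2)$ of $\xi_\gS$, then set $\Psi_\gS := G * \tilde\xi_\gS$ where $G(\fx) := (2\pi)^{-1} \ln|\fx|$ is the fundamental solution of the Laplacian in the plane, and finally define $\omega_\gS := \nabla^\perp \Psi_\gS$. Since convolution with $G$ inverts $\Delta$, one has $\Delta \Psi_\gS = \tilde\xi_\gS$ pointwise wherever $\tilde\xi_\gS$ is Hölder continuous, and consequently $\curl \omega_\gS = \Delta \Psi_\gS = \xi_\gS$ throughout $\rT$, establishing~\eqref{eq:curl_omega_th}. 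The $C^1$ regularity of $\omega_\gS$ comes for free from the classical fact that the Newtonian potential of a $C^{0,\alpha}$ compactly supported source is itself $C^{2,\alpha}$.

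The extension would be built in the tubular coordinates $(t,u)$ provided by the diffeomorphism $\Phi$ of \cref{prop:Phi}: reflect $\xi_\gS \circ \Phi$ across the walls $u = \pm U$ to obtain a Hölder function on $\bT \times [-2U,2U]$, multiply by a smooth cutoff vanishing near $u = \pm 2U$, push forward via $\Phi$, and extend by zero outside $\rT' := \Phi(\bT \times [-2U,2U])$. Combined with the estimate $U\kappa_{\max} \leq 1/3$ controlling the Jacobian of $\Phi$, this procedure produces $\tilde\xi_\gS$ supported in $\rT'$ with $\|\tilde\xi_\gS\|_\infty \leq C \|\xi_\gS\|_\infty$ and $|\tilde\xi_\gS|_{C^\alpha} \leq C(|\xi_\gS|_{C^\alpha} + \|\xi_\gS\|_\infty)$ for a constant $C$ independent of $\gS$ and $U$, and depending continuously on $\gS$ in $C^0(\bR^2)$. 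For the $L^\infty$ estimate on $\omega_\gS$, I would start from the pointwise bound
\begin{equation*}
|\omega_\gS(\fx)| \leq \frac{\|\tilde\xi_\gS\|_\infty}{2\pi} \int_{\rT'} \frac{d\fy}{|\fx-\fy|},
\end{equation*}
pull the integral back through $\Phi$ using its bi-Lipschitz character, and split the resulting integral in $(t,u)$ coordinates according to whether $|s - t| \leq U$ or $|s - t| > U$. A direct computation yields $O(U)$ for the inner region (ball of radius $U$) and $O(U|\ln U|)$ for the outer region (thin tube of length $O(1)$ and width $O(U)$), producing the claimed bound $\|\omega_\gS\|_\infty \leq KU(1+|\ln U|)$.

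For the Lipschitz bound I would invoke the classical Schauder/Calderon-Zygmund regularity of the Newtonian potential (Gilbarg-Trudinger, Theorem 4.6): since $\tilde\xi_\gS \in C^{0,\alpha}_c(\bR^2)$ with support of diameter bounded by $\mathrm{diam}(\Omega)$ independently of $U$, it follows that $\Psi_\gS \in C^{2,\alpha}(\bR^2)$ with $\|\nabla^2 \Psi_\gS\|_\infty$ controlled by a combination of $|\tilde\xi_\gS|_{C^\alpha}$ and $\|\tilde\xi_\gS\|_\infty$, giving $\|\diff \omega_\gS\|_\infty \leq K$ uniformly. Continuity of $\gS \mapsto \omega_\gS$ in $C^0$ follows from continuity of $\gS \mapsto \tilde\xi_\gS$ in $C^0(\bR^2)$, the uniform $C^{0,\alpha}$ bounds, and the fact that the Biot-Savart integral defines a continuous linear operator on the corresponding space of compactly supported sources. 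The main obstacle will be the careful bookkeeping of the tubular-coordinate integral that produces the logarithmic factor in the $L^\infty$ estimate, and ensuring that the $C^{2,\alpha}$ regularity of the Newtonian potential is applied with constants that are uniform in both $\gS$ and the tube width $U$; the crucial observation is that $\supp \tilde\xi_\gS \subset \Omega$, so the relevant diameter does not shrink as $U \to 0$.
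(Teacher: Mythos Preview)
Your overall construction coincides with the paper's: set $\omega_\gS$ equal to the rotated gradient of the Newtonian potential of $\xi_\gS$ restricted to a slightly enlarged tube, then read off $\curl\omega_\gS=\xi_\gS$ and estimate $\|\omega_\gS\|_\infty$ by pulling the Biot--Savart kernel back through the tubular parametrisation $\Phi$. The paper, incidentally, skips your reflection/cutoff step entirely: since $\rT_2:=\Phi(\bT\times[-2U,2U])\subset\Omega$ and $\xi_\gS$ is already defined and H\"older on all of $\overline\Omega$, it simply integrates $\xi_\gS$ against $\nabla G$ over $\rT_2$ and works with $\xi_\gS\chi_{\rT_2}$, which is not globally H\"older but is H\"older on every ball $B(\fx,U)\subset\rT_2$ with $\fx\in\rT$.

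The gap is in your bound on $\|\diff\omega_\gS\|_\infty$. Your black-box Schauder route requires $|\tilde\xi_\gS|_{C^\alpha}$ to be bounded independently of $U$, but this fails: the cutoff $\chi$ vanishing near $u=\pm 2U$ necessarily has $|\chi|_{C^\alpha}\gtrsim U^{-\alpha}$, so the product rule gives $|\tilde\xi_\gS|_{C^\alpha}\gtrsim \|\xi_\gS\|_\infty\,U^{-\alpha}$, which blows up as $U\to 0$. (You cannot avoid this by enlarging the support, because then the thin-tube geometry that drives the $\|\omega_\gS\|_\infty\leq K U(1+|\ln U|)$ estimate is lost.) The paper circumvents this tension by a direct Hessian estimate: for $\fx\in\rT$ it writes
\[
\nabla^2\vp(\fx)=\int_{\rT_2}\bigl(\xi(\fy)-\chi_{B(\fx,U)}(\fy)\,\xi(\fx)\bigr)\,\nabla^2 G(\fx-\fy)\,\diff\fy,
\]
uses the H\"older regularity of $\xi_\gS$ only on the ball $B(\fx,U)\subset\rT_2$ for the near field (yielding $\tilde C_\xi\cdot 2\pi U^\alpha/\alpha$, which is bounded), and then shows by a change of variables through $\Phi$ that the far-field integral $\int_{\rT_2\setminus B(\fx,U)}\|\fy-\fx\|^{-2}\,\diff\fy$ is bounded by a constant depending only on $c_\Phi$, \emph{uniformly in $U$}, because the domain of integration is a thin tube. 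That geometric cancellation is the missing ingredient; once you replace the Gilbarg--Trudinger citation by this split, your argument goes through and matches the paper's.
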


The proof of \cref{th:exists_omega} is based on elementary potential theory, specialized to the case of a thin tubular domain. The vector field $\omega_\gS$ is eventually obtained as a convolution \eqref{eq:omega_expl}, which can be implemented either directly or through a fast Fourier transform.
As a starter, we recall the expression of the Green kernel of the two-dimensional Poisson PDE, denoted $G : \bR^2 \to \bR$, and its fundamental property:
\begin{align}
\label{eq:green_kernel}
	G(\fx) &:= \ln |\fx|, &
	\Delta G &= 2 \pi \delta_0,
\end{align}
where $\delta_0$ denotes the Dirac mass at the origin, and Eq.~(\ref{eq:green_kernel}, right) is understood in the sense of distributions~\citep{friedlander1998introduction}. 
As recalled in the next result, the convolution of a given function $\xi$ with $G$ yields a solution of the Poisson PDE, in the sense of distributions. A solution to the curl equation~\eqref{eq:curl_omega_th}, which is of interest in this paper, can be deduced. Note that alternative approaches to the curl PDE, with fewer guarantees, are discussed in \cref{subsec:curl_implem}.

\begin{proposition}	
\labelx{prop:laplace}
Let $\xi : \bR^2 \to \bR$ be bounded and compactly supported. Define
\begin{align*}
	\vp(\fx) &:= \int_{\bR^2} \xi(\fy) G(\fx-\fy)\, \diff \fy, &
	\omega(\fx) &:= \frac 1 {2 \pi} (\nabla \vp(\fx))^\perp.
\end{align*}
for all $\fx \in \bR^2$, so that $\vp : \bR^2 \to \bR$ and $\omega : \bR^2 \to \bR^2$.
Then $\Delta\vp=2\pi \xi$ and $\curl \omega = \xi$, in the sense of distributions. 
\end{proposition}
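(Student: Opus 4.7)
The plan is to recognize $\vp = \xi * G$ as a convolution, and then apply the classical property $\Delta G = 2 \pi \delta_0$ together with the fact that convolution commutes with distributional differentiation. The curl identity will follow by a direct two-dimensional computation showing that $\curl((\nabla\vp)^\perp) = \Delta \vp$.

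First I would check that $\vp$ is well-defined and locally integrable on $\bR^2$. Since $\xi$ is bounded and compactly supported, and $G(\fz) = \ln|\fz|$ is locally integrable in the plane (the logarithmic singularity at the origin is integrable in 2D), the integral $\int \xi(\fy) G(\fx-\fy) \, \diff \fy$ converges absolutely for every $\fx$, and defines a locally integrable (in fact continuous) function of $\fx$. This makes $\vp$ a well-defined distribution on $\bR^2$, and justifies testing it against smooth compactly supported functions.

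Next, for the Laplacian identity, I would take any test function $\psi \in C_c^\infty(\bR^2)$ and compute
\begin{equation*}
\langle \Delta \vp, \psi \rangle
= \langle \vp, \Delta \psi \rangle
= \int_{\bR^2}\!\! \int_{\bR^2} \xi(\fy)\, G(\fx-\fy)\, \Delta \psi(\fx)\, \diff \fx\, \diff \fy,
\end{equation*}
where Fubini's theorem applies because the integrand is absolutely integrable: $\xi$ is bounded and compactly supported, $\Delta \psi$ is bounded and compactly supported, and $G$ is locally integrable. Swapping the order of integration and using the defining property $\Delta G = 2\pi \delta_0$ from \eqref{eq:green_kernel} against the translated test function $\psi(\cdot+\fy)$, the inner integral evaluates to $2 \pi \psi(\fy)$, yielding $\langle \Delta \vp, \psi\rangle = 2 \pi \langle \xi, \psi\rangle$. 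This establishes $\Delta \vp = 2\pi \xi$ in the sense of distributions.

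Finally, for the curl equation, I would unpack the definition $\omega = \frac{1}{2\pi}(\nabla \vp)^\perp$. Writing $(\nabla \vp)^\perp = (-\partial_y \vp, \partial_x \vp)$ and using the sign convention $\curl \omega = \partial_x \omega_y - \partial_y \omega_x$ introduced in the footnote, a direct distributional calculation gives
\begin{equation*}
\curl \omega = \tfrac{1}{2\pi}\bigl(\partial_x^2 \vp + \partial_y^2 \vp\bigr) = \tfrac{1}{2\pi}\Delta \vp = \xi,
\end{equation*}
by the previous step. The main potential obstacle is a technical rather than conceptual one: ensuring that the Fubini interchange and the passage of derivatives onto $G$ are valid in the distributional sense, given that $G$ is only locally integrable and not a classical function near the origin. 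This is handled by always moving the derivatives onto the smooth test function $\psi$ first, which is why the argument is conducted entirely in the weak formulation rather than pointwise.
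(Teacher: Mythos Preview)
Your proof is correct and follows essentially the same approach as the paper: convolve with the Green kernel, invoke $\Delta G = 2\pi\delta_0$ distributionally, and then reduce $\curl\omega$ to $\tfrac{1}{2\pi}\Delta\vp$. The only cosmetic difference is that the paper writes the curl step as $\curl\omega = -\diver(\omega^\perp) = \tfrac{1}{2\pi}\diver(\nabla\vp)$ rather than expanding components, and it omits the explicit Fubini/test-function justification that you spell out.
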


\begin{proof}
Differentiating under the integral sign, in the sense of distributions, yields $\Delta \vp(\fx) = \int_{\bR^2} \xi(\fy) \Delta G(\fx-\fy)\diff y = 2 \pi \xi(\fx)$ in view of Eq.~(\ref{eq:green_kernel}, right). On the other hand
\begin{align*}
	\curl \omega &= -\diver(\omega^\perp) = -\tfrac 1 {2 \pi} \diver((\nabla \vp)^{\perp\perp}) = \tfrac 1 {2 \pi}\diver(\nabla \vp), 
\end{align*}
hence $\curl \omega =\tfrac 1 {2 \pi}\Delta \vp = \xi$, as announced.
\qed
\end{proof}

It is tempting to apply~\cref{prop:laplace} to $\xi_\gS$, the gradient of the region-based energy functional $\Psi$, to obtain a solution $\omega_\gS$ to the curl PDE \eqref{eq:curl_omega_th} over the whole domain $\Omega$. 
However this construction fails to obey the second requirement $\|\omega_\gS\|_\infty \leq K U (1+|\ln U|)$ in general. Indeed, it is independent of the width $U$ of the thin tubular domain $\rT \subset \Omega$, which is the only place where~\cref{eq:curl_omega_th} needs to hold.

For the purposes of our construction, we introduce a twice larger tubular domain, defined similarly to~\cref{eqdef:tubular_domain} as 
\begin{equation}
\label{eq_DoubleTube}
\rT_2 := \Phi(\bT \times [-2U,2U]).
\end{equation}
In other words, $\rT_2$ is a neighborhood of width $2U$ of the reference curve $\rC$ fixed in~\cref{subsec_SearchSpace}, and in particular one has $B(\fx,U) \subset \rT_2$ for any $\fx\in \rT$.
By~\cref{prop:Phi} and since $2U \leq \frac 2 3 \lfs(\rC)$ in view of~\cref{eqdef:tubular_domain}, the mapping $\Phi\in C^2(\bT \times [-2U,2U], \rT_2)$ defines a diffeomorphism with Jacobian determinant $1/3 \leq \Jac_\Phi \leq 5/3$.
The diffeomorphism property implies in addition that
\begin{equation}
\label{eq:Phi_inverse_Lipschitz}
	\|\Phi(t,u)-\Phi(\tilde t,\tilde u)\| \geq c_\Phi(|t-\tilde t|+|u-\tilde u|)
\end{equation}
for all $(t,u),(\tilde t,\tilde u) \in  \bT \times [-2 U, 2 U]$, where $c_\Phi>0$ is a constant (independent of $U$).

In \cref{prop:grad_phi_small,prop:hessian_phi_bounded} below, we fix a function $\xi : \rT_2 \to \bR$ and constants $C_\xi$, $\tilde C_\xi$ and $0<\alpha<1$ such that
\begin{align}
\label{eq:Holder_regularity}
	| \xi(\fx) | &\leq C_\xi, &
	|\xi(\fx)-\xi(\fy)| & \leq \tilde C_\xi \|\fx-\fy\|^\alpha,
\end{align}
for all $\fx \in \rT_2$ (resp.\ $\fx \in \rT$ and $\fy \in B(\fx,U)$).
One may choose $C_\xi := \|\xi\|_\infty$ and $\tilde C_\xi := |\xi|_{C^\alpha}$.
We define 
\begin{equation}
\label{eq:potential_band}
	\vp(\fx) := \int_{\rT_2} \xi(\fy) G(\fx-\fy) \diff \fy,
\end{equation}
and note that $\Delta \vp = 2\pi \, \xi$ on $\rT$, by \cref{prop:laplace} applied to $\xi \chi_{\rT_2}$. 
The following two results estimate the magnitude of the gradient $\nabla \vp$ and of the hessian $\nabla^2\vp$ on $\rT$. 
Let us stress that the constants $K$ and $\tilde K$ are independent of the tube width $U$, subject to $0 < U \leq \lfs(\rC)/3$ see~\cref{eqdef:tubular_domain}.

\begin{proposition}
\labelx{prop:grad_phi_small}
Under the assumptions in Eqs.~(\ref{eq:Phi_inverse_Lipschitz}, left), \eqref{eq:Holder_regularity}, and \eqref{eq:potential_band}, one has 
$\| \nabla \vp (\fx)\| \leq K U(1+ |\ln U|)$ for all $\fx \in \rT$, with constant $K = K(C_\xi,c_\Phi)$.
\end{proposition}

\begin{proof}
We have by differentiation under the integral sign
\begin{align*}
	\nabla \vp(\fx) &= \int_{\rT_2} \xi(\fy) \nabla G(\fx-\fy) \diff y, &
	\text{with }\nabla G(\fz) &= \frac {\dfe_\fz}{\|\fz\|},
\end{align*}
and where $\dfe_\fz := \fz/\|\fz\|$. Therefore, introducing $t_\fx\in \bT$ and $u_\fx \in [-U,U]$ such that $\fx = \Phi(t_\fx,u_\fx)$, we obtain
\begin{align*}
	\|\nabla \vp(\fx)\|
	&\leq \int_{\rT_2} |\xi(\fy)| \, \|\nabla G(\fx-\fy)\| \,\diff \fy\\
	&\leq C_\xi \int_{\rT_2} \frac {\diff \fy} {\|\fx-\fy\|}\\
	& = C_\xi \int_\bT \int_{-2U}^{2U} 
	\frac{\Jac_\Phi(t,u) \, \diff u\, \diff t}{\|\Phi(t,u) - \Phi(t_\fx,u_\fx)\|}\\
	& \leq C_1 \int_\bT \int_{-2U}^{2U} \frac{\diff u\, \diff t}{|t-t_\fx|+|u-u_\fx|}\\
	& \leq 4 C_1 \int_0^{\frac 1 2} \int_0^{3 U} \frac{\diff u\, \diff t}{t+u} \\
	& = 2 C_1 [-6 U \ln (6U) + (6U+1) \ln (6U+1) ].
\end{align*}
We used successively (i) the triangular inequality, and (ii) the upper bound (\ref{eq:Holder_regularity}, left) and the identity $\|\nabla G(\fz)\| = 1/\|\fz\|$. Note that $\fz \in \bR^2 \mapsto 1/\|\fz\|$ is integrable over bounded regions of $\bR^2$, and thus $\nabla \vp(\fx)$ is obtained as a classical summable integral. The subsequent estimates use successively (iii) a change of variables by the diffeomorphism $\Phi$, (iv) the upper bound $\Jac_\Phi \leq 5/3$ on the Jacobian determinant and the inverse Lipschitz property \eqref{eq:Phi_inverse_Lipschitz}, with $C_1 = \frac 5 3 C_\xi/c_\Phi$, (v) the fact that $\bT = \bR/\bZ$ is the unit segment with periodic boundary conditions, and that $u_\fx \in [-U,U]$, and (vi) an exact integral computation. The result follows.
\qed
\end{proof}

\begin{proposition}
\labelx{prop:hessian_phi_bounded}
Under the assumptions in~\cref{eq:Phi_inverse_Lipschitz,eq:Holder_regularity,eq:potential_band}, one has 
$\| \nabla^2 \vp(\fx)\| \leq \tilde K$ for all $\fx \in \rT$, with constant $\tilde K = \tilde K(C_\xi,\tilde C_\xi, c_\Phi,\alpha)$.
\end{proposition}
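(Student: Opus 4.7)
The plan is to extend the argument of \cref{prop:grad_phi_small} to the second derivative, using H\"older regularity of $\xi$ to compensate for the non-integrability of $\nabla^2 G$. A direct computation yields, for $\fz \neq 0$,
\begin{equation*}
\partial_i \partial_j G(\fz) = \frac{\delta_{ij} - 2 e_i e_j}{\|\fz\|^2}, \qquad \fe := \fz/\|\fz\|,
\end{equation*}
which has a non-integrable $\|\fz\|^{-2}$ singularity in $\bR^2$. Its saving grace is the angular cancellation $\int_0^{2\pi}(\delta_{ij} - 2 e_i(\theta) e_j(\theta))\,\diff\theta = 0$, which implies $\int_{B(\fx,r_2)\setminus B(\fx,r_1)} \nabla^2 G(\fx-\fy)\,\diff\fy = 0$ for any $0 < r_1 < r_2$.

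Fix $\fx \in \rT$ and recall that $B(\fx,U) \subset \rT_2$. Writing $\xi(\fy) = \xi(\fx) + (\xi(\fy)-\xi(\fx))$ on the near ball, the constant contribution vanishes by the above cancellation, leading to the formal identity
\begin{align*}
\nabla^2 \vp(\fx) &= \int_{B(\fx,U)} \bigl(\xi(\fy) - \xi(\fx)\bigr)\, \nabla^2 G(\fx-\fy)\,\diff\fy \\
&\quad + \int_{\rT_2 \setminus B(\fx,U)} \xi(\fy)\, \nabla^2 G(\fx-\fy)\,\diff\fy.
\end{align*}
The near integral is estimated via the H\"older bound in \eqref{eq:Holder_regularity}: the integrand is dominated by $3\tilde C_\xi \|\fy-\fx\|^{\alpha-2}$, which is integrable in two dimensions, giving a contribution of order $\tilde C_\xi U^\alpha/\alpha$. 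The far integral is treated by adapting the computation of \cref{prop:grad_phi_small}: after change of variables through $\Phi$ and using the inverse Lipschitz estimate \eqref{eq:Phi_inverse_Lipschitz}, the integrand is bounded by a constant multiple of $C_\xi \min\bigl(U^{-2},\,c_\Phi^{-2}(|s|+|v|)^{-2}\bigr)$ with $(s,v) := (t-t_\fx, u-u_\fx)$. Splitting the integration domain at $|s|+|v| = U/c_\Phi$ gives a bounded contribution from the inner square (area $\sim U^2$ times integrand $\sim U^{-2}$), while the outer region yields $\int_0^{3U} \diff v/\max(v,U/c_\Phi)$, which is bounded independently of $U$.

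The main obstacle is to justify the above formal identity rigorously, since one cannot differentiate under the integral sign with a kernel of order $\|\fx-\fy\|^{-2}$. The cleanest route is to regularize: replace $G$ by a smooth mollification $G_\ve$ agreeing with $G$ outside $B(0,\ve)$, define $\vp_\ve := G_\ve \ast (\xi \chi_{\rT_2})$, differentiate twice classically under the integral, and perform the same near/far decomposition together with the constant subtraction. The H\"older estimate provides an $\ve$-uniform integrable dominating function near the singularity, so dominated convergence as $\ve \to 0^+$ yields both the desired identity and the uniform bound $\|\nabla^2 \vp(\fx)\| \leq \tilde K$, with $\tilde K$ depending only on $C_\xi,\,\tilde C_\xi,\,c_\Phi,\,\alpha$ and the previously fixed geometric constants of $\Phi$.
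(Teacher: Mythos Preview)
Your proof is correct and follows essentially the same route as the paper: the same explicit form of $\nabla^2 G$, the same angular cancellation $\int_{r<\|\fz\|<R}\nabla^2 G(\fz)\,\diff\fz=0$, the same near/far decomposition at radius $U$, the H\"older bound on the near part yielding a contribution of order $\tilde C_\xi U^\alpha/\alpha$, and the change of variables through $\Phi$ together with the inverse Lipschitz estimate \eqref{eq:Phi_inverse_Lipschitz} on the far part. Two minor differences are worth noting: the paper's far-integral estimate is slightly more streamlined (it drops $|u-u_\fx|$ from the denominator immediately, since $\max\{U,c_\Phi|t-t_\fx|\}$ already provides a valid lower bound, and then integrates $u$ out to get a single one-dimensional integral $\int_0^{1/2}\diff t/\max\{U,c_\Phi t\}^2$), whereas your splitting at $|s|+|v|=U/c_\Phi$ requires a bit more care to make the ``outer region'' bound precise; and conversely, your mollification argument makes the justification of the singular-integral identity fully explicit, where the paper simply invokes a classical potential-theory reference.
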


\begin{proof}
A classical argument from potential theory \citep{friedlander1998introduction} yields
\begin{equation*}
\nabla^2 \vp(\fx) = \int_{\rT_2} \big(\xi(\fy) - \chi_{B(\fx,U)}(\fy)\xi(\fx) \big) \nabla^2 G(\fx-\fy) \diff \fy,
\end{equation*}
where denoting $\dfe_\fz := \fz/\|\fz\|$ one has 
\begin{align*}
\nabla^2 G(\fz) &= \frac{\Id - 2 \dfe_\fz \dfe_\fz^\top }{\|\fz\|^2}, &
\| \nabla^2 G(\fz)\| &= \frac 1 {\|\fz\|^2}.
\end{align*}
By definition of the characteristic function, $\chi_{B(\fx,U)}(\fy) = 1$ if $\|\fx-\fy\| < U$, and $\chi_{B(\fx,U)}(\fy) = 0$ otherwise.
As a side note, the ball $B(\fx,U)$ appearing in the integral expression of $\nabla^2\vp$ could be replaced with any ball $B(\fx,r) \subset \rT_2$ of positive radius, since for any radii $0<r<R$ 
\begin{align*}
	&\int_{r<|\fz|<R} \nabla^2 G(\fz)\, \diff \fz \\
	&= 
	\int_r^R \frac {\diff \rho} \rho
	\int_0^{2 \pi}
	\begin{pmatrix}
		1-2\cos^2\theta & -2\cos \theta \sin \theta\\
		-2\cos \theta \sin \theta & 1-2\sin^2 \theta
	\end{pmatrix} \diff \theta
	=0.
\end{align*}
The boundedness and H\"{o}lder semi-norm regularity \eqref{eq:Holder_regularity} of $\xi$ yields
\begin{align*}
	&\|\nabla^2 \vp(\fx)\| \\
	&\leq \int_{\rT_2} |\xi(\fy) - \chi_{B(\fx,U)}(\fy) \xi(\fx) | 
	\|\nabla^2 G(\fx-\fy)\| \diff \fy\\	
	& \leq  
	\tilde C_\xi\int_{B(\fx,U)} \frac{ \diff \fy}{\|\fx-\fy\|^{2-\alpha}}
	+ C_\xi \int_{\rT_2\sm B(\fx,U)} \frac { \diff \fy}{\|\fy-\fx\|^2}.
\end{align*}
The first integral term equals $2 \pi U^\alpha/\alpha$ which is bounded by $2\pi (\lfs(\rC)/3)^\alpha/\alpha$. The second integral is likewise bounded independently of $U$. Indeed, introducing $t_\fx\in \bT$ and $u_\fx \in [-U,U]$ such that $\fx = \Phi(t_\fx,u_\fx)$, we obtain
\begin{align*}
	&\int_{\rT_2} \frac { \diff \fy}{\max\{U,\|\fy-\fx\|\}^2}\\
	&=\int_\bT \int_{-2U}^{2U} 
	\frac{\Jac_\Phi(t,u) \, \diff u\, \diff t}{\max \{U,\|\Phi(t,u) - \Phi(t_\fx,u_\fx)\|\}^2}\\
	&\leq \tfrac 5 3 \int_\bT \int_{-2U}^{2U} \frac{\diff u\, \diff t}{\max \{U,c_\Phi |t-t_\fx|\}^2}\\
	& \leq \tfrac{40} 3 U \int_0^{\frac 1 2} \frac{\diff t}{\max \{U, c_\Phi t\}^2}\\
	&=\tfrac{40} 3 (2/c_\Phi-2U/c_\Phi^2) \leq 80/(3 c_\Phi).
\end{align*}
We used successively (i) a change of variables by $\Phi$, (ii) the inverse Lipschitz estimate Eq.~\eqref{eq:Phi_inverse_Lipschitz} and the bound $\Jac_\Phi(l,u) \leq 5/3$ on the Jacobian determinant, (iii) the fact that the integrand is independent of $U$, and that $\bT$ is the periodic unit segment, (iv) exact integration assuming w.l.o.g.\ $U<c_\Phi/2$. The announced result follows, and note that $\nabla^2\vp$ is obtained as a classical summable integral. 
\qed
\end{proof}

\noindent\emph{Conclusion of the proof of \cref{th:exists_omega}.}
Define $\vp_\gS : \rT \to \bR$ by Eq.~\eqref{eq:potential_band}, and $\omega_\gS := \frac 1 {2\pi} (\nabla \vp_\gS)^\perp$. 
Then $\curl \omega_\gS = \xi_\gS$ by \cref{prop:laplace}, $2 \pi \|\omega_\gS\|_\infty = \|\nabla \vp_\gS\|_\infty \leq K U (1+|\ln U|)$ by \cref{prop:grad_phi_small}, and $2 \pi\| \diff \omega_\gS \|_\infty = \|\nabla ^2 \vp_\gS\| \leq \tilde K$ by \cref{prop:hessian_phi_bounded}, as announced and where the constants $K$ and $\tilde K$ are independent of $U$.

In addition, the proofs of \cref{prop:grad_phi_small} and \cref{prop:hessian_phi_bounded} reveal that $\nabla \vp$ and $\nabla^2 \vp$ are obtained as summable integrals. From this point, a standard application of Lebesgue's dominated convergence theorem yields that $\nabla \vp$ and $\nabla^2 \vp$ depends continuously on $\xi$ and $\fx$. As a result, $\omega_\gS$ depends continuously on $\gS$ through $\xi_\gS$, and $\diff \omega_\gS$ depends continuously on $\fx$, which concludes the proof of \cref{th:exists_omega}.

\begin{figure*}[t]
\centering
\includegraphics[height=6.6cm]{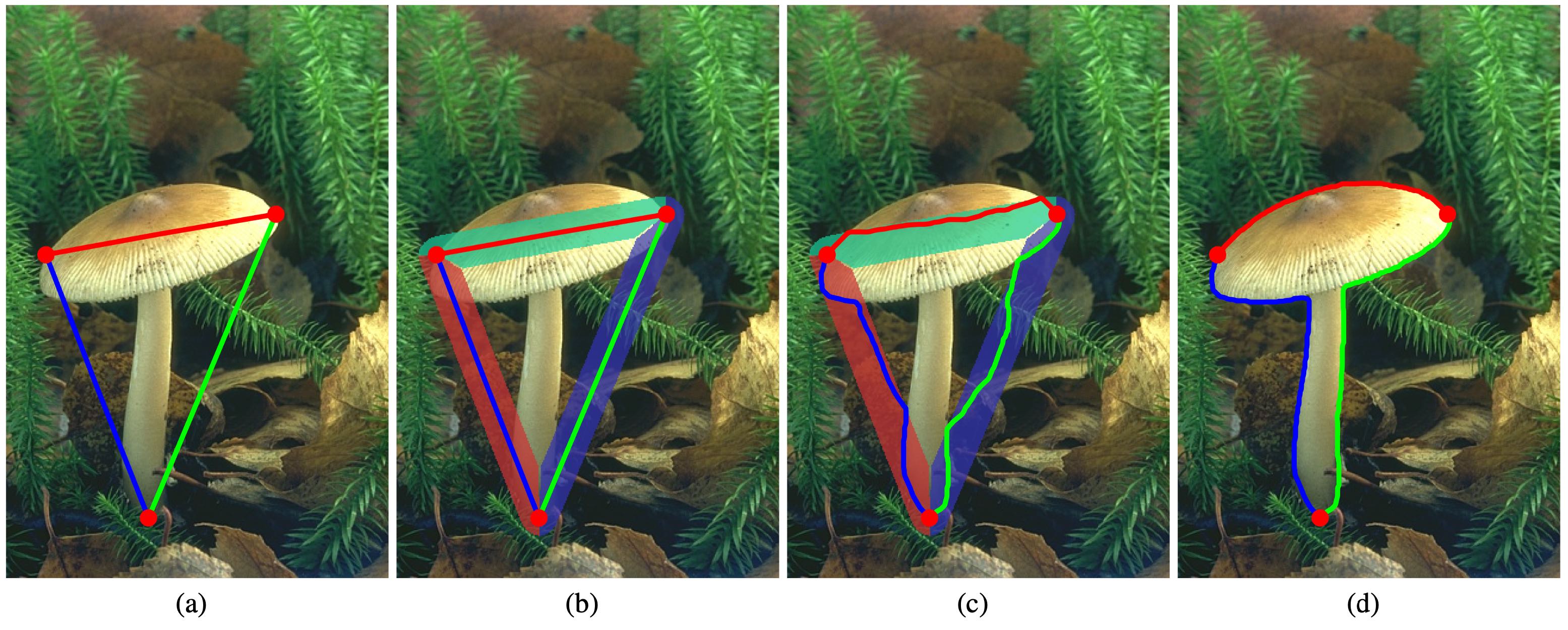}
\caption{An example for extracting piecewise geodesic paths using three landmark points. \textbf{a} The red dots represent the landmark points and the solid lines form the initial contour. \textbf{b} The decomposition of the tubular neighborhood, where the decomposed subregions are denoted by different colors. \textbf{c} Extracted geodesic paths within the decomposed subregions, where the initial contour is shown as the solid lines in figure (b). \textbf{d} Final segmentation result obtained by running a few contour evolution iterations.}
\label{fig:FixedPoints}	
\end{figure*}

\subsection{Summary} 
\label{subsec_Summary}

In this section, we have established a new solution to the active contours problem based on the Randers geodesic model. The proposed region-based Randers geodesic segmentation model, as stated in Line~\ref{line:closed_geodesic} of Algorithm~\ref{alg_ContourEvolution}, relies on a series of geodesics going from a point $\fx$ to itself and which travels along the tubular domain $\rT$.
In the practical implementation, a wall passing through the point $\fx$ is introduced and serves as an obstacle in the domain for the front propagation of the fast marching method, see Eq.~\eqref{eqdef:wall}. The computed geodesic path joins two slight perturbations of the point $\fx$ on each side of the wall, similarly to~\citep{appleton2005globally,chen2021elastica}.

While the tubular domain is fixed for the theoretical analysis of \cref{alg_ContourEvolution}, it is in contrast usually updated at each iteration of the practical implementation.
During the contour evolution, one often chooses the parametrization of the boundary $\partial{S_{n}}$ as the centerline $\rC$ to build the tubular domain $\rT:=\rT_{n}$
\begin{equation}
\label{eq_TubeDomainImp}
\rT_{n}=\left\{\fx\in\Omega \mid d_{\partial{S}_n}(\fx)<U\right\},	
\end{equation}
where $d_{\partial{S}_n}(\fx)$ denotes the Euclidean distance between $\fx$ and the shape boundary $\partial{S}_n$. 

\Cref{alg_ContourEvolution} also requires choosing a point $\fx_n \in \partial S_n$ far enough from the previous point $\fx_{n-1} \in S_n \cap S_{n-1}$, for all $n \geq 1$. In the practical implementation, we choose $\fx_n$ in such way that the two connected components of $\partial S_n \sm \{\fx_n, \fx_{n-1}\}$ have the same Euclidean length. This choice, as illustrated in~\cref{fig_ChoosingSRCPoint}a, fits the assumptions of \cref{th:successive_min}. The point $\fx_n$ is used as the source point for the computation of a distance map, with respect to the Randers metric based on the gradient of the region-based energy functional and the solution to a curl PDE. The next shape contour $\partial S_{n+1}$ is then obtained by geodesic backtracking. By iterating the main loop of \cref{alg_ContourEvolution}, a subsequence $S_{\vp(n)}$ of the constructed shapes converges to a critical point of the energy functional~\eqref{eq_HybridEnergy}, as established in~\cref{th:critical}. In practice, the shapes appear to stabilize quickly and the obtained contour matches the target boundary.

There are several additional practical modifications that can enhance the performance of the proposed algorithm in complicated scenarios.
A first avenue, as described in \cref{sec_SegmentationAlgs}, is to take into account additional user intervention, which turns \cref{alg_ContourEvolution} into an interactive image segmentation method. This variant involves a set of ordered landmark points lying on the boundary of interest, so that the evolving contour is a piecewise geodesic path, rather than a single geodesic path traveling from a point to itself. A second discussion, presented in~\cref{sec_PracticalRanders}, is devoted to the construction of the Randers vector field $\omega_\gS$ and of the tensor field $\cM$ defining the Randers metric, and to several techniques meant to increase the width $U$ of the tubular search space $\rT$.

\section{Interactive Image Segmentation}
\label{sec_SegmentationAlgs}

In this section, we discuss practical enhancements to the construction of the evolving contours $\cC_n:=\cC_{S_n}$, $n\geq 0$, used in our segmentation method.
A first fundamental ingredient, which adds some flexibility to our approach, is to regard each contour $\cC_n$ as an end-to-end concatenation of finite set of open geodesic paths, rather than as a single geodesic path from a point to itself, see \cref{subsec_FixedPointsScheme}.
Recall that path concatenation is defined as 
\begin{equation}
\label{eq_ConcaOperator}
(\gamma_1\doublecup\gamma_2)(t)=
\begin{cases}
\gamma_1(2t),    &\text{if}~0\leq t\leq \frac{1}{2},\\
\gamma_2(2t-1),	 &\text{if}~\frac{1}{2}\leq t\leq 1,
\end{cases}	
\end{equation}
where $\gamma_1,\,\gamma_2\in\Lip([0,1],\overline\Omega)$ are open curves subject to $\gamma_1(1)=\gamma_2(0)$. 
A second ingredient, described in \cref{subsec_CombPaths,subsec_Polygon}, is a careful choice of the segmentation initialization, in other words of the curve $\cC_0$. 

\subsection{Extracting Piecewise Geodesic Paths}
\label{subsec_FixedPointsScheme}
We introduce a scheme for the extraction of piecewise geodesic paths based on a set of landmark points $\fp_1,\cdots,\fp_m \in \Omega$, which are distributed along the target boundary in a counterclockwise order. In our model, these landmark points are taken as the endpoints of the geodesic paths and are not allowed to move in the course of the contour evolution. 

The proposed segmentation model involves two main steps: (i) the decomposition of the tubular neighborhood, and (ii) the computation of Randers geodesic paths using a set of landmark points, as depicted in Fig.~\ref{fig:FixedPoints}. In Fig.~\ref{fig:FixedPoints}a, we illustrate an example of $3$ landmark points $\{\fp_k\}_{1\leq k \leq 3}$  which are visualized as red dots. The solid lines indicated by different colors form a polygon, which serves as the initial contour $\partial{S}_0$. We leave the description of the construction methods for the initial contour to Section~\ref{subsec_Polygon}.

\subsubsection{Tubular Neighborhood Decomposition}
In~\Cref{alg_ContourEvolution}, the input contour for each iteration is $\cC_{n}$, which is the parametrization of the shape boundary $\partial{S}_n$. As above, the initial contour is denoted by $\cC_0$. Then one can obtain a tubular domain $\rT_n$ whose centerline is $\cC_{n}$, as defined in Eq.~\eqref{eq_TubeDomainImp}. We assume that $\cC_{n}$ is the concatenation of $m$ paths $\cG_{k,n}\in\Lip([0,1],\rT_n)$, $1\leq k \leq m$, i.e.\ 
\begin{equation}
\label{eq_ContourDecomp}
\cC_{n}=\cG_{1,n}\doublecup\cG_{2,n}\doublecup\cdots\doublecup\cG_{m,n},
\end{equation}
where the operator $\doublecup$ is defined in Eq.~\eqref{eq_ConcaOperator}. 
Each path $\cG_{k,n}$ travels from the landmark point $\fp_k$ to the successive point\footnote{In the rest of this paper, we regard $\fp_1$ as the successive point of the landmark point $\fp_m$, i.e.\ $\fp_{m+1} := \fp_1$ by convention.} $\fp_{k+1}$. More precisely, one has
\begin{equation}
\label{eq_EndtoEnd}
\cG_{k,n}(0)=\fp_{k},\text{~~and~~}\cG_{k,n}(1)=\fp_{k+1}.
\end{equation}

With these definitions, we decompose the tubular domain $\rT_{n}$ into a family of subregions $\{Z_{k,n}\}_{1\leq k\leq m}$ such that
\begin{align*}
&Z_{k,n}=\tilde{Z}_{k,n}\cup\{\fp_{k+1},\fp_k\},\quad\forall k<m,\\
&Z_{m,n}=\tilde{Z}_{m,n}\cup\{\fp_{1},\fp_m\}.
\end{align*}
The subregion $\tilde{Z}_{k,n}$ is defined for any index $1\leq k\leq m$ by
\begin{equation}
\label{eq_SubRegions}
\tilde{Z}_{k,n}:=\big\{\fx\in \rT_{n}\mid \forall l\neq k,~\tilde{d}(\fx,\cG_{k,n})<\tilde{d}(\fx,\cG_{l,n})\big\},
\end{equation}
where $d(\fx,\gamma)$ stands for the Euclidean distance between  $\fx$ and a given path $\gamma\in \Lip([0,1],\overline\Omega)$
\begin{equation*}
\tilde{d}(\fx,\gamma):=\min_{0<t<1}\|\fx-\gamma(t)\|.	
\end{equation*}
Each subdomain $\tilde{Z}_{k,n}$ can be treated as the tubular neighborhood of the open path $\cG_{k,n}$. In  Figs.~\ref{fig:FixedPoints}c and~\ref{fig:FixedPoints}d, the transparent regions of different colors illustrate the sets $\tilde{Z}_{k,n}$, $1 \leq k \leq m$.  In other words, each path $\cG_{k,n}$ serves as the \emph{centerline} of the tubular neighborhood $\tilde{Z}_{k,n}$.

\subsubsection{Extraction of Piecewise Randers Geodesic Paths}
In each subregion $Z_{k,n}\subset\rT_{n}$, a geodesic path $\cG_{k,j}$ from $\fp_k$ to $\fp_{k+1}$ can be extracted by solving the minimization problem  
\begin{align}
\label{eq_LengthAgain}
\min\big\{\Length_{S_n}(\gamma) & \mid \gamma\in\Lip([0,1],Z_{k,n}),\nonumber\\
&\gamma(0)=\fp_{k},\,\gamma(1)=\fp_{k+1}\big\},
\end{align}
where $\Length_{S_n}(\gamma)$ is the energy of the curve $\gamma$ associated to the Randers geodesic metric $\cF^{S_n}$, see Eq.~\eqref{eq:rander_stokes}. As discussed in Section~\ref{sec_Background}, this is achieved by computing a geodesic distance map $\rD_{k,n}:Z_{k,n}\to[0,\infty[$, which is the viscosity solution to the Randers eikonal PDE~\eqref{eq_FinslerEikonal}.

Then, the extraction of the geodesic paths $\cG_{k,n+1}$ for $1\leq k\leq m$ is implemented by performing the gradient descent on the corresponding geodesic distance maps $\rD_{k,n}$, as described in Eq.~\eqref{eq:backtracking_ODE}. We illustrate the extracted geodesic paths $\cG_{k,n}$ for $1\leq k \leq 3$ and $n=1$ in Fig.~\ref{fig:FixedPoints}c. Eventually, the desired contour $\cC_{n+1}$ is generated as the concatenation of the geodesic paths $\cG_{k,n+1}$ as follows:
\begin{equation}
\label{eq_GeoConcatenation}
\cC_{n+1}=\cG_{1,n+1}\doublecup\cG_{2,n+1}\doublecup\cdots\doublecup\cG_{m,n+1}.
\end{equation}
Each geodesic path $\cG_{k,n+1}$ lies within the subregion $Z_{k,n}$, and one has $\Length_{S_{n}}(\cC_{n+1})\leq \Length_{S_{n}}(\cC_{n})$.
Note that the intersection between two adjacent subregions $Z_{k,n}$ and $Z_{k+1,n}$ is the point $\fp_k$, which guarantees that the contour $\cC_{n+1}$ is simple.

In Section~\ref{subsec_Polygon}, we introduce two methods, named \emph{polygon construction method} and \emph{simple closed contour construction method}, for building the initial contour used for extracting piecewise geodesic paths. These methods are taken as the variants of the combination of piecewise geodesic paths model~\citep{mille2015combination}, see Section~\ref{subsec_CombPaths}. 

\subsection{Overview of the Combination of Piecewise Geodesic Paths Model}
\label{subsec_CombPaths}
\citet{mille2015combination} introduced a combination of piecewise geodesic paths model for interactive image segmentation, which also involves a collection of landmark points $\{\fp_k\}_{1\leq k\leq m}$. This technique can be used in the construction of the initial guess $\cC_0$ of our method. The goal of the combination of piecewise geodesic paths model is to search for a simple and closed contour by minimizing the following energy:
\begin{equation}
\label{eq_CombEnergy}
E_{\rm comb}(\cC):=E_{\rm simp}(\cC)+z_{\rm edge}\,E_{\rm edge}(\cC)+z_{\rm r}\,\Psi(\chi_{S}),
\end{equation}
where $\cC$ belongs to a set of candidate curves, and $S$ denotes the region enclosed by $\cC$. The parameters $z_{\rm edge},\,z_{\rm r}\in[0,\infty[$ determine the relative importance between different energy terms. The simplicity term $E_{\rm simp}(\cC)$ measures the amount of the self-tangency and self-intersection of $\cC$, in such way that a simple curve $\cC$ yields a low value of $E_{\rm simp}(\cC)$. Specifically, the self-tangency in  $E_{\rm simp}(\cC)$ is quantified through an implicit measurement of the length of overlapped curve segments of $\cC$, implemented by considering  the zero-level set of a scalar-valued function $\phi(u,v):=\|\cC(u)-\cC(v)\|^2,\,\forall u,v\in[0,1]$. Moreover, the quantification of curve self-intersection is carried out by detecting inverted loops and calculating signed area values of these loops.

%

\begin{figure*}[t]
\centering
\includegraphics[height=6cm]{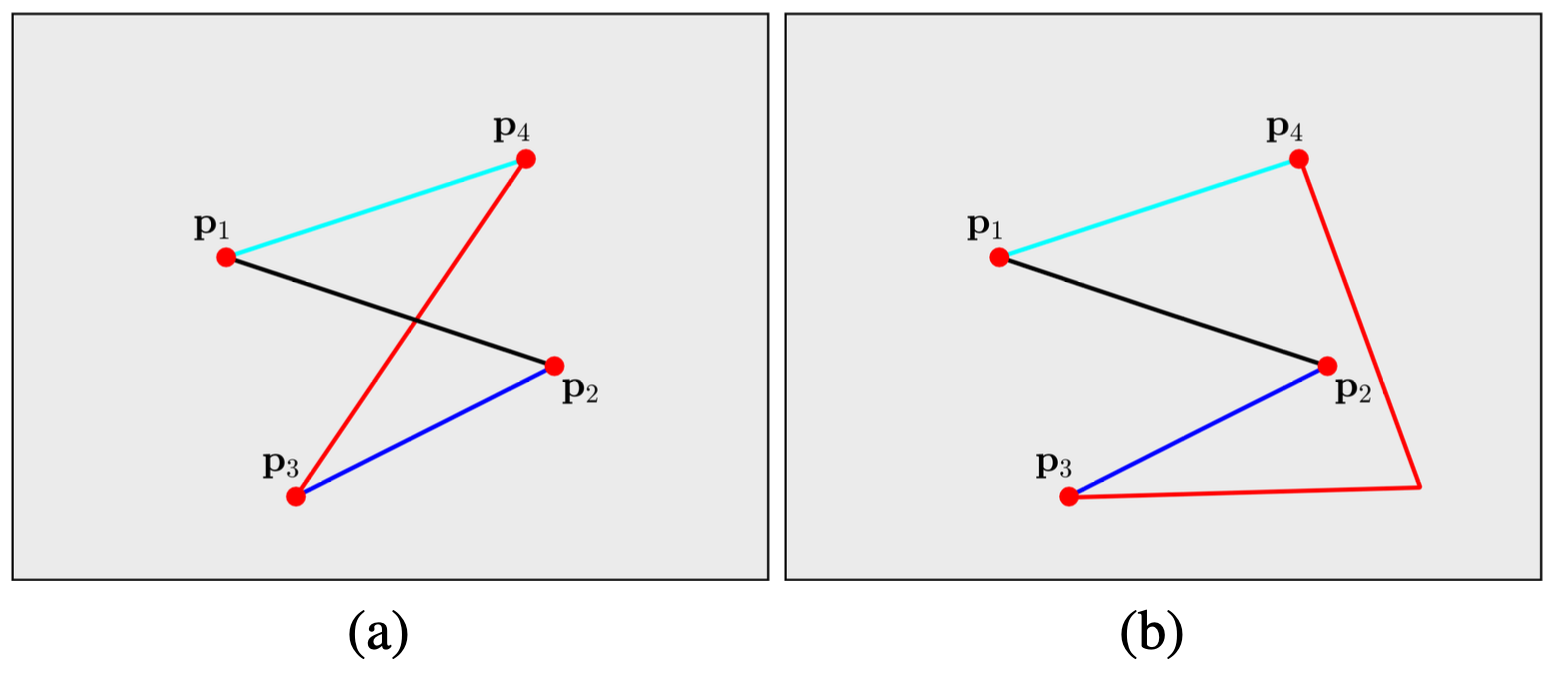}
\caption{An example for building the initial contour as a polygon. The landmark points are indicated by red dots. \textbf{a} Initial contour encountered with a self-intersection issue. \textbf{b} The initial contour that is composed of four polylines (indicated by lines with different colors) obtained through the introduced polygon construction method.}
\label{fig:Selfcrossing}
\end{figure*}

The energy term $E_{\rm edge}$ is the normalized edge-based energy along the curve  $\cC$, defined as
\begin{equation}
E_{\rm edge}(\cC):=\frac{1}{\kL(\cC)}\int_0^1\cP_{\rm comb}(\cC(t))\|\cC^\prime(t)\|dt
\end{equation}
where $\kL(\cC)$ is the Euclidean length of $\cC$, and where $\cP_{\rm comb}:\overline\Omega\to\bR^+$ is a potential that takes low values around the image edges. Let $g:\Omega\to[0,\infty[$ be a scalar-valued function carrying the image edge-based features, see Appendix~\ref{Appendix_EdgeFeatures}. As considered in~\citep{mille2015combination}, the potential $\cP_{\rm comb}$ can be computed as follows
\begin{equation}
\label{eq_PotentialComb}
\cP_{\rm comb}(\fx)=\epsilon_{\rm comb}+\max(0,1-z_{\rm comb}\,g(\fx)),
\end{equation}
where $\epsilon_{\rm comb}\in\bR^+$ is a scalar parameter dominating the regularization of geodesic paths, and the weight $z_{\rm comb}\in\bR^+$ controls the importance of the  edge-based features.

The region-based appearance term $\Psi$ is set as the Bhattacharyya coefficient~\citep{michailovich2007image} of the histograms of the image colors inside and outside $\cC$, respectively denoted by $\mathfrak{H}_{\rm in}$ and $\mathfrak{H}_{\rm out}$. The Bhattacharyya coefficient can be expressed as 
\begin{equation}
\label{eq_BhaCoeff}
\Psi(\chi_\aS)	=\int_{\mathbf{Q}}\sqrt{\mathfrak{H}_{\rm in}(\mathbf{q},\chi_\aS)\,\mathfrak{H}_{\rm out}(\mathbf{q},\chi_\aS)}\,d\mathbf{q},
\end{equation}
where $\mathbf{Q}$ stands for the image feature space. The histograms $\mathfrak{H}_{\rm in},\,\mathfrak{H}_{\rm out}$ are estimated using a Gaussian kernel $\cK_\sigma$ of standard deviation $\sigma$ as follows
\begin{align}	
\label{eq_PDFIn}
&\mathfrak{H}_{\rm in}(\fq,\chi_\aS)=\frac{1}{\Leb(\aS)}\int_{\aS} \cK_{\sigma}(\fq-\fI(\fx))d\fx,\\
\label{eq_PDFOut}
&\mathfrak{H}_{\rm out}(\fq,\chi_\aS)=\frac{1}{\Leb(\Omega\backslash\aS)}\int_{\Omega\backslash\aS}\cK_{\sigma}(\fq-\fI(\fx))d\fx,
\end{align}
where $\fI:\overline\Omega\to \bR^3$ is a vector-valued image.

\begin{figure*}[!t]
\centering
\includegraphics[height=4.8cm]{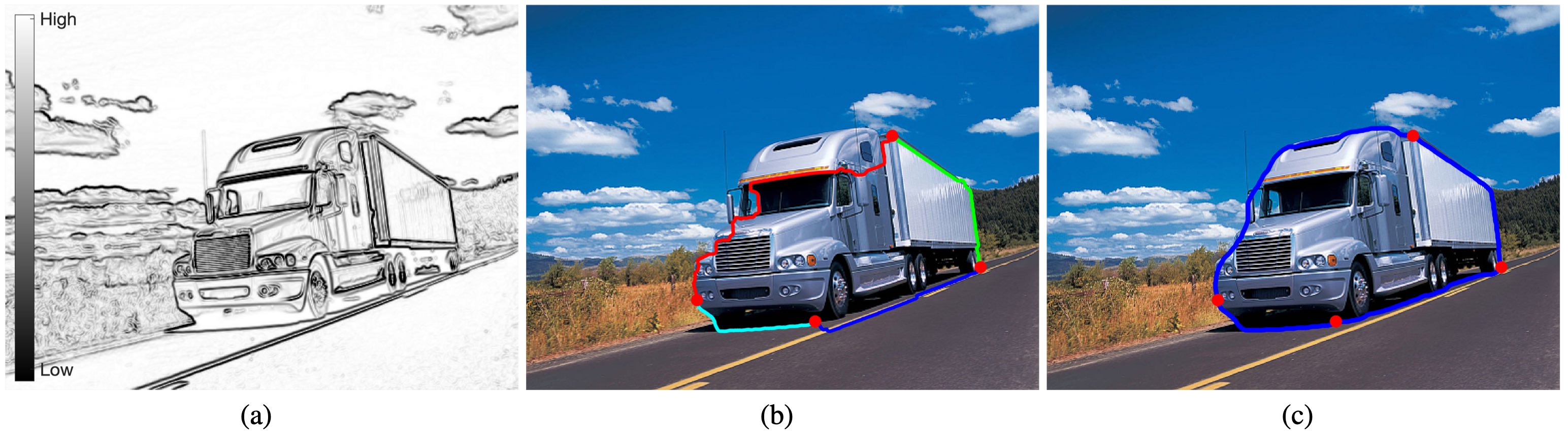}
\caption{An example for building the initial contour as a simple closed contour. \textbf{a} Visualization for the potential $\cP_{\rm comb}$. \textbf{b} The initial contour comprised of four paths (indicated by lines with different colors) is generated through  the simple closed curve construction method. \text{c} The segmentation contour derived from the fixed landmark points-based scheme.}
\label{fig:EdgeSimp}
\end{figure*}

A key step for minimizing the energy~\eqref{eq_CombEnergy} lies in the choice and the construction of a set of candidate contours. Each candidate contour is obtained as the concatenation of $m$ open paths, joining a landmark point to the successive one. Specifically, by advancing the fronts propagation simultaneously from a pair of landmark point $\fp_{k}$ and $\fp_{k+1}$, one can estimate a combined geodesic distance map using the partial fast marching  scheme~\citep{deschamps2001fast}, associated to the isotropic metric
\begin{equation}
\label{eq_CombMetric}
\cR_{\rm comb}(\fx,\dfx)=\cP_{\rm comb}(\fx)\|\dfx\|.	
\end{equation}

The meeting interface of the two propagating fronts associated to the source points $\fp_{k}$ and $\fp_{k+1}$ is  made up of  all equidistant points to $\fp_{k}$ and $\fp_{k+1}$. From this interface, several saddle points $\mathbf{m}_{k,i}$ for $1\leq i\leq q_k$ can be detected, where $q_k$ is a positive integer representing the maximal number of saddle points between $\fp_{k}$ and $\fp_{k+1}$. Each saddle point $\mathbf{m}_{k,i}$ leads to two paths $\rG^+_{k,i}$ and $\rG^-_{k,i}$ defined over the range $[0,1]$, which respectively joins from a saddle point $\mathbf{m}_{k,i}$ to the landmark points $\fp_{k+1}$ and $\fp_{k}$. The generation of paths $\rG^+_{k,i}$ and $\rG^-_{k,i}$ is implemented by performing gradient descents on the combined geodesic distance map. Then, one can obtain a new path linking from  $\fp_{k}$ to $\fp_{k+1}$
\begin{equation}
\label{eq_CombGeos}
\rG_{k,i}=\tilde\rG^-_{k,i}\doublecup\rG^+_{k,i}
\end{equation}
where $\tilde\rG^-_{k,i}$ is the reverse path of $\rG^-_{k,i}$.
With these definitions, \citet{mille2015combination} proposed to establish an admissible path $\tilde\cC$ as 
\begin{equation}
\label{eq_CombCandidate}
\tilde\cC=\rG_{1,i_1}\doublecup \rG_{2,i_2}\cdots	\doublecup\rG_{m,i_m}
\end{equation}
where $1\leq i_k\leq q_k$ is an index. Each $\tilde\cC$ is taken as a candidate minimizer for the energy $E_{\rm comb}$. We refer to~\citep{mille2015combination} for more details on the combination of piecewise geodesic paths model.

\subsection{Initial Contour Construction via Variants of the Combination of Piecewise Geodesic Paths Model}
\label{subsec_Polygon}
Simply connecting each pair of successive landmark points via a straight segment may potentially  yield a curve with self-intersections or self-tangencies in some cases, as depicted in Fig.~\ref{fig:Selfcrossing}a. This issue can be solved  by manually replacing some straight segments by polylines to remove the unexpected curve self-crossings. However, this will lead to demanding requirement to user. For the purpose of minimally interactive segmentation, we exploit two variants of the combination of piecewise geodesic paths model to build the initial contour using a set of landmark points, see~\cref{subsec_FixedPointsScheme}. In the combination of piecewise geodesic paths model, the optimal contour~\eqref{eq_CombCandidate} that minimizes the energy $E_{\rm comb}$ is expected to be simple, agreeing with the requirement on the initial contour used in our model. 
 
\subsubsection{Polygon Construction Method}
The first method is to regard the initial contour as a polygon, providing that its vertices are the given landmark points $\fp_k$. In this case, we impose that the paths $\rG_{k,i}$ (see Eq.~\eqref{eq_CombGeos}) are polylines with vertices of $\fp_k$ and $\fp_{k+1}$, done by setting that the potential $\cP_{\rm comb}\equiv1$. In this case, we replace the data-driven terms $E_{\rm edge}$ and $\Psi$ by the Euclidean curve length $\kL(\cC)$, leading to a new functional
\begin{equation}
\label{eq_PolyEnergy}
E_{\rm polygon}(\cC):=E_{\rm simp}(\cC)+z_{\rm euclid}\kL(\cC),
\end{equation}
where $z_{\rm euclid}\in\bR^+$ is a constant. A small value of $\alpha_{\rm euclid}$ is able to enhance the importance of the curve simplicity term $E_{\rm simp}$ to  reduce the risk of curve self-intersection issue. The use of the Euclidean curve length $\kL(\cC)$ as a penalty encourages a polygon of small perimeter.
Note that we apply the identical strategy as the combination of piecewise geodesic paths model~\citep{mille2015combination} to search for the optimal polygon that minimizes the energy~\eqref{eq_PolyEnergy}. As a result,  each pair of  successive landmark points is allowed to be connected by either a polyline or a straight segment. This can be observed from Fig.~\ref{fig:Selfcrossing}b, in which the curve connecting the points $\fp_2$ and $\fp_3$ is a polyline, instead of the straight segment shown in Fig.~\ref{fig:Selfcrossing}a.

\begin{figure*}[t]
\centering
\includegraphics[height=6cm]{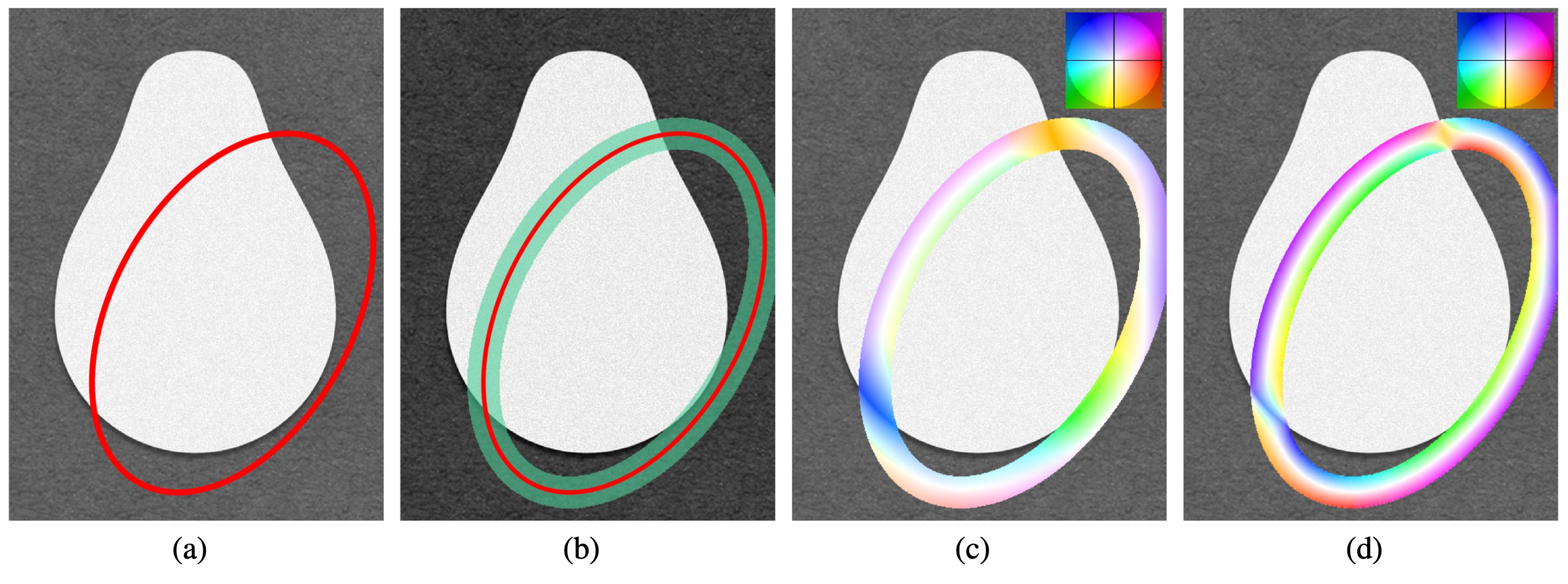}
\caption{An example for the visualization of the vector field $\omega_{\gS}$.  \textbf{a} The original image. The red line indicates the boundary of the shape $\gS$. \textbf{b} The tubular domain centered at $\partial\gS$. \textbf{c} and \textbf{d} Color coding for the visualization of the vector field $\rV_{\gS}$, respectively obtained by the convolution method and the variation method.}
\label{fig_VFDiv}
\end{figure*}

\subsubsection{Simple Closed Contour Construction Method}
In the original combination of piecewise geodesic paths model, the detection of optimal contours partially relies on the region-based appearance model $\Psi$. However, in our model, the information from the region-based appearance term $\Psi$ is embedded in the Randers geodesic metrics. Thus, it is not necessary to take into account the region-based features to build the initial contours for our model. In contrast, as the second method, the initial contour is chosen by minimizing the energy that is composed of only the simplicity measurement $E_{\rm simp}$ and the edge-based term $E_{\rm edge}$, i.e.
\begin{equation*}
E_{\rm SC}(\cC):=	E_{\rm simp}(\cC)+z_{\rm edge}E_{\rm edge}(\cC).
\end{equation*}
The removal of the region-based appearance term $\Psi$ may also reduce the computation complexity in the initialization stage of our model. We believe that using the terms $E_{\rm simp}$ and $E_{\rm edge}$ are sufficient to generate a suitable initial contour, at least when the target boundaries can be roughly defined by the image gradients. As an example, in Fig.~\ref{fig:EdgeSimp}b we illustrate a simple closed curve obtained by minimizing the energy $E_{\rm SC}(\cC)$. The potential $\cP_{\rm comb}$ which is estimated using the image gradients is shown in Fig.~\ref{fig:EdgeSimp}a. 

\section{Implementation Consideration}
\label{sec_PracticalRanders}

\subsection{Computation of the Region-based Energy Gradients}
\label{subsec:GradientExamples}

We present in this subsection the computation of the region-based energy gradient of two widely considered region-based appearance models and of the balloon model. For that purpose, we begin with a more generic case of a region-based energy that is defined as a smooth functional of integrals.

\begin{proposition}
\labelx{prop:region_gradient}
Let $\Omega \subset \bR^d$ be a bounded domain, 
let $\xi_1,\cdots, \xi_I \in L^\infty(\Omega)$ and let $\boldsymbol \Psi : \bR^I \to \bR$ be differentiable and such that $\diff \bPsi$ is Lipschitz.
Define for any shape $S \subset \Omega$
\begin{align}
\label{eq:Psi_generic}
	\Psi(\chi_S) &:= \bPsi\big(\int_S \xi_1 \diff \fx,\cdots,\int_S \xi_I \diff \fx\big),\\
	\xi_S &:= \sum_{1 \leq i \leq I} \partial_i \bPsi \big(\int_S \xi_1 \diff \fx,\cdots,\int_S \xi_I \diff \fx\big)\, \xi_i.
\end{align}
Then $\| \xi_S -\xi_\gS\|_\infty \leq K'_\Psi \|\chi_S - \chi_\gS\|_1$, and  
$$
	\Big| \Psi(\chi_\gS) + \int_\Omega (\chi_S-\chi_\gS) \xi_\gS \,\diff \fx - \Psi(\chi_S) \Big| 
	\leq K_\Psi \|\chi_S - \chi_\gS\|_1^2,
$$
for all shapes $\chi_S, \chi_\gS \subset \Omega$, where $K_\Psi, K'_\Psi$ are constants.
\end{proposition}

\begin{proof}
By assumption, there exists a constant $K_{\bPsi}$ such that 
\begin{equation}
\label{eq:taylor_dbPsi}
\|\diff \bPsi(\alpha)-\diff\bPsi(\beta)\|_{l^1} \leq K_{\bPsi} \|\alpha-\beta\|_{l^\infty},
\end{equation}
for all $\alpha, \beta\in \bR^I$, where $\|\cdot\|_{l^p}$ stands for the $l^p$ norm on $\bR^I$. By the Taylor formula with integral remainder, we obtain
\begin{equation}
\label{eq:taylor_bPsi2}
	| \bPsi(\alpha) + \diff \bPsi(\alpha) \cdot (\beta - \alpha) - \diff \bPsi(\beta)| \leq \tfrac 1 2 K_{\bPsi} \|\alpha-\beta\|^2_{l^\infty}.
\end{equation}
Now, let 
\begin{align*}
\alpha &:= \big(\int_\gS \xi_1\diff \fx,\cdots, \int_\gS \xi_I\diff \fx\big), \\
\beta &:= \big(\int_S \xi_1 \diff \fx, \cdots, \int_S \xi_I\diff \fx\big),
\end{align*}
in such way that $\Psi(\chi_\gS) = \bPsi(\alpha)$, $\Psi(\chi_S) = \bPsi(\beta)$ and 
$\int_\Omega (\chi_S-\chi_\gS) \xi_\gS \,\diff \fx = \diff \bPsi(\alpha) \cdot (\beta-\alpha)$. Letting $K_\xi := \max_{1 \leq i \leq I} \|\xi_i \|_\infty$ we obtain in addition  $\|\alpha-\beta\|_{l^\infty} \leq K_\xi \|\chi_S-\chi_\gS\|_1$ and 
$
	\|\xi_S-\xi_\gS\|_\infty \leq K_\xi \|\diff \bPsi(\alpha)-\diff\bPsi(\beta)\|_{l^1}.
$
The announced estimates follow by substitution in \eqref{eq:taylor_dbPsi} and \eqref{eq:taylor_bPsi2}, with $K_\Psi := \frac 1 2 K_{\bPsi} K_\xi^2$ and $K'_\Psi := K_{\bPsi} K_\xi$.
\qed
\end{proof}

Note that the Lebesgue measure $\Leb(S) := \int_S 1 \diff \fx$, and integrals on the complementary region $\int_{\Omega\sm S} \xi \diff\fx = \int_\Omega \xi \diff\fx -  \int_S \xi \diff\fx$, may also be used as arguments of \eqref{eq:Psi_generic}.

\begin{remark}[Extension to a Hilbert space]
\labelx{rem:hilbert_gradient}
Under the assumptions of \cref{prop:region_gradient}, we may define $\Psi(u) := \bPsi(\int_\Omega \xi_1 u \diff \fx, \cdots, \int_\Omega \xi_I u \diff \fx)$ for any $u \in H := L^2 (\Omega)$. Then $\nabla_{H} \Psi(\chi_\Omega) = \xi_\Omega$, where $\nabla_H$ denotes the gradient operator in the Hilbert space $H$ (as opposed to the usual gradient $\nabla$ in $\bR^2$ which is used in the rest of this paper).
\end{remark}

\begin{remark}[Application to \cref{th:summary}]
\labelx{rem:lebesgue_inv_gradient}
The functional \eqref{eq:Psi_generic} fits the assumptions of \cref{th:summary}, provided one has in addition $\xi_1,\cdots,\xi_I \in C^\alpha(\overline \Omega)$ for some $0<\alpha\leq 1$.
\Cref{th:summary} in addition features the assumption that the shape boundary $\partial S$ is homotopic to a curve $\rC$ within its tubular neighborhood $\rT$. 
It follows that the areas of $S$ and of $\Omega \sm S$ are positively bounded below, since each contains an open connected component of $\Omega \sm \rT$, hence the expression of $\Psi$ may also depend on $1/\Leb(S)$ and $1/\Leb(\Omega\sm S)$. 
\end{remark}

\subsubsection{The Piecewise Constants-based Model}
Let $\cI:\overline\Omega\to\bR$ be a gray level image. \citet{chan2001active} introduced a piecewise constants-based active contour model which is regarded as a practical reduction of the full Mumford-Shah model. This model assumes that the image gray levels can be well approximated via a piecewise constant function. The corresponding  energy functional reads
\begin{align*}
\Psi(\chi_S,\kc_{\rm in},\kc_{\rm out}) := 
\int_S (\cI-\kc_{\rm in})^2 d\fx
+ \int_{\Omega\sm S}(\cI-\kc_{\rm out})^2\diff \fx.
\end{align*}
where $\kc_{\rm in}$ and $\kc_{\rm out}$ are two scalar values. 
The gradient of this energy at $\chi_\gS$ reads by \cref{prop:region_gradient}
\begin{equation}
\label{eq_PConstGradient}
\xi_{\gS}(\fx)=(\cI(\fx)-\kc_{\rm in})^2-(\cI(\fx)-\kc_{\rm out})^2,
\end{equation}
which again is independent of the current shape $\gS$. This fits the assumptions of \cref{th:summary}, provided the images $\cI \in C^\alpha(\overline \Omega)$ for some $\alpha>0$, which we now assume.
Some image segmentation scenarios also consider the mean intensities 
\begin{align*}
	\kc_{\rm in}(S) &:= \frac{\int_S \cI(\fx) \diff \fx}{\Leb(S)}, &
	\kc_{\rm out}(S) &:= \frac{\int_{\Omega\sm S} \cI(\fx) \diff \fx}{\Leb(\Omega\sm S)},
\end{align*}
and define the slightly more complex functional 
\begin{align*}
	\tilde \Psi(\chi_S) := &\,\Psi(\chi_S,\kc_{\rm in}(S),\kc_{\rm out}(S))\\
	=&\int_S \cI(\fx)^2 \diff \fx - \frac{(\int_S \cI(\fx)\diff \fx)^2}{\Leb(S)} \\
	&+ \int_{\Omega\sm S} \cI(\fx)^2 \diff \fx - \frac{(\int_{\Omega\sm S} \cI(\fx)\diff \fx)^2}{\Leb(\Omega\sm S)},
\end{align*}
which is the sum of the variances of $\cI$ within $S$ and $\Omega\sm S$, weighted by their areas. By \cref{prop:region_gradient,rem:shape_gradient}, this again fits the assumptions of \cref{th:summary}.

Finally, we refer to~\citep{chan2000active} for an extension of this piecewise constants-based model which is designed to deal with vector-valued images.

\subsubsection{Bhattacharyya Coefficient Model}
The Bhattacharyya coefficient~\citep{michailovich2007image} between the  histograms of image colors inside and outside a shape $\aS$ is defined as 
\begin{align*}
\Psi(\chi_S)&:= \int_{\mathbf{Q}} \Psi_{\mathbf{q}}(\chi_S) \diff \fq,\qquad \text{where}\\
\Psi_{\mathbf{q}}(\chi_S) &:= \sqrt{\mathfrak{H}_{\rm in}(\mathbf{q},\chi_S)\,\mathfrak{H}_{\rm out}(\fq,\chi_S)},
\end{align*}
and where the histograms $\mathfrak{H}_{\rm in}$ and $\mathfrak{H}_{\rm out}$ are defined in Eqs.~\eqref{eq_PDFIn} and~\eqref{eq_PDFOut}.
By \cref{prop:region_gradient} the functional $\Psi_{\mathbf{q}}$ fits the assumptions of \cref{th:summary}, for any fixed $\mathbf{q} \in \mathbf{Q}$, provided the color image $\fI=(\cI_1,\cI_2,\cI_3):\overline\Omega\to\bR^3$ obeys $\cI_i\in C^\alpha(\overline \Omega)$ for some $\alpha>0$ and for $i=1,2,3$, hence so does $\Psi$ by integration over $\mathbf{Q}$. 
Eventually the gradient $\xi_{\gS}$ of the region-based energy functional $\Psi$ at $\chi_{\gS}$ is obtained as 
\begin{align*}
\xi_\gS(\fx)=&\frac{1}{2}\Psi(\chi_\gS)\left(\Leb(\gS)^{-1}-\Leb(\Omega\backslash\gS)^{-1}\right)\\
&+\frac{1}{2}\int_{\mathbf{Q}}{G}_{\sigma}(\mathbf{q}-\mathbf{I}(\fx))\wp(\mathbf{q},\chi_\gS)\diff\mathbf{q},
\end{align*}
where $\wp(\fq,\chi_{\gS})$ is defined as
\begin{align*}
&\wp(\fq,\chi_{\gS}):=\\
&\frac{1}{\Leb(\Omega\backslash\gS)}\sqrt{\frac{\mathfrak{H}_{\rm in}(\fq,\chi_\gS)}{\mathfrak{H}_{\rm out}(\fq,\chi_\gS)}}-\frac{1}{\Leb(\gS)}\sqrt{\frac{\mathfrak{H}_{\rm out}(\fq,\chi_\gS)}{\mathfrak{H}_{\rm in}(\fq,\chi_{\gS})}}.
\end{align*}
For numerical consideration, $\mathfrak{H}_{\rm in}$ and $\mathfrak{H}_{\rm out}$ are computed  as the Gaussian-smoothed  histograms of the image data.

\subsubsection{Balloon Model}
The region-based energy of this balloon model reads
\begin{equation}
	\label{eq_Balloon}
	\Psi_{\rm balloon}(\chi_\aS):=\int_S f_{\rm balloon}\, d\fx,
\end{equation}
where $f_{\rm balloon}\in\{-1,1\}$ is a constant value \citep{cohen1991active,cohen1993finite,chen2017anisotropic}. Specifically, the choice $f_{\rm balloon}=-1$ yields an expanding force, increasing the area of the region minimizing the active contour energy \eqref{eq_BalloonEnergy} below, whereas the choice $f_{\rm balloon}=-1$ yields a shrinking force, with the opposite effect.

The gradient of $\Psi_{\rm balloon}$ at $\chi_\gS$ reads by \cref{prop:region_gradient} 
\begin{equation}
\label{eq_balloonGradient}
\xi_\gS\equiv f_{\rm balloon},
\end{equation}
which is independent of the current shape $\gS$.
The full energy of the simplified edge-based balloon model is defined as 
\begin{equation}
\label{eq_BalloonEnergy}
E_{\rm balloon}(S)=\alphareg\,\Psi_{\rm balloon}(\chi_S)+	\Length_{\cR}(\cC_S),
\end{equation}
Its optimization can thus be directly addressed using the proposed method based on Randers geodesics.

Edge-based anisotropy features derived from the image gradients can be naturally incorporated in the Riemannian metric-based regularization term $\Length_{\cR}$, and thus drive the course of the contour evolution.

\subsection{Numerical Construction of the Randers Vector Field}
\label{subsec:curl_implem}
An important ingredient of the numerical implementation of the proposed geodesic model is the computation of the vector field $\omega_{\gS}$ defining the Randers metric used in our segmentation method, see \cref{th:stokes}. This vector field solves a curl PDE, over a tubular domain $\rT$, whose r.h.s.\ is the gradient $\xi_\gS$ of the region-based energy functional $\Psi$ at a given shape $\gS$, see \eqref{eq:taylor_phi} and \eqref{eq:curl_omega}. 
During \Cref{alg_ContourEvolution} which governs the contour evolution, the vector field $\omega_\gS$ is updated in each iteration, whereas the domain $\rT$ is either fixed or optionally updated.

\subsubsection{Convolution Method}
The Randers vector field $\omega_\gS$ constructed in the proof of \cref{th:exists_omega} can be expressed as
\begin{align}
\label{eq:omega_expl}
	\omega_\gS &= (\chi_{\rT_2}\, \xi_\gS) \ast \fH, &
	\text{where } \fH(\fz) := \frac {\dfe_\fz^\perp} {2 \pi \|\fz\|},
\end{align}
which is the convolution of the region-based energy gradient $\xi_\gS$ associated to $\gS$, cropped to the extended tubular domain $\rT_2$ as defined in Eq.~\eqref{eq_DoubleTube}, with the integrable kernel denoted by $\fH$. 
The expression of $\fH(\fz)$ involves $\dfe_\fz^\perp$ the counterclockwise perpendicular to the unit vector $\dfe_\fz := \fz/\|\fz\|$, for all $\fz\neq \mathbf{0}$.
For practical purposes we set $\fH(\mathbf{0}) = \mathbf{0}$.
From the numerical standpoint, the convolution \eqref{eq:omega_expl} can be implemented efficiently and accurately using the fast Fourier transform, or alternatively by direct computation at a reasonable cost if one crops the vector-valued kernel $\fH$ to a small window. In the latter case the window size is often chosen as $[-4U,4U]^2$ which is twice as wide as the tubular domain \eqref{eqdef:tubular_domain}. In a similar spirit~\citep{li2007active} also introduced a convolution with a vector-valued kernel, different from the kernel $\fH$ proposed here, for computing an extended image gradient vector field. 

\subsubsection{Alternative Variational Method}
Another natural approach to select a solution to the PDE $\curl \omega_{\gS} = \xi_{\gS}$ on $\rT$, is to choose the one of minimal $L^2(\rT)$ norm. In other words, we set $\omega_{\gS} = w^\perp$, where $w$ is a vector field solving the PDE-constrained problem~\eqref{eq_UMinimization} below. 
This solution is obtained by solving a Poisson equation on the tubular domain $\rT$, see~\cref{prop:poisson_minimization}, and leads to convincing numerical experimental results. 
From the theoretical standpoint, and in contrast with the first construction~\eqref{eq:omega_expl} whose properties are established in \cref{th:exists_omega}, it is however not clear how to ensure the smallness property of $\|\omega_{\gS}\|_\infty$ and the boundedness of $\|\diff\omega_{\gS}\|_\infty$,  when $\rT$ is a tubular domain of small width $U$, as required for the convergence analysis of our segmentation method.
Finally, let us mention that the solution \eqref{eq_UMinimization} to the divergence equation, far from original, is discussed in the introduction of \citep{bourgain2003equation} and followed by various other nonlinear and/or non-constructive approaches which certainly do not appear to fit our application. We recall that $H^1_0(\rT)$ denotes the Sobolev space of functions vanishing on $\rT$ and whose squared gradient is integrable on $\partial\rT$, whereas $H^{-1}(\rT)$ is the dual space w.r.t.\ the $L^2(\rT)$ inner product, see \citep{adams2003sobolev}. 

\begin{proposition}
\labelx{prop:poisson_minimization}
Consider the variational problem:
\begin{equation}
	\label{eq_UMinimization}
\minimize\int_\rT\|w\|^2d\fx,\quad s.t.~\diver w=\xi\text{~~on~~}\rT,
\end{equation}	
where $\xi \in H^{-1}(\rT)$. The unique solution is $w = \nabla \vp$, where $\vp\in H^1_0(\rT)$ obeys $\Delta\vp = \xi$ on $\rT$ and $\vp=0$ on $\partial \rT$.
\end{proposition}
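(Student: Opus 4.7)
The plan is to establish the minimizer through a direct $L^2$-orthogonal decomposition in the spirit of Helmholtz. First, I would invoke the Lax--Milgram lemma on $H^1_0(\rT)$ equipped with the Dirichlet bilinear form $a(\vp,\psi) := \int_\rT \nabla\vp\cdot\nabla\psi\,d\fx$, which is coercive by Poincaré's inequality since $\rT$ is bounded. Applied to the continuous linear form $\psi \mapsto -\langle \xi,\psi\rangle_{H^{-1},H^1_0}$, this produces a unique $\vp \in H^1_0(\rT)$ solving $\Delta\vp = \xi$ in the distributional sense and satisfying $\vp = 0$ on $\partial\rT$ in the trace sense. The candidate minimizer is then $w_* := \nabla\vp \in L^2(\rT,\bR^2)$, which is admissible since $\diver w_* = \Delta\vp = \xi$.

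For optimality, I would take any other admissible field $w \in L^2(\rT,\bR^2)$ and set $u := w - w_*$, so that $\diver u = \xi - \xi = 0$ in $H^{-1}(\rT)$. Expanding the objective gives
\begin{equation*}
\int_\rT \|w\|^2 \,d\fx = \int_\rT \|w_*\|^2 \,d\fx + 2\int_\rT \nabla\vp\cdot u \,d\fx + \int_\rT \|u\|^2 \,d\fx.
\end{equation*}
The cross term vanishes: by the definition of distributional divergence, $\int_\rT \nabla\vp\cdot u\,d\fx = -\langle \diver u, \vp\rangle_{H^{-1},H^1_0} = 0$. Hence $\int_\rT \|w\|^2 \geq \int_\rT \|w_*\|^2$, with equality only when $u \equiv 0$, which proves both optimality and uniqueness.

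The main subtlety lies in justifying the integration-by-parts identity for the cross term at the given level of regularity, since $\xi \in H^{-1}$ forces the constraint to be interpreted distributionally and boundary integrals cannot be manipulated pointwise. The resolution is density: the identity $\int_\rT \nabla\vp\cdot u\,d\fx = -\langle\diver u,\vp\rangle$ holds by the very definition of distributional divergence for $\vp \in C_c^\infty(\rT)$, and extends by continuity to $\vp \in H^1_0(\rT)$ because $C_c^\infty(\rT)$ is dense there and both sides are continuous in the $H^1_0$ norm (the left via Cauchy--Schwarz with $u \in L^2$ fixed, the right by the definition of the $H^{-1}$ pairing). The Dirichlet condition on $\vp$ is thus encoded implicitly in the choice of function space $H^1_0(\rT)$, making the boundary contribution disappear without any explicit trace computation, which is what makes the otherwise standard Helmholtz argument go through at minimal regularity.
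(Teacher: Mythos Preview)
Your proof is correct and follows essentially the same route as the paper: existence of $\vp$ via the Dirichlet problem, admissibility of $w_*=\nabla\vp$, and vanishing of the cross term by orthogonality of gradients of $H^1_0$ functions against divergence-free fields. The only difference is that the paper writes the cross term as an explicit boundary integral $\int_{\partial\rT}\vp\langle\eta,\mathrm{n}\rangle\,d\fs$ (using $\vp=0$ on $\partial\rT$), whereas you handle it more carefully via the duality pairing and density of $C_c^\infty$ in $H^1_0$, which is arguably cleaner at the stated $H^{-1}$ regularity.
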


\begin{proof}
	The existence of a solution $\vp \in H^1_0(\rT)$ to the Poisson equation $\Delta \vp=\xi$, with a r.h.s.\ $\xi$ lying in the dual Sobolev space $H^{-1}(\rT)$, is a classical result \citep{adams2003sobolev}. Note that the boundary conditions $\vp=0$ are actually redundant with the definition of $H^1_0(\rT)$.
	Defining $w = \nabla \vp$ one obtains $\diver w = \diver( \nabla \vp) = \Delta \vp=\xi$, hence the PDE constraint is satisfied. 
	On the other hand, assume that $\diver(w+\eta) = \xi$. Then $\diver \eta=0$ and therefore 
	\begin{equation*}
		\int_\rT \<\nabla \vp,\eta\> \diff \fx 
		=\int_{\partial\rT} \vp \<\eta, \mathrm{n}\> \diff \fs - \int_\rT \vp \diver \eta \, \diff \fx
		= 0,
	\end{equation*}
	where $\mathrm{n}$ denotes the unit normal to $\partial \rT$, and $\diff \fs$ is the surface element on $\partial \rT$. We have shown that $\int_\rT\<w,\eta\> \diff \fx= 0$, and therefore $\int_\rT\|w+\eta\|^2 \diff \fx = \int_\rT (\|w\|^2+\|\eta\|^2)\diff \fx \geq \int_\rT \|w\|^2 \diff \fx$. This establishes, as announced \eqref{eq_UMinimization}, that $w$ is the solution of smallest $L^2$ norm to 
	$\diver w = \xi$.
\qed
\end{proof}

Fig.~\ref{fig_VFDiv} depicts an example for the Randers vector field $\omega_{\gS}$, obtained by the convolution method and the variation method, respectively. In Figs.~\ref{fig_VFDiv}a and~\ref{fig_VFDiv}b, the shape boundary $\partial\gS$ and the tubular domain centered at $\partial\gS$ is illustrated. We apply the color coding for visualizing $\omega_{\gS}$, as depicted in Figs.~\ref{fig_VFDiv}c and~\ref{fig_VFDiv}d, respectively. 

In connection with the computation of the Randers vector field $\omega_\gS$, a crucial issue encountered in the numerical implementation of our region-based Randers geodesic model lies in the choice of the width of the tubular domain $\rT$. That is, by reducing the width $U$, we can limit the norm $\|\omega_{\gS}\|_\infty$, so that the compatibility condition $\|\omega_\gS(\fx)\|_{\cM(\fx)^{-1}}$ holds for all $\fx \in \rT$, and thus the Randers metric $\cF^\gS$ is definite, see \cref{th:stokes}. However, reducing $U$ also limits the search region for geodesic paths, which increases the number of iterations of \cref{alg_ContourEvolution}, and may ultimately lead the model to fail if the neighborhood width is less than the grid scale. In next subsection, we propose a heuristic strategy for alleviating this issue. 

\subsection{Heuristic Construction of a Modified Randers Metric}
\label{subsec_PracticalMetric}
In the proposed piecewise geodesic paths extraction scheme, a crucial step is to compute a geodesic path within each subregion to connect two successive landmark points, which minimizes  the weighted curve length $\Length_{\gS}$ associated to the Randers metric $\cF^{\gS}$, see~\cref{eq:rander_stokes}. 
We propose in this subsection a modified metric  $\kF^\gS$ which accelerates the shape evolution by (i) removing the restoring force associated to the divergence term $D(S\|\gS)$, and (ii) introducing a non-linear transformation $\bpsi$ which automatically enforces the compatibility condition \eqref{eq_compatibilityVariant}, and thus eliminates the need to reduce the tubular width $U$ for that purpose. 
There is no guarantee that the algorithm, modified as such, necessarily leads to the minimization of the energy functional, since this practical construction falls somewhat outside of the strict assumptions of the theoretical analysis as discussed in~\cref{sec_AC}.
Nevertheless, the evolution of the shapes appears to remain stable and to converge to the correct limit. We also discuss the construction of the Riemannian metric tensor field $\cM$. 

The modified Randers metric $\kF^{\gS}$ constructed in this section takes the form
\begin{equation}
\label{eq_VariantRanders}
\kF^{\gS}_\fx(\dfx):=\|\dfx\|_{\cM(\fx)}+\langle\dfx,\rV_{\gS}(\fx)\rangle,
\end{equation}
where $\cM: \overline\Omega \to \bS_2^{++}$ is a fixed tensor field, and $\rV_{\gS} : \overline\Omega \to \bR^2$ is a vector field depending on the shape $\gS$. 
We do not focus here on the regularity of $\cM$ and $\rV_{\gS}$, but nevertheless ensure Randers compatibility condition: for all $\fx \in \overline\Omega$
\begin{equation}
\label{eq_compatibilityVariant}
\|\rV_{\gS}(\fx)\|_{\cM(\fx)^{-1}}<1.
\end{equation}

The tensor field $\cM$ is constructed at the initialization of the algorithm, from the image edge-based features $g : \overline\Omega \to [0,\infty[$ and $\kg : \overline\Omega \to \bS^1$ as described in Appendix~\ref{Appendix_EdgeFeatures}, where $\bS^1 := \{\dfx\in \bR^2 \mid \|\dfx\|=1\}$ denotes the unit circle. 
At a given point $\fx$, the non-negative scalar $g(\fx)$ describes the \emph{strength} of the edges, whereas the unit vector $\kg(\fx)$ is roughly aligned with the image gradient, and thus orthogonal to the image edges. 
The matrix $\cM(\fx)$ can be expressed in terms of its  eigenvalues $\lambda_i(\fx)$ and eigenvectors $\vartheta_i(\fx)$ for $i \in \{1,2\}$ as follows
\begin{equation*}
\cM(\fx)=\sum_{i=1}^2 \lambda_i(\fx)\vartheta_i(\fx)\vartheta_i(\fx)^\top.
\end{equation*}
Our construction, detailed below, satisfies $\lambda_1(\fx)\leq \lambda_2(\fx)$ and $\vartheta_2(\fx) = \vartheta_1(\fx)^\perp$. Specifically, we define
\begin{equation}
\label{eq_Eigen1}
\lambda_1(\fx)=\exp(\ell_{\rm mag}(\|g\|_{\infty}-g(\fx))),
\end{equation}
and 
\begin{equation}
\label{eq_Eigen2}
\lambda_2(\fx)=\lambda_1(\fx)\exp(\ell_{\rm aniso}\,g(\fx)),
\end{equation}
where $\ell_{\rm mag},\,\ell_{\rm aniso}$ are two positive constants. 
For the eigenvectors of $\cM$, we set $\vartheta_2(\fx):=\kg(\fx)$, in such way that $\vartheta_1(\fx) := -\vartheta_2(\fx)^\perp$ is tangent to the image edge (if any) at $\fx$, whereas $\vartheta_2(\fx)$ is orthogonal to it. Since $\lambda_1(\fx)\leq \lambda_2(\fx)$, this construction of the metric $\cM$ locally assigns a smaller cost to paths that are tangent to the image edges than to the orthogonal ones.
From Eqs.~\eqref{eq_Eigen1} and \eqref{eq_Eigen2} we obtain
\begin{align}
\label{eq_EigenLimitation}
1 &\leq \lambda_1(\fx) \leq \lambda_2(\fx), &
\inf \{\lambda_1(\fx)\mid \fx \in \overline\Omega\}&=1.
\end{align}
Notice that in our previous work~\citep{chen2016finsler}, we only considered an isotropic tensor field $\cM(\fx)=\lambda_1(\fx)\Id$. This earlier metric construction thus only took into consideration of the image edge saliency features $g$, and not the edge anisotropy information which is encoded in $\kg$. 

\begin{remark}[Regularity of the tensor field $\cM$]
\labelx{rem:reg_M}
The image gradient vector $\kg(\fx)$ is obtained as the eigenvector associated to the largest eigenvalue of a symmetric matrix $\cQ(\fx)$ often referred to as the structure tensor, see Eq.~\eqref{eqdef:cQ} in Appendix~\ref{Appendix_EdgeFeatures}, which has $C^\infty(\Omega)$ regularity since it is defined in terms of convolutions of the image.
Denoting by 
\begin{equation*}	
X := \{\fx \in \overline \Omega\mid \exists \lambda, \cQ(\fx) = \lambda \Id\},
\end{equation*}
the set of points where the eigen-decomposition of $\cQ$ is not uniquely defined, we obtain that $\kg \in C^\infty(\Omega\sm X)$ (up to changes in direction of $\kg$, which have no effect on symmetric expressions such as $\kg \kg^\top$ appearing in the expression of $\cM$). 
By a dimensionality argument, $X$ generically consists of finitely many isolated points, which in addition are far from the edge features of the processed image, along which $\cQ$ is by construction strongly anisotropic hence has distinct eigenvalues. 
On the other hand $g$ has $C^\infty$ regularity except at points where the smoothed image gradient vanishes, see \eqref{eq_imageGradsColor} and \eqref{eq_imageGradsGray}, which are a subset of $X$. It follows that $\cM \in C^\infty(\Omega\sm X)$.

Thus $\cM$ fulfills the $C^1$ regularity assumption of \cref{th:summary} provided that $\rT \subset \Omega \sm X$, which is the case if the tubular search region remains close to the edge features of interest, thus covering all practical purposes. If this condition fails, then one may regularize $\cM$ by convolution as in Appendix~\ref{Appendix_EdgeFeatures}.
\end{remark}

Now let us show how to estimate the vector field $\rV_{\gS}$, from the new Randers vector field $\omega_{\gS}$ obtained in~\cref{subsec:curl_implem} by solving a curl PDE. The vector field $\rV_{\gS}$ needs to obey the compatibility condition~\eqref{eq_compatibilityVariant}, but in view of Eq.~\eqref{eq_EigenLimitation} it suffices that $\|\rV_{\gS}\|_\infty<1$. 
For that purpose, we define  
\begin{align*}
	\bpsi(\fz) &:= \psi(\|\fz\|) \fz, &
	\text{where } \psi(a) := \frac{1-\exp(-a)}{a},
\end{align*}
for all $\fz \in \bR^2$ and all $a \geq 0$, with the convention $\bpsi(\boldsymbol{0}) = \boldsymbol{0}$ and $\psi(0)=1$. One easily checks that $\bpsi \in C^1(\bR^2,\bR^2)$ and 
\begin{enumerate}[(a).]
\item $\|\bpsi(\fz)\| < 1$ for all $\fz\in\bR^2$.
\label{enu_pr1}
\item $\bpsi(\fz)=\fz+\cO(\|\fz\|^2)$ for small $\fz \in\bR^2$.
\label{enu_pr2}
\end{enumerate}
We define, for all $\fx \in \overline\Omega$, the modified vector field
\begin{equation}
\label{eq_NonlinearVectorField}
	\rV_\gS(\fx)= \bpsi(\alphareg\,\omega_\gS(\fx)).
\end{equation}
This choice of $\rV_\gS$ satisfies the compatibility condition~\eqref{eq_compatibilityVariant}, in view of property~(\ref{enu_pr1}), and $\rV_\gS$ retains the $C^1$ regularity of the original Randers vector field $\omega_\gS$.

In the rest of this subsection, we discuss the differences between the Randers metric $\cF^\gS$ \eqref{eq:rander_stokes} that is considered in the theoretical analysis of~\cref{sec_AC}, and the variant $\kF^\gS$~\eqref{eq_VariantRanders} that is usually preferred in the numerical experiments. 
\vspace{-0.5\baselineskip}\\

\noindent\emph{Introduction of the active contour energy parameter $\alphareg$.}
The hybrid active contour energy \eqref{eq_HybridEnergy} features a coefficient $\alphareg \in]0,\infty[$ which controls the relative importance of the region-based term $\alphareg\Psi$ and of the edge-based term associated to the tensor field  $\cM$. Until now, we assumed that $\alphareg=1$, for simplicity and without loss of generality. Choosing an arbitrary $\alphareg$ amounts to replacing $\Psi$ with $\alphareg\Psi$, thus $\omega_\gS$ with $\alphareg\omega_\gS$ (since $\omega_\gS$ depends linearly on $\Psi$), which appears in the definition \eqref{eq_NonlinearVectorField} of the non-linearly modified vector field $\rV_\gS$.
\vspace{-0.5\baselineskip}\\

\noindent\emph{Removal of the divergence term $D(S\|\gS)$.} 
The original Randers metric $\cF^\gS$ defined in \cref{th:stokes} features a weighting term $(1+2\lambda\, d_{\partial\gS}(\fx)^2)$ multiplying the tensor field $\cM$. This term comes from the divergence $D(S\|\gS)$ defined in Eq.~\eqref{eq:diver_SS}, introduced in the approximate energy $E_\gS(S)$ in \cref{subsec_RandersInterpretation}. In essence, this term acts as a restoring force, stabilizing the contour evolution in \cref{alg_ContourEvolution}, and  used in \cref{th:successive_min} to establish the decrease of the energy $E(S_n)$ of the successive shapes produced along the iterations.
Practical experience shows however that the proposed algorithm usually does not suffer from instabilities, hence this term is removed from the implementation so as to allow for a faster contour evolution and to speed up the computation time.
\vspace{-0.5\baselineskip}\\

\noindent\emph{Use of the non-linear function $\bpsi$.} The presence of the non-linearity $\bpsi$ in~\cref{eq_NonlinearVectorField} means that $\curl \rV_\gS \neq \xi_\gS$ in general, which appears to break the usage of Stokes formula upon which our approach is based, see \cref{th:stokes}. A very basic justification for~\cref{eq_NonlinearVectorField} is that $\omega_\gS(\fx)$ and $\rV_\gS(\fx)$ have the same direction at each point $\fx$, hence define in their respective Randers metrics $\cF^\gS$ and $\kF^\gS$ an asymmetric term which promotes geodesic paths aligned with the same direction (opposite to theirs), hence a similar qualitative behavior is expected. 

In addition we usually tune the region-based energy importance parameter $\alphareg$ in such way that $\|\alphareg \omega_\gS\|_\infty \leq 7$ on $\rT$, see~\cref{eq_nonlinearPara}, and in practice $\|\alphareg \omega_\gS\| < 1$ on a large portion of $\rT$, especially close to the centerline. As a result, the nonlinear transformation~\eqref{eq_NonlinearVectorField} often falls in the regime where it is close to the identity, see property (\ref{enu_pr2}).

Eventually, a small tubular domain width $U$ is still preferable in order to reduce the risk of shortcuts problem encountered in the extraction of geodesic paths. 
In the next section, we present a method for constructing an asymmetric variant of the tubular domain $\rT = \rT_{\gS}$, by using the region-based energy gradient $\xi_\gS$ to identify and suppress some redundant components of $\rT_{\gS}$.

\begin{figure*}[t]
\centering
\includegraphics[height=8.5cm]{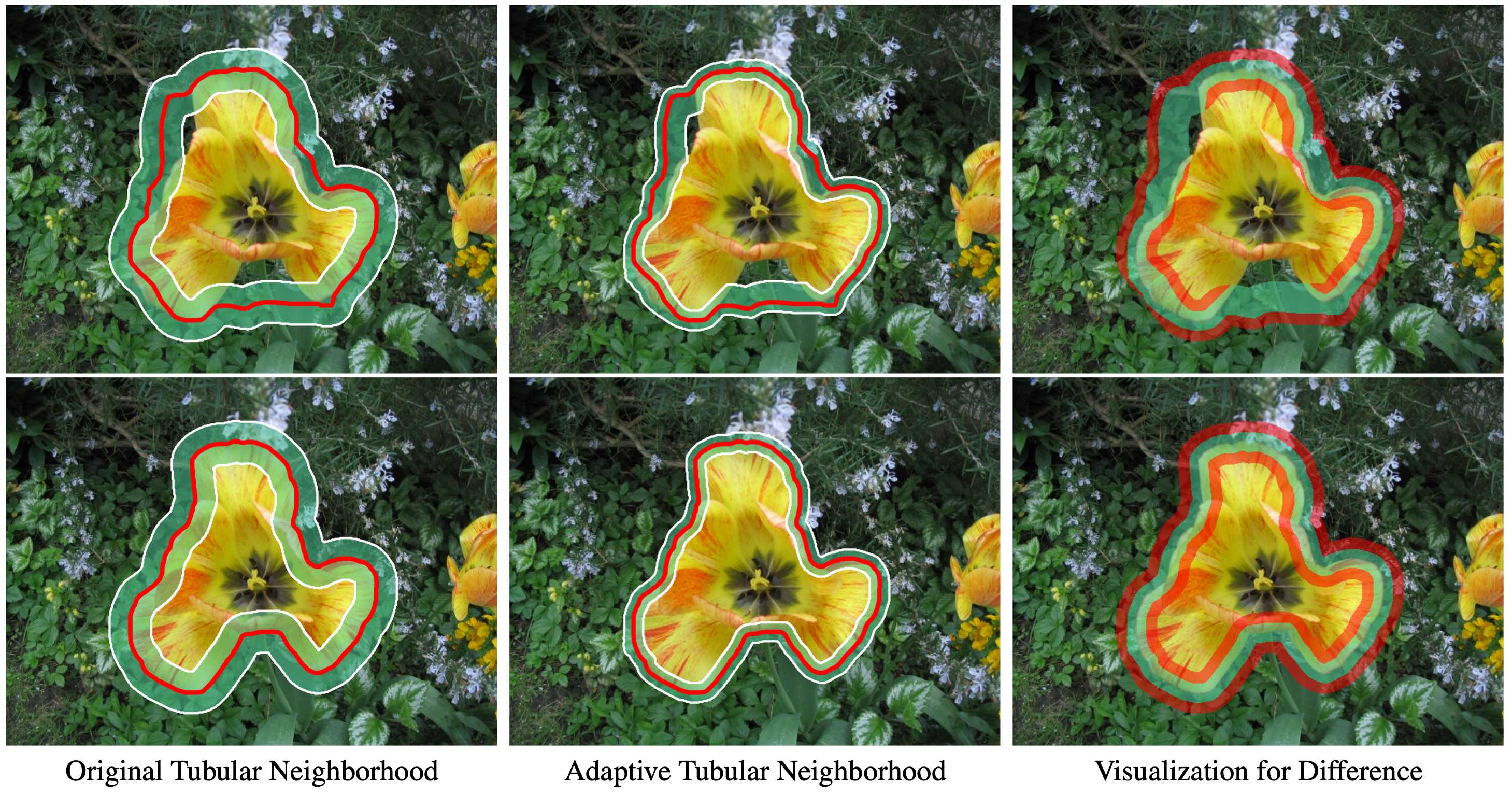}
\vspace{1mm}
\caption{Visualization for the original tubular neighborhood and its adaptive variant in two different contour evolution iterations.  The results in each row are derived from the same iteration.  Columns $1$ and $2$ respectively illustrate the original and  adaptive tubular neighborhood regions in the corresponding iterations. The transparent regions indicate the tubular neighborhood regions whose centerlines are indicated by red lines. The white lines are the boundaries of the tubular neighbourhood regions. In column 3, the red regions visualize the difference sets between the original and adaptive tubular neighbourhoods.}
\label{fig:AsyTube}
\end{figure*}

\subsection{Construction of an Adaptive Tubular Neighborhood}
\label{subsec_AsyTube}
We assumed so far that the tubular search space $\rT$ is centered on a given centerline $\rC$, which can either be fixed as in the theoretical analysis in~\cref{sec_AC}, or regularly updated and defined as the boundary $\partial \gS$ of the shape obtained in the previous iteration of the method. 
However, the efficiency of the proposed piecewise geodesic paths extraction scheme can be improved by using an adaptive tubular domain, based on a guess on the likely evolution directions of the shape boundary, in terms of the region-based energy gradient $\xi_{\gS}$. In other words, a subdomain within  the tubular domain  $\rT_\gS$ can be predicted, such that it likely covers the output contour consisting of piecewise geodesic paths. Thus one way is to suppress the area of the complementary set to the predicted set, yielding an adaptive domain $\cT_{\gS}\subset{\rT_{\gS}}$, allowing to be asymmetric with respect to the boundary $\partial\gS$.  This can be implemented by propagating distance front from $\partial\gS$, such that the front travels slowly in the regions that we attempt to remove from the tubular domain $\rT_{\gS}$. For this purpose, we solve an isotropic eikonal PDE
\begin{equation}
\label{eq_AsyTubeEikonal}
\begin{cases}
\|d\rD_{\partial\gS}(\fx)\|=\cP_{\gS}(\fx),\quad&\forall\fx\in\rT_\gS\backslash\partial\gS,\\
\rD_{\partial\gS}(\fx)=0,\quad&\forall\fx\in\partial{\gS},
\end{cases}
\end{equation}
where $\cP_{\gS}:\overline\Omega\to\bR^+$ is a potential. Then the adaptive tubular domain $\cT_{\gS}$ is generated by
\begin{equation}
\label{eq_AdaptiveTube}
\cT_{\gS}=\{\fx\in{\rT_{\gS}}\mid \rD_{\partial\gS}(\fx)<U\}.
\end{equation}
The construction of the potential $\cP_{\gS}$, involves two subsets $\bD_\gS$ and $\tilde{\bD}_\gS$ of the domain $\overline\Omega$ defined as follows:
\begin{align*}
\bD_\gS &:= \{\fx\in\gS\mid\xi_{\gS}(\fx)\geq \varrho\}
\cup\{\fx\in\overline\Omega\backslash\gS\mid\xi_\gS(\fx)\leq -\varrho\},\\
\tilde{\bD}_\gS &:= \{\fx\in\overline\Omega\mid|\xi_\gS(\fx)|< \varrho\},	
\end{align*}
where $\varrho \in [0,\infty[$ is a sufficiently small parameter. Then 
\begin{equation}
\label{eq_TubePotential}	
\cP_{\gS}(\fx):=
\begin{cases}
\upsilon, &\forall\fx\in\bD_\gS,\\
1,&\forall\fx\in\tilde{\bD}_{\gS},\\
1/\upsilon,&\text{otherwise}.
\end{cases}
\end{equation}
where $\upsilon\in\,]0,1]$ is a given parameter.
The construction of the adaptive tubular neighborhood $\cT_{\gS}$ using Eq.~\eqref{eq_AdaptiveTube} is regarded as a refinement process to the original tubular neighborhood $\rT_{\gS}$.
In Fig.~\ref{fig:AsyTube}, we show an example to illustrate the difference between the original and adaptive tubular neighborhood regions.  The asymmetry is pronounced in the early steps of the curve evolution, see \cref{fig:AsyTube} (top), whereas an approximately symmetric neighborhood is recovered in the final steps, see \cref{fig:AsyTube} (bottom), consistently with the above discussion regarding the non-linear transformation $\bpsi$. 
In this experiment,  the Bhattacharyya coefficient of two histograms is applied for the computation of the region-based energy gradient $\xi_{\gS}$ in our Randers geodesic model.
In this experiment, we set the width parameter $U=20$ (measured in grid points) for the construction of the both tubular neighborhoods. In addition, we set $\upsilon=0.2$ and $\varrho=0.1\|\xi_\gS\|_\infty$ for the estimation of the potential~\eqref{eq_TubePotential}. Finally, a Gaussian mixture model is used to define the region-based appearance term $\Psi$, and thus to compute its gradient $\xi_{\gS}$ involved in our Randers geodesic model.

In the course of the contour evolution, once the adaptive neighborhood $\cT_n$ is constructed associated to the boundary $\partial{S}_n$, we can decompose $\cT_n$ as a family of subregions $\cZ_{k,n}$ for  $1\leq k\leq m$ using the decomposition $Z_{k,n}$ of $\rT_{n}$, such that
\begin{equation}
\cZ_{k,n}:=\cT_{n}\cap Z_{k,n}.	
\end{equation}
In this way, the geodesic paths $\cG_{k,n}$ are extracted in each subregion $\cZ_{k,n}$, see Sections~\ref{subsec_FixedPointsScheme}.

\begin{figure*}[t]
\centering
\includegraphics[height=3.7cm]{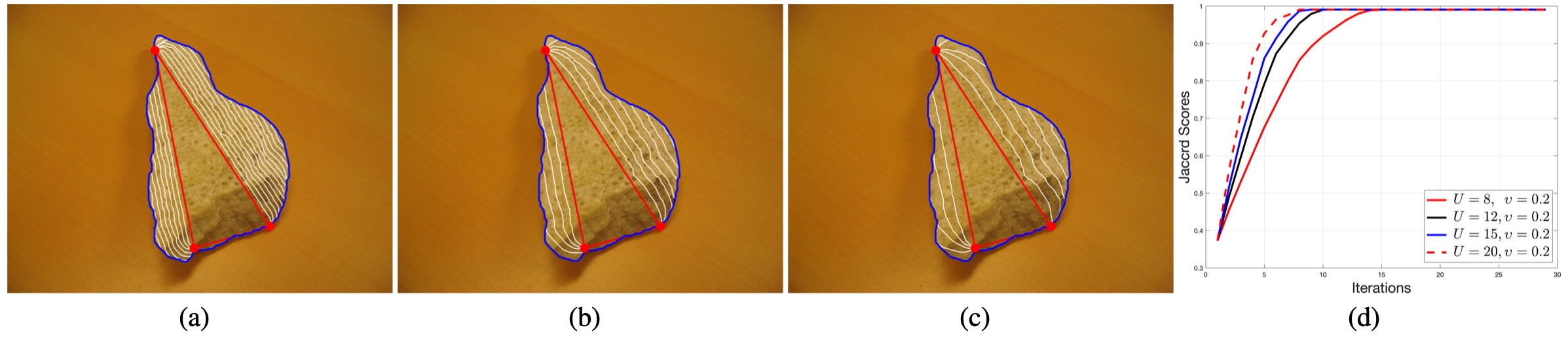}
\caption{Examples for the course of the contour evolution with respect to different values of the parameter $U$ for the construction of the adaptive tubular neighborhood. \textbf{a}-\textbf{c} The values of $U$ are respectively set as $8$, $15$ and $20$ (grid points).  The red dots are the landmark points.  \textbf{d} The plots of the Jaccard scores with respect to different values of the parameter $U$.}
\label{fig:tubeParas}
\end{figure*}

\begin{figure*}[t]
\centering
\includegraphics[height=3.7cm]{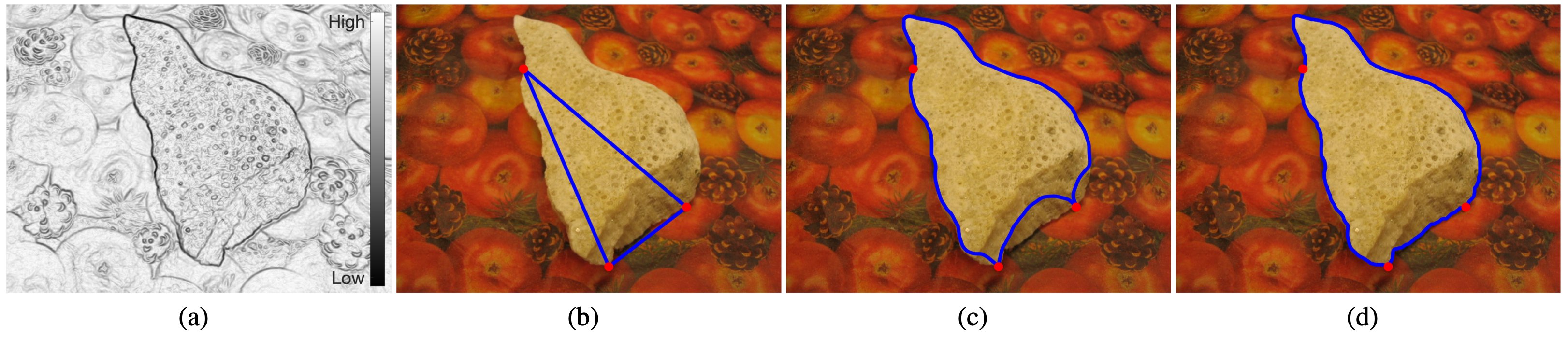}
\caption{Image Segmentation results with respect to different tensor fields $\cM$.  \textbf{a} The visualization for the eigenvalues $\tilde\lambda_1$ of $\cM$. \textbf{b} Initialization contour generated by the polygon construction method. The landmark points are indicated by red dots. \textbf{c} Image segmentation using $\cM$ associated with weights $\ell_{\rm mag}=\ell_{\rm aniso}=0$. \textbf{d} Image segmentation using $\cM$ associated with weights $\ell_{\rm mag}=2$ and $\ell_{\rm aniso}=1$.}
\label{fig_EffectsTensorField}
\end{figure*}

\section{Experimental Results}
\label{sec_Experiments} 
In this section, we verify the performance of the proposed region-based Randers geodesic model in image segmentation, evaluated on both synthetic and natural images.  
In the following experiments, we first discuss the influence of the tubular neighborhood and of the components of the Randers metrics. We then perform qualitative and quantitative comparisons between the proposed model with different initialization ways and state-of-the-art optimal paths-based image segmentation models.

\subsection{Parameter Setting}
\label{subsec_Parameters}
We discuss here the influence of the relevant parameters for (i) the computation of the modified Randers metrics $\kF$, and (ii) the  construction of the adaptive tubular neighborhood, which dominate the extraction of Randers geodesic paths in the proposed model.

\subsubsection{Parameters for the Modified Randers Metrics} 
Given a shape $\gS$, the modified Randers metric $\kF^{\gS}$ takes the form of~\cref{eq_VariantRanders}, which involves a tensor field $\cM$ and a vector field $\rV_{\gS}$ as described in Section~\ref{subsec_PracticalMetric}.  
For the construction of the tensor field $\cM$ in terms of image gradients, the parameters $\ell_{\rm mag}$ and $\ell_{\rm aniso}$ should be assembled properly. Typically, we choose $\ell_{\rm mag}\in\{0.5,2,3\}$, and $\ell_{\rm aniso}=1$ in the following experiments.  Note that the parameter $\ell_{\rm aniso}$ will be automatically set to $0$ if $\ell_{\rm mag}=0$. 

The parameter $\alphareg\in[0,\infty[$ used in Eq.~\eqref{eq_NonlinearVectorField} modulates the importance of the region-based appearance model in the image segmentation. In practice, we choose 
\begin{equation}
\label{eq_nonlinearPara}
\alphareg:=\tilde{\alphareg}/\|\omega_{\gS}\|_\infty,
\end{equation}  
where $\tilde{\alphareg}$ is a positive scalar value. This process simplifies the configuration of the modified Randers metric $\kF^{\gS}$, and also balances the importance of the symmetric and asymmetric terms in the Randers metric. (Admittedly, the choice~\eqref{eq_nonlinearPara} does however not comply with the theoretical analysis, which assumes a constant value of $\alphareg$ along the iterations.) By Eq.~\eqref{eq_nonlinearPara}, the vector field $\rV_{\gS}$ is in fact defined as $\rV_{\gS}=\bpsi(\tilde{\alphareg}\omega_{\gS}/\|\omega_{\gS}\|_\infty$). In the following experiments, the parameter $\tilde{\alphareg}$ is typically set as $\tilde{\alphareg}\in\{5,6,7\}$.

\begin{figure*}[t]
\centering
\includegraphics[height=4.6cm]{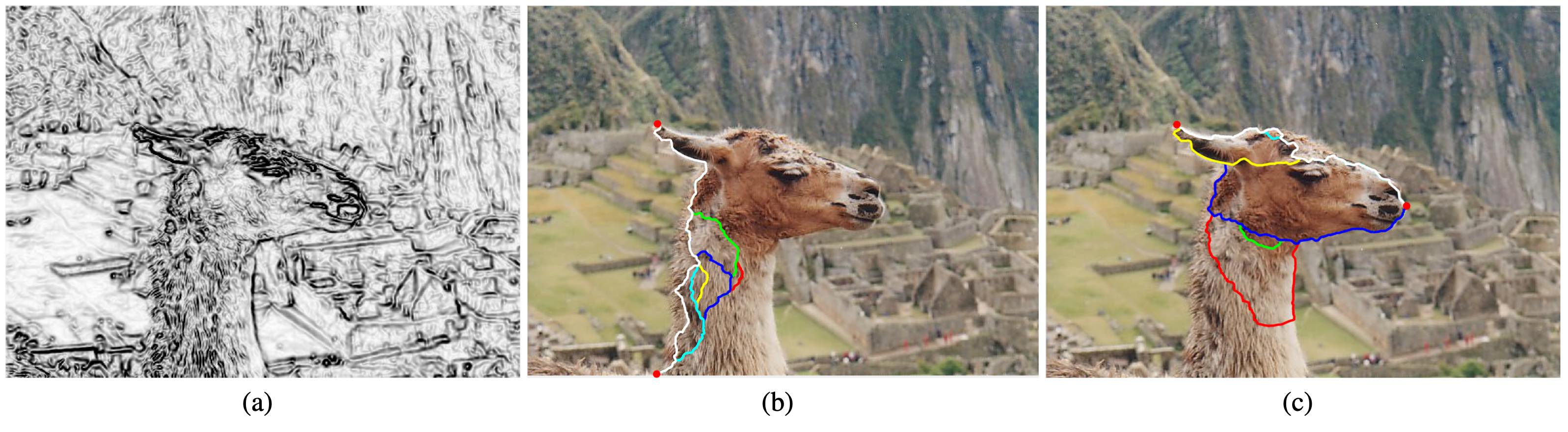}
\caption{An example for admissible paths generated from the combination of piecewise geodesic paths model, which are denoted by lines of different colors.  \textbf{a} Visualization for the potential $\cP_{\rm comb}$. \textbf{b}-\textbf{c} The admissible paths corresponding to two pairs of successive landmark points.}
\label{fig:AdmissiblePaths}
\end{figure*}

\begin{figure*}[!pht]
\centering
\includegraphics[width=17cm]{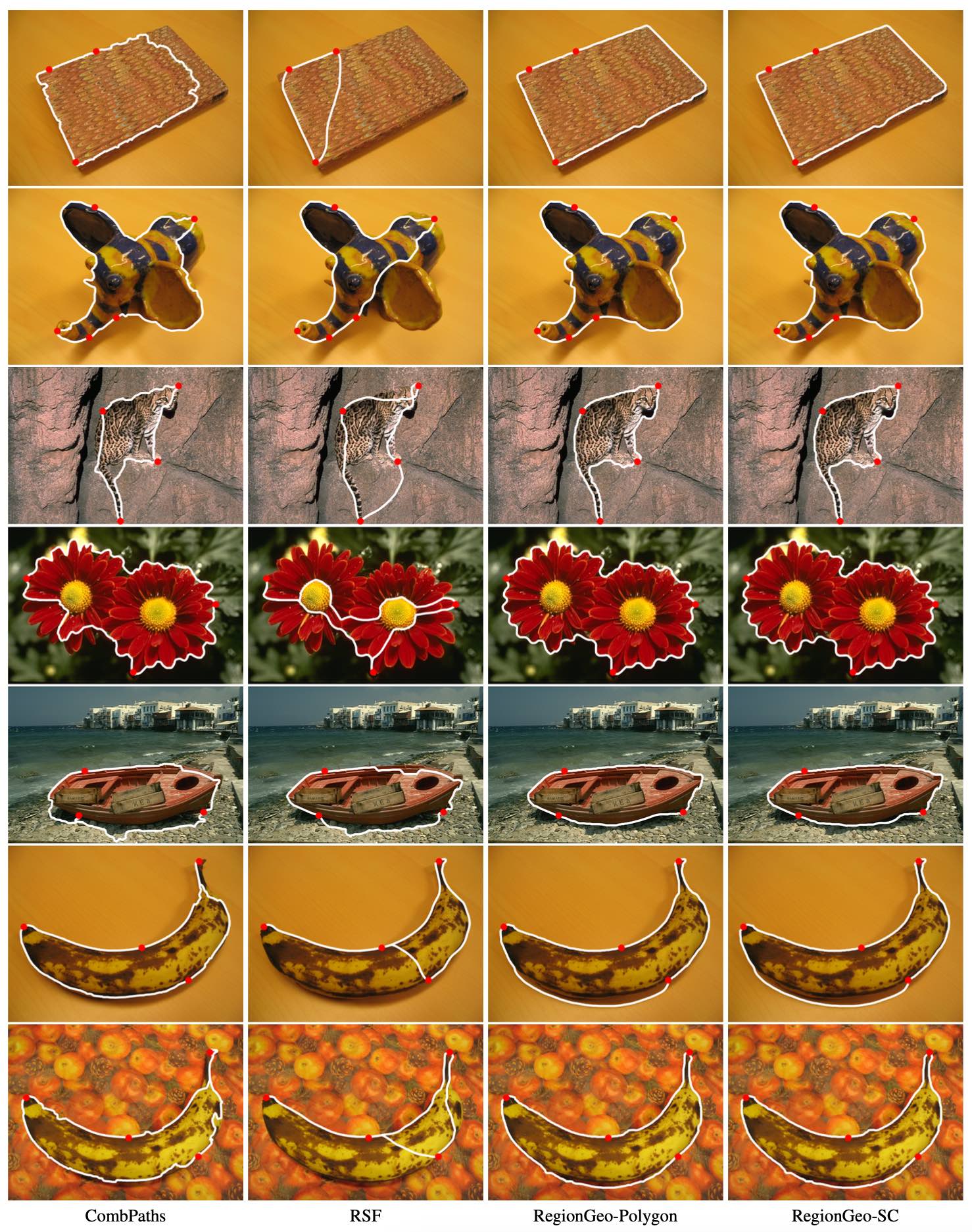}
\caption{Qualitative comparison with the combPaths model and the RSF segmentation model on  real images. \textbf{Columns} $1$ to $4$: The segmentation contours indicated by white lines are obtained from the combPaths model, the RSF segmentation model, and the proposed RegionGeo-Polygon and RegionGeo-SC models, respectively.}
\label{fig_NatureImagesComparison}
\end{figure*}

\begin{figure*}[t]
\centering
\includegraphics[height=7cm]{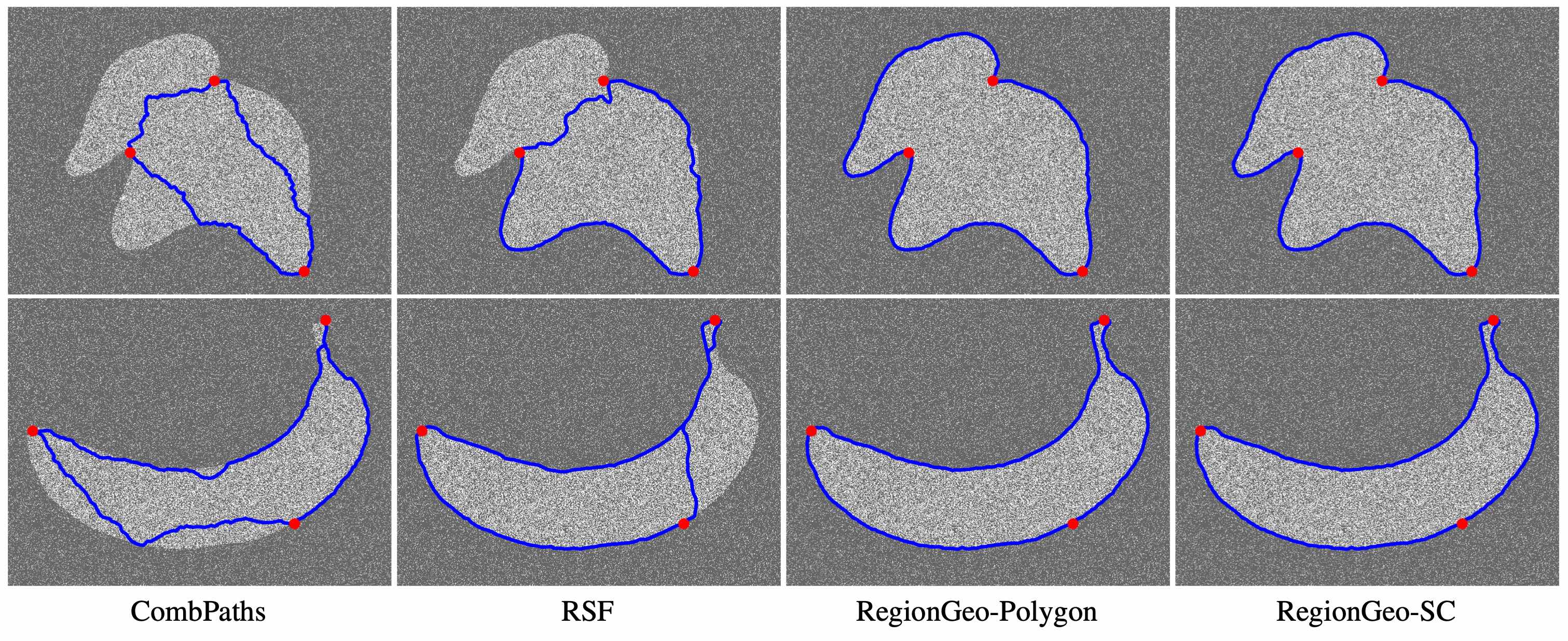}
\caption{Qualitative comparison with the CombPaths model and the RSF segmentation model on two synthetic images. \textbf{Columns} 1 to 4:  The  segmentation contours derived from the CombPaths, RSF, the proposed RegionGeo-Polygon and RegionGeo-SC models, respectively.}
\label{fig:unreliableEdges}
\end{figure*}

\subsubsection{Parameters for the Adaptive Tubular Neighborhood Construction}
In order to construct the adaptive tubular neighborhood~\eqref{eq_AdaptiveTube}, one should specify the relevant parameters $U$ and $\upsilon$, as introduced in Section~\ref{subsec_AsyTube}. An example of the effects on the convergence rate of contour evolution with respect to different values of the parameter $U$ and a fixed value of $\upsilon=0.2$ are described in Fig.~\ref{fig:tubeParas}. The initial contour and the fixed landmark points are shown in Fig.~\ref{fig:tubeParas}a,  represented by the red lines and the red dots, respectively.  In this experiment, the convergence rate is  assessed via the Jaccard accuracy score which can be defined as 
\begin{equation}
\label{eq_Jaccard}
J(\mathbb{A},\mathbb{A}_*)=\frac{|\mathbb{A}\cap \mathbb{A}_*|}{|\mathbb{A}\cup \mathbb{A}_*|},
\end{equation}
where $\mathbb{A}$ and  $\mathbb{A}_*$ respectively denote the segmented region derived from the considered segmentation models and the ground truth region, respectively.  During the segmentation process, we apply the proposed Randers geodesic model initialized with the polygon construction method for extracting piecewise geodesic paths.  Figs.~\ref{fig:tubeParas}a to~\ref{fig:tubeParas}c illustrate the evolving contours with respect different values of $U$ and a fixed value of $\upsilon=0.2$. One can see that using a small value of $U$ may increase the total number of contour iterations required by the proposed interactive segmentation model. Such a phenomenon can also be observed in Fig.~\ref{fig:tubeParas}d. However, for the consideration of practical implementation,  a high value of $U$ corresponds to a tubular neighborhood of large size, which may increase the risk of the shortcuts problem. As a tradeoff between the efficiency and accuracy of image segmentation,  we set $U\in\{12,15\}$ and fix $\upsilon=0.2$ in the remaining numerical experiments.

\subsection{Advantages of Using the Hybrid Image Features}
As discussed in Section~\ref{subsec_PracticalMetric}, the considered tensor field $\cM$ is usually constructed in terms of the image gradients. The use of such local edge-based features is able to reduce the potential influence of the image intensity inhomogeneities. These features can encourage the introduced segmentation models to find favorable segmentation results, even when the nonlocal region-based appearance models fail to accurately depict the image features. In Fig.~\ref{fig_EffectsTensorField}, we illustrate the effect derived from the tensor field $\cM$ in image segmentation. In this experiment, the fixed landmark points-based geodesic paths extraction method is applied, in conjunction with the piecewise constants-based appearance model. In Fig.~\ref{fig_EffectsTensorField}a, the edge saliency features, carried out by the scalar function~\eqref{eq_imageGradsColor}, are visualized. Fig.~\ref{fig_EffectsTensorField}b depicts the landmark points indicated by red dots, and the initial contour indicated by a blue line. In Fig.~\ref{fig_EffectsTensorField}c, we set the tensor field to be independent to the image gradient, i.e.\ $\ell_{\rm mag}=\ell_{\rm aniso}=0$. From this figure, one can observe that a portion of the  segmentation contour leaks into the region of interest. By invoking the parameters $\ell_{\rm mag}=2$ and $\ell_{\rm aniso}=1$ for the tensor field $\cM$, the resulting segmentation contour can accurately delineate the desired boundary, see Fig.~\ref{fig_EffectsTensorField}d. In this experiment, we make use of the piecewise constants-based appearance model~\citep{chan2000active,chan2001active} to construct the Randers geodesic metrics considered.

\begin{figure*}[t]
\centering
\includegraphics[width=17cm]{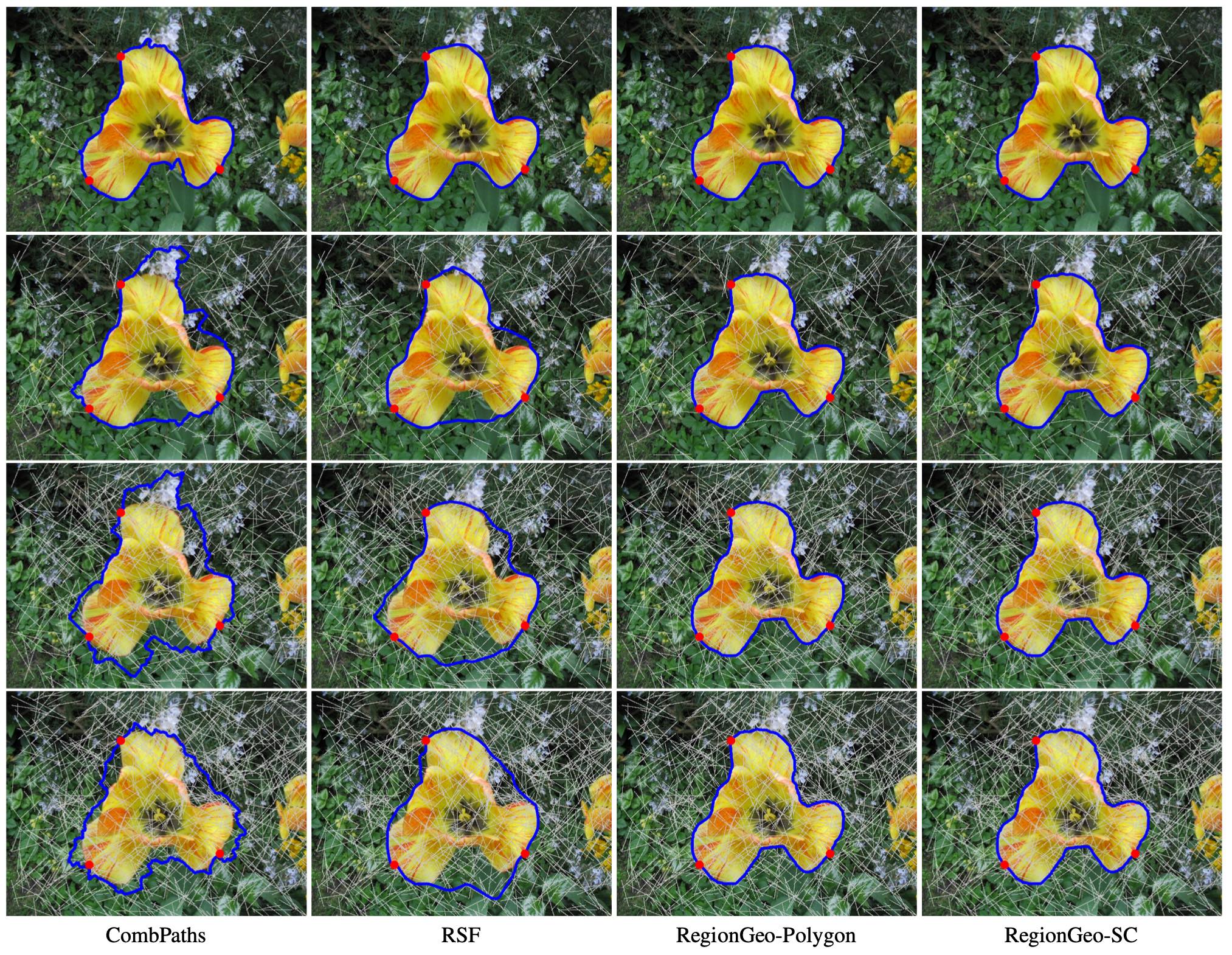}	
\caption{Comparison with the CombPaths model and the RSF segmentation model on images interrupted by arbitrary straight segments. \textbf{Columns} $1$ to $4$: The segmentation results from the CombPaths model, the RSF segmentation model, the proposed RegionGeo-Polygon and RegionGeo-SC models, respectively.}
\label{fig_ClutterDemos}
\end{figure*}

On the other hand, geodesic paths associated to the metrics which rely only on the edge-based image features favor passing through the regions with strong edge saliency features. As a consequence, these metrics often lead to unexpected paths for delineating the boundaries of interest, in case their edge-based features are insufficiently salient. In Fig.~\ref{fig:AdmissiblePaths}, we take the admissible paths computed by the combination of piecewise geodesic paths model as an example to illustrate that problem. The red dots in Fig.~\ref{fig:AdmissiblePaths}b or~\ref{fig:AdmissiblePaths}c denote a pair of successive landmark points. Also, the admissible paths in both figures are indicated by lines of different colors. The potential $\mathcal{P}_{\rm comb}$  defining the isotropic geodesic metrics for the combination of piecewise geodesic paths model is visualized in Fig.~\ref{fig:DemoComb}a. 
 From Figs.~\ref{fig:AdmissiblePaths}b and~\ref{fig:AdmissiblePaths}c, one can see that none of these admissible paths (see Eq.~\eqref{eq_CombCandidate}) can accurately delineate the desired boundary segment, since the edge-based features in this experiment suffer from the lack the reliability for the description of the target boundaries. This motivates the proposed  geodesic model, whose Randers metric definition integrates both the edge-based features and the region-based appearance model. 
 
\begin{figure*}[t]
\centering
\includegraphics[width=0.8\linewidth]{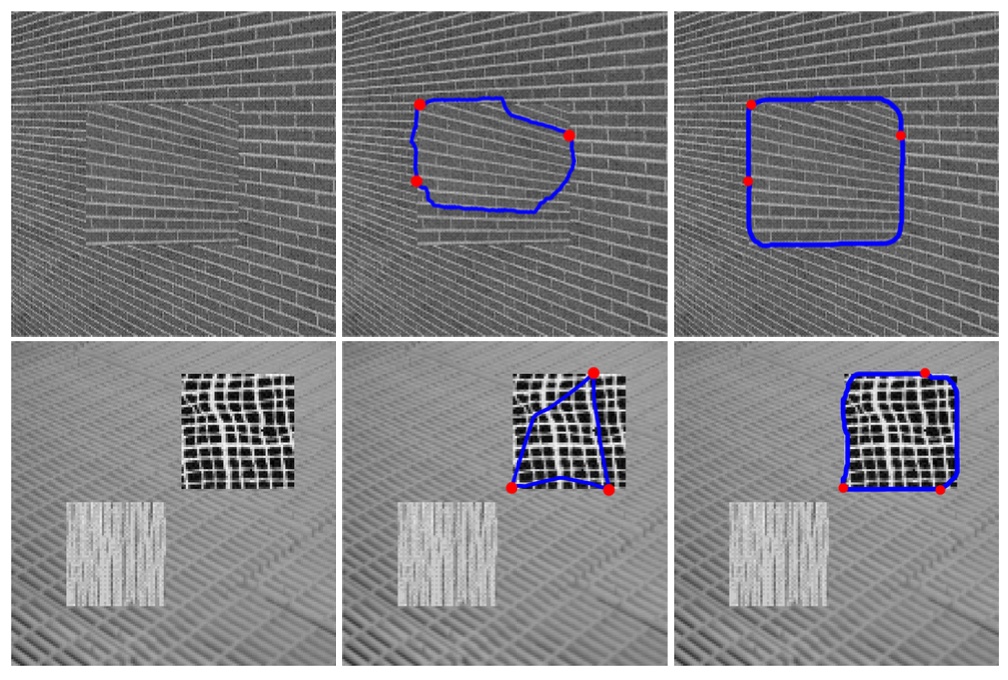}
\caption{Qualitative comparison with the CombPaths model on texture images. \textbf{Column} $1$: the original texture images. \textbf{Columns} $2$ and $3$: the segmentation results from the CombPaths model and the proposed RegionGeo-Polygon model, respectively.}
\label{fig_texture}
\end{figure*}

\subsection{Comparison with Paths-based Segmentation Models}
\label{subsec_EdgeRanders}
This subsection exhibits the qualitative and quantitative comparison results with state-of-the-art optimal paths-based interactive segmentation models. For the sake of convenience, we refer to the proposed region-based Randers geodesic model initialized with the polygon construction method (resp.\ the simple closed contour construction method) as RegionGeo-Polygon (resp.\ RegionGeo-SC).  
 
\subsubsection{Qualitative Comparison}
In this section, we qualitatively compare the proposed RegionGeo-Polygon and RegionGeo-SC models to the combination of piecewise geodesic paths (CombPaths) model~\citep{mille2015combination} and the Reeds–Shepp Forward (RSF) segmentation model\footnote{We integrate the RSF geodesic paths and a slight variant of the closed contour detection method proposed in~\citep{chen2017global} for finding image segmentation contours, as described in Appendix~\ref{appendix_RSF}.}~\citep{duits2018optimal}. 

Fig.~\ref{fig_NatureImagesComparison} illustrates comparison results with the CombPaths and RSF models on seven natural images sampled from the Grabcut dataset~\citep{rother2004grabcut} and the BSD500 datasets~\citep{arbelaez2011contour}. The first column depicts the segmentation contours (white lines) derived from the CombPaths model. The results in rows $1$ to $5$ of column $1$ suffer from the shortcuts problem, i.e.\ some portions of the obtained segmentation contours pass through the interiors of the target regions. In rows $6$ and $7$ of column $1$, we observe that the contours encounter the self-tangency problem, due to the limitation of the edge-based geodesic metrics. The segmentation results shown in column $2$ are derived from the RSF segmentation model, where the shortcuts problem can be still observed in rows $1$ and $2$. Furthermore, the RSF segmentation model suffers from more serious self-tangency problem when comparing to the CombPaths model, as illustrated in rows $4$, $6$ and $7$. One reason might be that the contour detection scheme applied with the RSF geodesic paths lacks the constraint on the contour simplicity. The segmentation contours for the proposed RegionGeo-Polygon and RegionGeo-SC models  are illustrated in column $3$ and $4$. One can point out that the segmentation contours indeed accurately capture the boundaries of the target regions, thanks to the use of the region-based appearance features and also to the geodesic path extraction scheme which guarantees the simplicity of the evolving contour, see Section~\ref{subsec_FixedPointsScheme}. In this experiment, we take the Bhattacharyya coefficient of two histograms~\citep{michailovich2007image} as the region-based appearance model.

Fig.~\ref{fig:unreliableEdges} depicts the comparison results on two synthetic images. The foreground region of each synthetic image is composed of dense small-blocks with strong intensities, while the background region involves sparse small blocks. Hence the image edges of interest are weakly visible and the edges features derived from the image gradients are not reliable. As a result, both of the CombPaths model and the RSF segmentation model cannot find favorable segmentations, as depicted in columns $1$ and $2$ of Fig.~\ref{fig:unreliableEdges}.  In columns $3$ and $4$, we demonstrate the segmentation results derived from the proposed RegionGeo-Polygon and RegionGeo-SC models, respectively. From these segmentation results, we can see that the contours accurately follow the target boundaries. In this experiment, we apply the piecewise constants-based appearance model to construct the Randers geodesic metrics.

\begin{table*}[t]
\centering
\caption{Quantitative comparisons between the CombPaths model, the RSF segmentation model, and the proposed RegionGeo-Polygon and RegionGeo-SC models in terms of the statistical values of the Jaccard scores (in percentage) evaluated over $30$ runs per image}
\label{table_NatureImages}
\setlength{\tabcolsep}{6pt}
\renewcommand{\arraystretch}{1.6}
\begin{tabular}[t]{c c c c c c c c c c c c}
\shline
Images  &\multicolumn{2}{c}{CombPaths}&\multicolumn{2}{c}{Riverbed}&\multicolumn{2}{c}{RSF }&\multicolumn{2}{c}{RegionGeo-Polygon}&\multicolumn{2}{c}{RegionGeo-SC} \\
\cmidrule(lr){2-3} \cmidrule(lr){4-5}\cmidrule(lr){6-7}\cmidrule(lr){8-9}\cmidrule(lr){10-11}
&{Mean}&{Std}&{Mean}&{Std}&{Mean}&{Std}&{Mean}&{Std}&{Mean}&{Std}\\
\cmidrule(lr){2-3} \cmidrule(lr){4-5}\cmidrule(lr){6-7}\cmidrule(lr){8-9}\cmidrule(lr){10-11}
LLAMA     &{$67.1$}&{$0.07$}  
          &{$40.2$}&{$0.20$}
          &{$54.3$}&{$0.10$} 
          &{$85.2$}&{$0.04$}
          &{$90.4$}&{$0.04$}\\
FLOWER    &{$93.9$}&{$0.06$}
          &{$82.2$}&{$0.36$}
          &{$92.4$}&{$0.12$}
          &{$95.6$}&{$0.09$}
          &{$97.9$}&{$0.06$}\\
STONE1    &{$76.2$}&{$0.26$}
          &{$56.6$}&{$0.34$}     
          &{$72.0$}&{$0.21$}
          &{$90.5$}&{$0.11$}
          &{$82.9$}&{$0.11$}\\
STONE2    &{$89.0$}&{$0.12$}
          &{$95.8$}&{$0.18$}
          &{$89.1$}&{$0.25$}
          &{$94.5$}&{$0.08$}
          &{$98.4$}&{$0.04$}\\
BOOK      &{$86.5$}&{$0.19$}
          &{$76.6$}&{$0.30$}
          &{$51.7$}&{$0.17$}
          &{$93.0$}&{$0.01$}
          &{$92.9$}&{$0.01$}\\
CERAMIC   &{$84.9$}&{$0.03$}
          &{$45.4$}&{$0.30$}
          &{$76.9$}&{$0.1$}
          &{$83.7$}&{$0.08$}
          &{$85.3$}&{$0.06$}\\
OCELOT    &{$44.7$}&{$0.18$}
          &{$27.8$}&{$0.09$}
          &{$34.8$}&{$0.18$}
          &{$79.9$}&{$0.01$}
          &{$80.5$}&{$0.05$}\\
FLOWER2     &{$71.3$}&{$0.11$}
            &{$24.5$}&{$0.10$}
            &{$30.5$}&{$0.10$}
            &{$96.6$}&{$\approx0$}
            &{$96.3$}&{$\approx 0$}\\    
BOAT     &{$78.5$}&{$0.07$}
         &{$61.3$}&{$0.24$}
         &{$52.4$}&{$0.33$}
         &{$91.8$}&{$0.1$}
         &{$91.9$}&{$0.09$}\\
BANANA1  &{$72.5$}&{$0.10$}
         &{$74.0$}&{$0.06$}
         &{$50.5$}&{$0.27$}
         &{$86.8$}&{$0.08$}
         &{$86.8$}&{$0.07$}\\
BANANA2   &{$84.8$}&{$0.11$}
          &{$53.6$}&{$0.18$}
          &{$72.4$}&{$0.26$}
          &{$90.3$}&{$0.07$}
          &{$92.3$}&{$0.01$}\\
\shline
\end{tabular}
\end{table*}

In Fig.~\ref{fig_ClutterDemos}, we demonstrate the qualitative comparison with the CombPaths model  and the RSF segmentation model on four images interrupted by different numbers of straight segments of white color. The number of interrupting segments increases from rows $1$ to $4$. These straight segments are able to produce strong edge saliency features, thus introducing unexpected influence to the results associated with the CombPaths model and the RSF segmentation model. In row $1$, all the compared segmentation models have achieved accurate results since only few interrupting segments are added to the images. However, as the number of the interrupting segments increasing, the results from the CombPaths model and the RSF segmentation model, as depicted in columns $1$ and $2$, pass through some interrupting segments exhibiting strong edge-based saliency features, such that some portions of the target boundaries are missed. In contrast,  the proposed RegionGeo-Polygon and RegionGeo-SC models are capable of generating favorable segmentation contours, which are insensitive to the influence from the interrupting segments,  as shown in columns $3$ and $4$, due to the use of the region-based appearance model for the construction of the Randers geodesic paths. In this experiment, the Bhattacharyya coefficient of two histograms is taken as the region-based appearance term for the proposed models.

Fig.~\ref{fig_texture} depicts the qualitative comparison between the CombPaths model and the proposed RegionGeo-Polygon model on two synthetic texture images, where the segmentation results are respectively shown in columns $2$ and $3$. The original images obtained from~\citep{jung2012nonlocal} are illustrated in column $1$. One can see that the CombPaths model suffers from the shortcuts problem while the proposed RegionGeo-Polygon model is able to find the correct segmentation results. In this experiment, we exploit the Bhattacharyya coefficient of histograms as the region-based appearance model for constructing the Randers geodesic metrics.

\subsubsection{Quantitative Comparison}
In the proposed interactive segmentation models, the landmark points  serve as a hard constraint in the course of the image segmentation process.  In order to evaluate its performance against the locations of the landmark points, we implement the proposed models $30$ times per test image.   For this purpose, we apply $30$ times a randomly point sampling procedure\footnote{The point sampling procedure used here is similar to the strategy introduced in~\citep[Section 6.5]{mille2015combination}.} to the ground truth boundary of each test image, thus yielding $30$ different sets of landmark points. In each test, a set of landmark points is leveraged for initializing the models. The landmark points in each set are distributed in a counterclockwise order along the ground truth boundary of the test image. Note that a constraint is imposed to the scale between a pair of successive landmark points, such that the euclidean length of the portion of ground truth boundary between two successive sampled points should be higher than a given threshold value~\citep{mille2015combination}.  In our experiments, we set this threshold value as $0.3\kL_{\rm gt}/m$, where $\kL_{\rm gt}$ represents the euclidean length of the ground truth boundary and where $m$ is the number of landmark points associated to the test image. For fair comparison, in each test all the compared models are initialized using the identical landmark points. 

Table~\ref{table_NatureImages} summarizes the mean and the standard deviation values of the Jaccard scores over the $30$ runs per image. The evaluation results in rows $1$ to $11$ of Table~\ref{table_NatureImages} are tested in the images used in Figs.~\ref{fig:DemoComb},~\ref{fig:AsyTube},~\ref{fig:tubeParas},~\ref{fig_EffectsTensorField} and~\ref{fig_NatureImagesComparison}, respectively. These images are sampled from the Grabcut dataset~\citep{rother2004grabcut} and the BSD500 dataset~\citep{arbelaez2011contour}. We can see that the proposed RegionGeo-Polygon and RegionGeo-SC models, have obtained non-trivial improvement to the CombPaths, Riverbed and RSF segmentation models on  most of the test images. The quantitative evaluation results exhibit that the integration of the region-based appearance models and the edge-based features indeed encourages more satisfactory segmentations, especially when the magnitude values of the image gradients are not salient enough along the target boundaries.  The RegionGeo-SC model slightly outperforms, due to the initial contours from the simple closed contour construction method being more relevant than those from the polygon construction method. One can point that  all the compared models have achieved very high Jaccard scores in the FLOWER and STONE2 images, as shown in the second and fourth rows of Table~\ref{table_NatureImages},  since the target boundaries in these images are  well defined by the edge saliency features. Hence the computed optimal paths can appropriately extract the target boundaries. We further observe that the Riverbed model and the RSF segmentation model obtain low Jaccard scores in the OCELOT and FLOWER2  images, whose edge saliency features are not even along the target boundaries. In addition, the absence of the simplicity constraint for the segmentation contours from the RSF segmentation model may greatly increase the risk of the self-tangency problem.

Figs.~\ref{fig_boxPlot_Synthetic} and~\ref{fig_boxPlot_Clutter} visualize the statistics of the Jaccard scores of different models,  evaluated on the images shown in Figs.~\ref{fig:unreliableEdges} and~\ref{fig_ClutterDemos}, respectively. We again randomly sample $30$ groups of landmark points per test image, and then collect all the Jaccard scores of all the compared models. Among these tested models, the proposed RegionGeo-Polygon and RegionGeo-SC models indeed obtain the best performance in both accuracy and robustness.

\begin{figure}[t]
\centering
\includegraphics[width=0.95\linewidth]{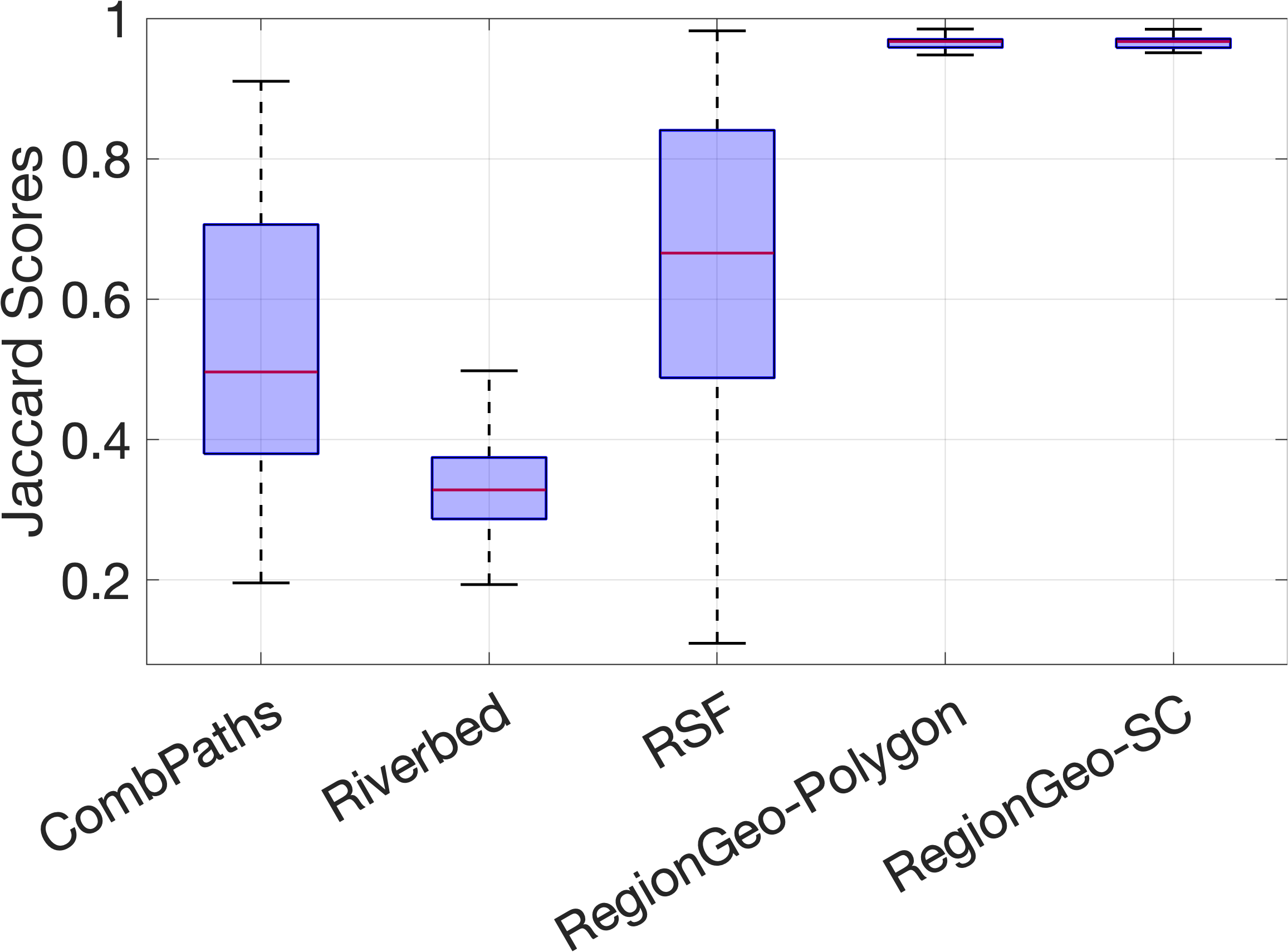}
\caption{Box plot for the Jaccard scores evaluated in the synthetic images shown in Fig.~\ref{fig:unreliableEdges} over $30$ runs per image.}
\label{fig_boxPlot_Synthetic}
\end{figure}

\begin{figure}[t]
\centering
\includegraphics[width=0.95\linewidth]{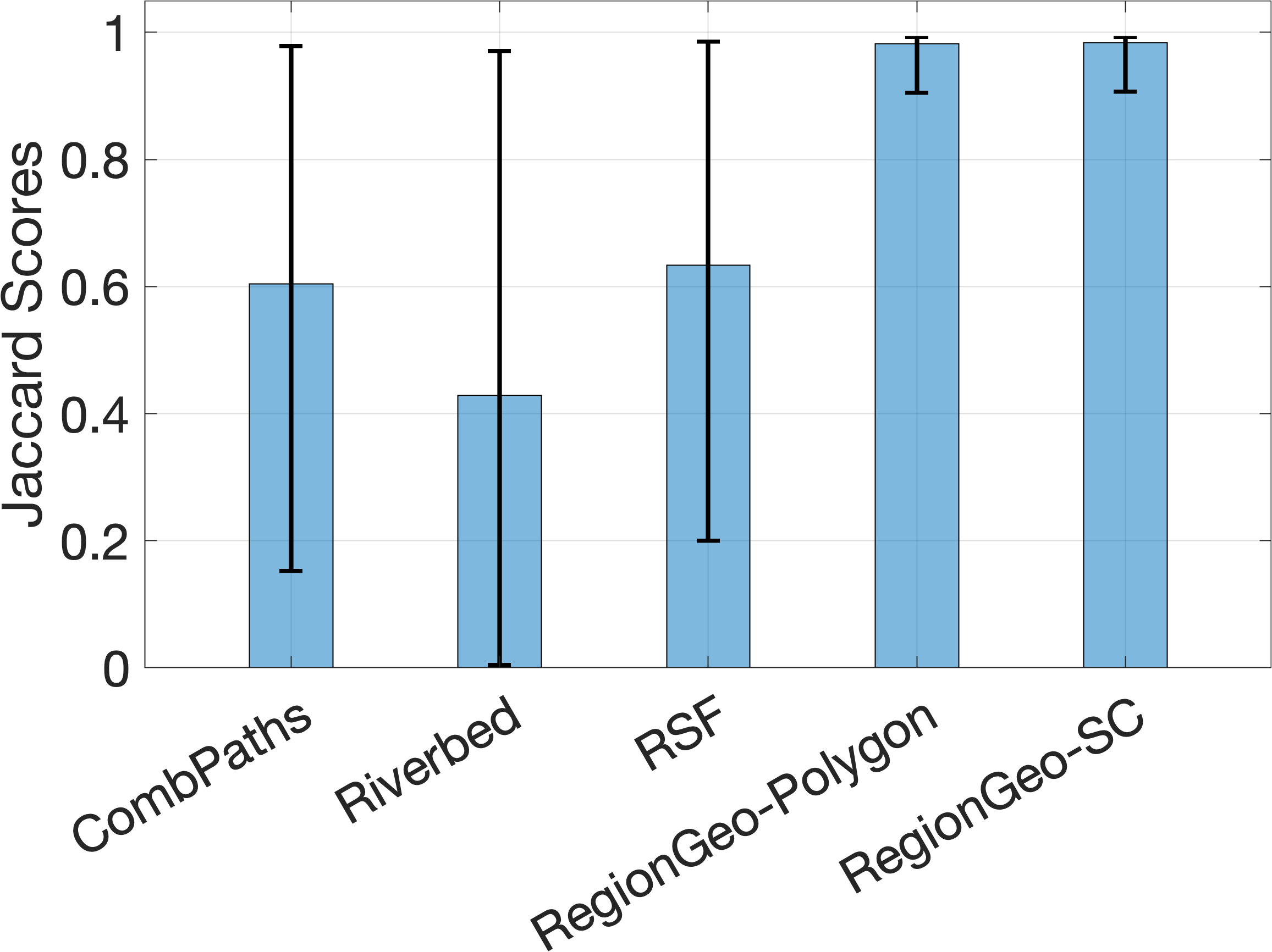}
\caption{The visualization for the statistics (i.e.\ the average, maximum and minimum values) of the Jaccard scores evaluated in the blurred images shown in Fig.~\ref{fig_ClutterDemos}.}
\label{fig_boxPlot_Clutter}
\end{figure}

\subsubsection{Computation Time}
The computation complexity in our model mainly involve four parts: the construction of the initial contour,  the estimation of the region-based energy gradients, the construction of tubular domains, and the extraction of geodesic paths. The last two parts can be efficiently solved by the fast marching methods whose computation complexity has been analyzed in Section~\ref{subsec_FMSolver}. Here we report here the computation time for the image (with $450\times 600$ grid points) in row $4$ of Fig.~\ref{fig_ClutterDemos}. We apply the polygon construction method for the initial contour which cost $0.18$ seconds with three landmark points. The contour evolution process requires around $3$ seconds to stabilize. This test was conducted on an Intel Core i9 $3.6$GHz architecture with $96$GB RAM, using a C++ implementation for the FM-ASR method and a MATLAB implementation for the computation of regional energy gradients of the Bhattacharyya coefficient-based term.

\section{Conclusion}
\label{sec_conclusion}
In this paper, we propose a new region-based Randers geodesic model based on the eikonal PDE framework, which is an efficient solution to the active contour problem involving a regional appearance term. The core contributions of this paper lie at (i) the theoretical convergence analysis of the proposed geodesic model, (ii) the construction of  Randers geodesic metrics, by which the region-based appearance term can be transformed to a weighted curve length in terms of a tubular domain, and (iii) a practical implementation of the introduced Randers geodesic model enhanced by various heuristics for practical  interactive image segmentation, in conjunction with a collection of landmark points placed at the target boundary.      
 The comparison results against state-of-the-art optimal paths-based segmentation models illustrate the  advantages of taking the region-based appearance models for the computation of minimal geodesic paths and for image segmentation.
 
 Future works will be devoted to (i) extending the proposed region-based Randers geodesic model to 3D volume segmentation in conjunction with the surface reconstruction method~\citep{ardon2006fast}, and (ii) integrating the proposed geodesic model with geometric shape priors for solving various medical image segmentation problems. 
 
\section*{Acknowledgment}
The authors would like to thank all the anonymous reviewers for their invaluable  suggestions to improve this manuscript. This work is in part supported by the Shandong Provincial Natural Science Foundation (NO.~ZR2022YQ64), a project in QLUT (NO.~2022PY11), the National Natural Science Foundation of China (NOs.~62171125, 62172243), the Intergovernmental Cooperation Project of the National Key Research and Development Program of China (NO.~2022YFE0116700), and by the French government under management of Agence Nationale de la Recherche as part of the ``Investissements d'avenir'' program, reference ANR-19-P3IA-0001 (PRAIRIE~3IA~Institute).

\appendices

\section{Proof of \cref{th:geodesic_tubular}}
\label{Appendix:geodesic_tubular}

This appendix is devoted to the proof of \cref{th:geodesic_tubular}. We first estimate in \cref{prop:bounded_curvature} the curvature of a minimal path, which is bounded (but no necessarily continuous) under our assumptions. The Lagrangian formalism is used to describe the geometry, rather than the metric formalism \eqref{eq_RandersLength}, in order to benefit from the strict convexity property \eqref{eq:Lag_UCvx}. 
In the application of this result, at the end of this section and following common practice, the Lagrangian is eventually defined as half the squared metric.  
We then show in \cref{prop:curve_in_tube} that a path with bounded curvature lying within a thin band can be parametrized as desired.

\begin{proposition}
\labelx{prop:bounded_curvature}
Let $\Omega \subset \bR^d$ be an open convex domain, equipped with a Lagrangian $\cL : \overline \Omega \times \bR^d \to [0,\infty[$, with a continuous derivative $\partial_2 \cL$ w.r.t.\ the second variable, and such that: 
\begin{align}
	\label{eq:Lag_Lip}
	&\cL(\fx,\dfx) \leq (1+C_0\|\fx-\fy\|) \cL(\fy,\dfx),\\
	\label{eq:DLag_Lip}
	&\|\partial_2 \cL(\fx,\dfx) - \partial_2 \cL(\fy,\dfx)\| \leq C_1 \|\fx-\fy\| \|\dfx\|, \\
	\label{eq:DLag_bounded}
	&\| \partial_2 \cL(\fx,\dfx)\| \leq C_2 \|\dfx\|,\\
	\label{eq:Lag_UCvx}
	&\cL(\fx,\dfx+\dfy) \geq \cL(\fx,\dfx) + \<\partial_2 \cL(\fx,\dfx), \dfy\> + \tfrac 1 2 c_0 \|\dfy\|^2, \\
	\label{eq:Lag_Homog}
	&\cL(\fx,\lambda \dfx) = \lambda^2 \cL(\fx,\dfx).
\end{align}
for all $\fx,\fy \in \overline \Omega$, $\dfx,\dfy\in \bR^d$, $\lambda\geq 0$, where $C_0,C_1,C_2,c_0>0$ are constants. 
Define  
\begin{align}
\label{eq:action}
	\cS(\gamma) &:= \int_0^1 \cL(\gamma(t), \gamma'(t)) \diff t,\\
\label{eq:action_min}
	\cS_{\min}(\fx_0,\fx_1) &:= 
	\min\big\{ \cS(\gamma) 
	\mid 
	\gamma(0)=\fx_0, \gamma(1)=\fx_1
	\}
\end{align}
where $\gamma \in H^1([0,1],\overline \Omega)$ and $\fx_0,\fx_1 \in \overline\Omega$.
Then this minimum is attained, and any minimizer $\gamma$ satisfies
\begin{align}
\label{eq:action_x2}
c_0 \|\fx_0-\fx_1\|^2 &\leq \cS_{\min}(\fx_0,\fx_1) \leq \tfrac 1 2 C_2 \|\fx_0-\fx_1\|^2\\
\label{eq:cst_speed}
	\cL(\gamma(t),\gamma'(t)) &= \cS_{\min}(\fx_0,\fx_1),\\
\label{eq:speed_bounded}
	c_1 \|\fx_0-\fx_1\|  &\leq \|\gamma'(t)\|   \leq c_1^{-1} \|\fx_0-\fx_1\|, \\
\label{eq:curvature_bounded}
	\Lip(\gamma')  & \leq K \|\fx_0-\fx_1\|^2,
\end{align}
for all $t \in [0,1]$, 
where the constants 
$K,c_1>0$ depend on $c_0,C_0,C_1,C_2$.
In particular, the curvature $\kappa := \det(\gamma',\gamma'')/\|\gamma'\|^3$ satisfies $\|\kappa\|_\infty \leq K/c_1^2$. 
\end{proposition}

\begin{proof}
We first observe that, for all $\fx \in \overline \Omega$ and $\dfx \in \bR^2$,
\begin{equation}
\label{eq:Lag_UL}
	c_0 \|\dfx\|^2 \leq \cL(\fx,\dfx) = \tfrac 1 2 \<\partial_2 \cL(\fx,\dfx),\dfx\> \leq \tfrac 1 2 C_2 \|\dfx\|^2,
\end{equation}
using (i) the uniform convexity~\eqref{eq:Lag_UCvx} with $\dfx=0$ and noting that $\partial_2 \cL(\fx,\bzero)=\bzero$ by~\cref{eq:DLag_bounded}, and (ii) Euler's identity for $2$-homogeneous functions. 
It follows that $c_0\|\gamma'\|^2_2 \leq \cS(\gamma) \leq \tfrac 1 2 C_2 \|\gamma'\|_2^2$
for any $\gamma \in H^1([0,1],\overline \Omega)$. This implies \eqref{eq:action_x2} using (i) $\|\gamma(0)-\gamma(1)\|^2\leq \|\gamma'\|_2^2$ by e.g.\ Jensen's inequality for the left estimate, and (ii) choosing the straight line $\gamma_0(t) := (1-t) \fx_0 + t\fx_1$ for the right estimate, which is a valid path since $\overline \Omega$ is convex.

From the compact embedding $H^1([0,1]) \subset C^0([0,1])$ \citep[Theorem 1.6 due to Sobolev and Rellich]{dacorogna2007direct}, and the weak compactness of the unit ball of $H^1$ (Banach-Alaoglu theorem), we obtain that there is a minimizing sequence $(\gamma_n)_{n \geq 0}$ for~\cref{eq:action_min} which is converging, both weakly and uniformly, to a limit $\gamma_* \in H^1([0,1],\overline \Omega)$. Define
\begin{equation}
	\cS(\gamma,\sigma) := \int_0^1 \<\gamma'(t),\sigma(t)\>  - \cH(\gamma(t),\sigma(t)) \diff t,
\end{equation}
where $\cH(\fx,\cfx):=\sup\{\<\cfx,\dfx\> - \cL(\fx,\dfx)\mid \dfx \in \bR^d\}$ denotes the Legendre-Fenchel dual of $\cL$, referred to as the Hamiltonian, which is continuous. (Indeed, the $\cH$ inherits the Lipchitz-like regularity~\eqref{eq:Lag_Lip} w.r.t.\ $\fx$, and by~\cref{eq:Lag_UCvx} it has a $c_0$-Lipschitz gradient w.r.t.\ $\dfx$.)
By construction $\cS(\gamma) = \sup\{\cS(\gamma,\sigma)\mid \sigma \in C^0([0,1],\bR^d)\}$, and by weak and uniform convergence $\cS(\gamma_n,\sigma)\to \cS(\gamma_*,\sigma)$ for any continuous $\sigma$, hence $\cS(\gamma_*) \leq \lim \cS(\gamma_n) = \cS_{\min}(\fx_0,\fx_1)$ and therefore $\gamma_*$ minimizes \eqref{eq:action_min}.
		
We now fix a minimizer $\gamma$, and observe that by the $2$-homogeneity property \eqref{eq:Lag_Homog} it must be parametrized in such way that the Lagrangian $\cL(\gamma,\gamma')$ is constant along the trajectory, establishing \eqref{eq:cst_speed} for a.e.\ $t\in [0,1]$. Thus~\cref{eq:speed_bounded} follows,
using Eqs.~\eqref{eq:action_x2} and~\eqref{eq:Lag_UL}, with $c_1 := \sqrt{2c_0/C_2}$.
	
Let $\eta := \gamma-\gamma_0$, and note that $\eta(0)=\eta(1)=0$ and $\gamma_0'\equiv \fx_1-\fx_0$ on $[0,1]$. Then 
\begin{align*}
	\int_0^1 \cL(\gamma_0, \gamma_0')\diff t 
	&\geq \int_0^1 \cL(\gamma,\gamma')\diff t \\ &
	= \int_0^1 \cL(\gamma_0 + \eta, \gamma_0' + \eta') \diff t\\
	&\geq (1+C_0\|\eta\|_\infty)^{-1}\int_0^1 \cL(\gamma_0,\gamma_0' + \eta') \diff t,
\end{align*}
using successively (i) the optimality of $\gamma$, (ii) the definitions of $\gamma_0$ and $\eta$, and (iii) the Lipschitz-like property \eqref{eq:Lag_Lip}. Denoting $v(t) := \partial_2 \cL(\gamma_0(t),\fx_1-\fx_0)$ and $v_* := v(1/2)$ we obtain
\begin{align*}
	\int_0^1 \cL(\gamma_0,\gamma_0' + \eta') - \int_0^1 \cL(\gamma_0,\gamma_0') 
	\geq \int_0^1 \<v(t),\eta'\> + c|\eta'|^2 \\
	\geq - \|v-v_*\|_\infty \|\eta'\|_1 + c_0 \|\eta'\|_2^2.
\end{align*}
using successively (i) the uniform convexity \eqref{eq:Lag_UCvx} of $\cL$, and (ii) the fact that $\int_0^1 \<v_*,\eta'\>=\<v_*,\eta(1)-\eta(0)\>=0$. Thus
\begin{align*}
	c_0\|\eta'\|_2^2 &\leq \|v-v_*\|_\infty \|\eta'\|_1 + C_0 \|\eta\|_\infty \int_0^1 \cL(\gamma_0,\gamma_0')\diff t\\
	&\leq \tfrac 1 2 (C_1 + C_0 C_2) \|\fx_0-\fx_1\|^2 \|\eta'\|_2
\end{align*}
using (i) the two previous estimates, and (ii) the H\"{o}lder semi-norm inequality $\|\eta\|_\infty \leq \|\eta'\|_1 \leq \|\eta'\|_2$ on the domain $[0,1]$,  the estimates $\cL(\gamma_0,\gamma_0') \leq \frac 1 2 C_2 \|\fx_1-\fx_0\|^2$ by Eq.~(\ref{eq:Lag_UL}, right), and $\|v(t)-v_*\| \leq \frac 1 2 C_1 \|\fx_1-\fx_0\|^2$ by~\cref{eq:DLag_Lip}.
It follows that $\|\eta'\|_2 \leq C_* \|\fx_0-\fx_1\|^2$ with $C_3 := (C_1 + C_0 C_2)/(2 c_0)$. 

Denote by $m_{[s,t]}$ the averaging operator over the interval $[s,t]$, and observe that $\eta' = \gamma'-m_{[0,1]}(\gamma')$ hence the previous estimate amounts to
$\|\gamma'-m_{[0,1]}(\gamma')\|_{L^2([0,1])} \leq C_3 \|\gamma(0)-\gamma(1)\|^2$. Therefore, for any $0 \leq s < t \leq 1$,
\begin{equation*}
	\|\gamma'-m_{[s,t]}(\gamma')\|_{L^2([s,t])} \leq C_3 \frac{\|\gamma(s)-\gamma(t)\|^2}{\sqrt{t-s}}  \leq C_4 |t-s|^\frac 3 2
\end{equation*}
using (i) the previous estimate applied to $\tilde \gamma(u) := \gamma( (1-u) s + u t )$, (ii) the Lipschitz bound (\ref{eq:speed_bounded}, right) with $C_4 := C_3 \|\fx_0-\fx_1\|^2/c_1^2$. By \cite[Theorem 2.3, using $\alpha=1$]{garcia2005lipschitz}, the mapping $\gamma'$ has Lipschitz regularity, with a Lipschitz constant equivalent to $C_4$, which proves \eqref{eq:curvature_bounded}. The curvature bound follows from its definition \eqref{eqdef:curvature}, which concludes the proof.
\qed
\end{proof}

\begin{proposition}
\labelx{prop:curve_in_tube}
Let $\Omega := \bR \times ]-U,U[$ and let $\gamma \in \Lip( [0,T], \overline \Omega)$ whose absolute curvature is bounded by $\kappa_\rmax$, with $0 < U < (\kappa_\rmax \sqrt 2)^{-1}$ and $\kappa_\rmax \geq \pi$. Denote $\gamma= (\alpha,\beta)$ and assume that that $\alpha(0) = 0$ and $\alpha(T)=1$. Then $\gamma$ admits a reparametrization of the form $\tilde \gamma : t\in [0,1] \mapsto (t,\mu(t))$, with $\Lip(\mu) \leq \tan \theta_\rmax$ where $\theta_\rmax := 2\arcsin(\kappa_\rmax U)$.
\end{proposition}

\begin{proof}
We may assume w.l.o.g.\ that $\gamma$ is parametrized at unit Euclidean speed, thus $T \geq \|\gamma(T) - \gamma(0)\| \geq |\alpha(T)-\alpha(0)| = 1$, and in addition there exists $\theta : [0,T] \to \bR/2 \pi \bZ$ such that $(\alpha'(t),\beta'(t)) = (\cos\theta(t), \sin \theta(t))$ for all $t \in [0,T]$.
By assumption $\Lip(\theta) = \|\theta'\|_\infty \leq \kappa_\rmax$.

Now let $t \in [0,T]$. Assuming that $t \leq T/2$ and $\theta_* := \theta(t) \in [0,\pi/2]$, we obtain
\begin{equation*}
2 U \geq \beta(t+\tfrac{\theta_*}{\kappa_\rmax}) - \beta(t) \geq \int_0^{\tfrac{\theta_*}{\kappa_\rmax}} \sin (\theta_* - s) \diff s = \frac{1 - \cos(\theta_*)} {\kappa_\rmax}.
\end{equation*}
Note that $\theta_*/\kappa_\rmax \leq \pi/(2\kappa_\rmax) \leq 1/2 \leq T/2$, hence $t+\tfrac{\theta_*}{\kappa_\rmax}\in [0,T]$.
It follows that $0 \leq \theta_* \leq \theta_\rmax$, and by assumption $\theta_\rmax < 2 \arcsin(1/\sqrt 2) = \pi/2$. 
In general, distinguishing cases depending on the signs of $t-T/2$ and $\theta(t)$, we obtain $\theta(t) \in [-\theta_\rmax,\theta_\rmax] \cup [\pi-\theta_\rmax,\pi+\theta_\rmax]$. 
Observe that these two connected components of $\bR/(2\pi \bZ)$ are disjoint, that $\theta$ is Lipschitz hence continuous, and that $\int_0^T \cos \theta(t) \diff t = \alpha(T)-\alpha(0) = 1 >0$. Therefore $\theta(t) \in [-\theta_\rmax,\theta_\rmax]$ for all $t \in [0,T]$, and the result follows. 
\qed
\end{proof}

\noindent\emph{Conclusion of the proof of~\cref{th:geodesic_tubular}.}
We let $\Omega_0 := \bR \times ]-U,U[$, and note that $\Phi_0 : (t,u) \in \overline \Omega_0 \mapsto \Phi(t \mod 1, u) \in \rT$ is a local $C^2$ diffeomorphism.
Defining 
$\cF_0(\fx,\dfx) := \cF(\Phi_0(\fx), \diff \Phi_0(\fx) \cdot \dfx)$, we obtain a Randers metric on $\Omega_0$, often referred to as the pull-back of the metric $\cF$ by $\Phi_0$, with Lipschitz coefficients (transformed as follows: $\cM_0(\fx) = \diff \Phi_0(\fx)^\top \cM(\Phi_0(\fx)) \diff \Phi_0(\fx)$ and $\omega_0(\fx) = \diff \Phi_0(\fx)^\top \omega(\Phi_0(\fx))$).

Define the Lagrangian $\cL(\fx,\dfx) := \frac 1 2 \cF_0(\fx,\dfx)^2$ on $\Omega_0$. One easily checks that \eqref{eq:Lag_Lip} to \eqref{eq:Lag_Homog} hold on $\Omega_0$. In particular, the uniform convexity \eqref{eq:Lag_UCvx} follows from the fact that the ball $\{\dfx \in \bR^2 \mid \cF_0(\fx,\dfx)\leq 1\}$ associated to the Randers metric at a given point $\fx_0 \in \overline \Omega_0$ is an ellipsoid containing the origin in its iterior (but possibly not centered on the origin, contrary to the Riemannian case), hence is a uniformly convex set. In addition, paths minimizing the Finslerian length $\Length_{\cF_0}(\gamma)$ or the action \eqref{eq:action} between two points are identical, up to a time reparametrization enforcing \eqref{eq:cst_speed}.

Denote by $\fx = \Phi_0(t_\fx,s_\fx)$ the endpoint in \cref{th:geodesic_tubular}, and let $\fx_0 := (t_\fx,\mu_\fx),\ \fx_1 := (t_\fx+1,\mu_\fx)\in \overline \Omega_0$.
Any curve $\cC\in \Gamma_1$ with $\cC(0) = \fx$ can be expressed as a composition $\cC = \Phi_0 \circ \gamma$ where $\gamma \in \Lip([0,1],\overline \Omega_0)$ is such that $\gamma (0) = \fx_0$ and $\gamma (1) = \fx_1$. Furthermore $\Length_\cF(\cC) = \Length_{\cF_0}(\gamma)$ by definition of the pullback metric.

Denote by $\gamma$ a minimal geodesic path for the Lagrangian $\cL$ from $\fx_0$ to $\fx_1$ within $ \overline \Omega_0$. By~\cref{prop:bounded_curvature}, the curvature of $\gamma$ is bounded by a constant, independent of the tube width $0 < U \leq \lfs(\rC)/3$. 
By~\cref{prop:curve_in_tube}, up to reducing $U$, one has $\gamma(t) = (t_\fx+t,\mu(t))$ with $\Lip(\mu) \leq 1$. The curve $\cC(t) := \Phi_0(\gamma(t)) = \rC(t_\fx+t) + \mu(t) \rN(t_\fx+t)$ is the solution to~\cref{eq:argmin_path}, and the proof is complete.

\section{Computation of Edge-based Features}
\label{Appendix_EdgeFeatures}

\noindent\emph{Edge Saliency Features}.
Let $\mathbf{I}=(\cI_1,\cI_2,\cI_3):\overline\Omega\to\bR^3$ be a vector-valued image in the RGB color space. The mollified Jacobian matrix $\mathbf{J}_\sigma(\cdot)$ of the image $\mathbf{I}$, with shape $2\times3$, is
\begin{equation*}
\mathbf{J}_\sigma(\fx):=
\begin{pmatrix}
\partial_x \cK_\sigma\ast	\cI_1,\,&\partial_x \cK_\sigma\ast	\cI_2,\,&\partial_x \cK_\sigma\ast	\cI_3\\
\partial_y \cK_\sigma\ast	\cI_1,\,&\partial_y \cK_\sigma\ast	\cI_2,\,&\partial_y \cK_\sigma\ast	\cI_3
\end{pmatrix}(\fx),
\end{equation*}
where $\cK_\sigma$ is a Gaussian kernel with standard deviation $\sigma$ and $\partial_x \cK_\sigma$ (resp. $\partial_y \cK_\sigma$) denotes the first-order derivative of the kernel $\cK_\sigma$ along the axis $x$ (resp. the axis $y$).

The edge saliency features can be described  through a scalar-valued function $g:\overline\Omega\to\bR^+_0$ which assigns to each point $\fx$ the Frobenius norm of the matrix $\mathbf{J}_\sigma(\fx)$
\begin{equation}
\label{eq_imageGradsColor}
g(\fx):=\sqrt{\sum_{i=1}^3\Big( (\partial_x \cK_\sigma\ast \cI_i)(\fx)^2+(\partial_y \cK_\sigma\ast \cI_i)(\fx)^2\Big)}.
\end{equation}

\noindent\emph{Edge Anisotropy Feature}. As introduced in~\citep{sochen1998general}, the edge anisotropy feature at each point $\fx$ is carried out by a matrix $\mathcal{Q}(\fx)$ defined as 
\begin{equation}
\label{eqdef:cQ}
\cQ(\fx):=\mathbf{J}_\sigma(\fx)\mathbf{J}_\sigma(\fx)^\top+\Id,
\end{equation}
where $\Id$ denotes the identity matrix of shape $d \times d$ ($d$ is the dimension of the image domain). For each point $\fx\in\overline\Omega$, the image gradient vector $\kg(\fx)$ is derived as the eigenvector of  $\mathcal{Q}(\fx)$ which corresponds to its largest eigenvalue.

We finally discuss the special case of a gray level image $\cI:\overline\Omega\to\bR$. Then the function $g$ is obtained as 
\begin{equation}
\label{eq_imageGradsGray}
g=\|\nabla{\cK_{\sigma}}\ast{\cI}\|=\sqrt{(\partial_x \cK_\sigma\ast \cI)^2+(\partial_y \cK_\sigma\ast \cI)^2},
\end{equation}	
where $\nabla{\cK_{\sigma}}$ stands for the standard Euclidean gradient of the Gaussian kernel $\cK_\sigma$. 

At each point $\fx$, the edge anisotropy feature matrix $\mathcal{Q}(\fx):=J_\sigma(\fx)J_\sigma(\fx)^\top+\Id$ is expressed in terms of the vector field $J_\sigma(\fx):=(\nabla{\cK_{\sigma}}\ast{\cI})(\fx)$. Then the gradient vector field $\kg(\fx)$ of a gray level image is taken as the eigenvector of $\mathcal{Q}(\fx)$ corresponding to the largest eigenvalue. For the points $\fx$ such that $\|J_\sigma(\fx)\|\neq 0$, one has $\kg(\fx) = \pm J_\sigma(\fx)/\|J_\sigma(\fx)\|$.

\section{RSF Geodesic Segmentation Model}
\label{appendix_RSF}
The RSF geodesic model proposed by~\citep{duits2018optimal} is a curvature-penalized model, that is defined over an orientation-lifted space. Let $\tilde\bS^1:=\bR/(2\pi\bZ)$ denote the angular space. A point $\fx_{\rm L}=(\fx,\theta)\in\bM:=\overline\Omega\times\tilde\bS^1$ consists of a position $\fx\in\overline\Omega$ and of an angular coordinate  $\theta\in\tilde\bS^1$. In this way, a smooth curve $\gamma:[0,1]\to\overline\Omega$  can be lifted to the space $\bM$, i.e.\ $\gamma_{\rm L}(u)=(\gamma(u),\varpi(u))$, where $\varpi:[0,1]\to\tilde\bS^1$ is a parametric function such that $\gamma^\prime(u)=\dft_{\varpi(u)}\|\gamma^\prime(u)\|$ for any $u\in[0,1]$ and $\dft_{\varpi(u)}:=(\cos\varpi(u),\sin\varpi(u))$ is the unit vector associated to the angle $\varpi(u)\in\tilde\bS^1$. The curve $\gamma$ is referred to as the physical projection of $\gamma_{\rm L}$.

The RSF model takes into account an orientation-lifted geodesic metric $\cF_{\rm RSF}:\mathbb{M}\times\bR^3\to[0,\infty]$, which reads
\begin{equation*}
\cF_{\rm RSF}(\fx_{\rm L},\dot{\fx}_{\rm L})=
\begin{cases}
\sqrt{\|\dfx\|^2+(\nu\,\dot\theta)^2},&\text{if~}\dfx=\dft_\theta\|\dfx\|,\\
\infty,&\text{otherwise},
\end{cases}	
\end{equation*}
where $\dot{\fx}_{\rm L}:=(\dfx,\dot{\theta})\in\bR^3$ is a tangent vector to $\bM$ at  point $\fx_{\rm L}\in\bM$. In this case, the length of a smooth curve $\gamma_{\rm L}$ can be formulated as
\begin{equation}
\int_0^1\cP_{\rm RSF}(\gamma(u),\varpi(u))\cF_{\rm RSF}	(\gamma_{\rm L}(u),\gamma^\prime_{\rm L}(u))du,
\end{equation}
where $\cP_{\rm RSF}:\bM\to\bR^+$ is a data-driven cost function. The value of $\cP_{\rm RSF}(\fx,\theta)$  is low if $\fx$ is close to an edge and the vector $\dft_\theta$ is tangent to that edge. As implemented in~\citep{chen2017global}, the computation of $\cP_{\rm RSF}$ is carried out by the Canny-like steerable filter~\citep{jacob2004design} for the following experiments.
\vspace{-0.5\baselineskip}\\
 
\noindent\emph{RSF Segmentation Method.}
We leverage the RSF geodesic paths for interactive image segmentation, by connecting  a set of $m$ landmark points $\fp_k$  with known order as input. This is implemented by exploiting a slight variant of the  closed contour detection scheme introduced in~\citep{chen2017global}.  Firstly, each landmark point $\fp_k\in\overline\Omega$ is lifted to the orientation-lifted space $\bM$ such that $\fp_k$ corresponds to two points $(\fp_k,\theta_k),\,(\fp_k,\theta_k+\pi)\in\bM$.  The angle $\theta_k\in\tilde\bS^1$ characterizes the orientation that an image edge should have at the point $\fp_k$, which can be estimated by solving 
\begin{equation*}
\theta_k:=\min_{\theta\in[0,\pi)}\cP_{\rm RSF}(\fp_k,\theta).
\end{equation*}
At the initialization stage (i.e. $k=1$),  we can get four RSF geodesic paths by choosing one point from $\{(\fp_1,\theta_1), (\fp_1,\theta_1+\pi)\}$ as source point and choosing one point from $\{(\fp_{2},\theta_{2}),(\fp_{2},\theta_{2}+\pi)\}$ as target point, respectively. Among them, we select the geodesic path with the smallest geodesic distance, whose source and target points are respectively denoted by $(\fp_1,\theta_1^*)$ and $(\fp_{2},\theta_{2}^*)$. This can be done by performing the Hamiltonian FMM~\citep{mirebeau2018fast} just once, as introduced in~\citep{chen2017global}. When $k=2$, we can obtain two curvature-penalized geodesic paths joining the point $(\fp_{2},\theta_{2}^*)$ to the  orientation-lifted points $(\fp_{3},\theta_{3})$ and $(\fp_{3},\theta_{3}+\pi)$, respectively. From them, we again choose the minimal one in terms of  geodesic distance associated to the RSF metric. In case $k\geq3$, we iterate the similar step with $k=2$ until a geodesic path is tracked which takes the point $(\fp_{1},\theta_{1}^*)$ as target point. In this way, we obtain $m$ orientation-lifted geodesic paths, whose physical projection curves  form  a closed contour passing through all the landmark points $\fp_k$ with $1\leq k\leq m$.

\bibliographystyle{spbasic}      
\bibliography{eikonalActiveContours.bib}   

\end{document}